\numberwithin{equation}{section}
\newcommand{\bp}{\mathbf{p}}
\newcommand{\be}{\mathbf{e}}
\newcommand{\bb}{\mathbf{b}}
\newcommand{\RR}{\mathds{R}}
\newcommand{\bx}{\mathbf{x}}
\newcommand{\bq}{\mathbf{q}}
\newcommand{\bn}{\mathbf{n}}
\newcommand{\bG}{\mathbf{G}}
\newcommand{\bM}{\mathbf{M}}
\newcommand{\bD}{\mathbf{D}}
\newcommand{\bH}{\mathbf{H}}
\newcommand{\bB}{\mathbf{B}}
\newcommand{\bF}{\mathbf{F}}
\newcommand{\cS}{\mathcal{S}}
\newcommand{\cI}{\mathcal{I}}
\newcommand{\cF}{\mathcal{F}}
\newcommand{\cH}{\mathcal{H}}
\newcommand{\cL}{\mathcal{L}}
\newcommand{\real}{\mathrm{Real}}
\newcommand{\diver}{\mathrm{div}}
\newcommand{\vshrink}{\vspace{0cm}}
\DeclareMathOperator*{\argmin}{arg\,min}
\newtheorem{theorem}{Theorem}[section]
\newtheorem{remark}{Remark}[section]
\begin{document}
\title{An Operator-Splitting Method for the Gaussian Curvature Regularization Model with Applications to Surface Smoothing and Imaging}
\date{\emph{In memory of Roland Glowinski--a dear friend, mentor, colleague and great leader.}}
\author{Hao Liu\thanks{Department of Mathematics, Hong Kong Baptist University, Kowloon Tong, Hong Kong. Email: haoliu@hkbu.edu.hk.}
	, Xue-Cheng Tai\thanks{Department of Mathematics, Hong Kong Baptist University, Kowloon Tong, Hong Kong. Email: xuechengtai@hkbu.edu.hk.}
	, Roland Glowinski\thanks{Department of Mathematics, University of Houston, Honston, TX 77204, USA, and Department of Mathematics, Hong Kong Baptist University, Kowloon Tong, Hong Kong. Email: roland@math.uh.edu.}
}
\maketitle

\begin{abstract}
	Gaussian curvature is an important geometric property of surfaces, which has been used broadly in mathematical modeling. Due to the full nonlinearity of the Gaussian curvature, efficient numerical methods for models based on it are uncommon in literature. In this article, we propose an operator-splitting method for a general Gaussian curvature model. In our method, we decouple the full nonlinearity of Gaussian curvature from differential operators by introducing two matrix- and vector-valued functions. The optimization problem is then converted into the search for the steady state solution of a time dependent PDE system. The above PDE system is well-suited to time discretization by operator splitting, the sub-problems encountered at each fractional step having either a closed form solution or being solvable by efficient algorithms. The proposed method is not sensitive to the choice of parameters, its efficiency and performances being demonstrated via systematic experiments on surface smoothing and image denoising.
\end{abstract}

\section{Introduction}
Gaussian curvature is a most important geometric property finding applications in many scientific areas, such as biology \cite{dharmavaram2019gaussian, bade2018gaussian, bozelli2018membrane}, physics \cite{germani2020nonlinear}, graph regularization \cite{elghawalby2009graph}, image processing and surface fairing \cite{zhao2006triangular}. For example: (i) Gaussian curvature is used in \cite{dharmavaram2019gaussian} to explain the budding process of enveloped viruses.
(ii) One studies in \cite{germani2020nonlinear} primordial black holes from Gaussian curvature perturbations. (iii) One uses in \cite{elghawalby2009graph} Gaussian curvature to regularize triangulated graphs. (iv) Gaussian curvature based models have also been proposed for image regularization \cite{gong2017curvature,zhong2020image} and surface fairing \cite{elsey2009analogue, brito2013fast}.

Consider a two-dimensional surface $S$. The Gaussian curvature of $S$ at $\bx$ is the product of its principal curvatures at $\bx$ \cite{do2016differential}. The Gaussian curvature is an intrinsic quantity since it does not depend on how $S$ is embedded in the space. Another desirable property of Gaussian curvature is its relation to the developability of $S$. A surface with zero Gaussian curvature can be isometrically mapped onto a plane without distortion; it is then called developable. Many simple surfaces are developable, such as cylinders and cones. The property of being an intrinsic quantity and the relation to developability of surfaces make Gaussian curvature a natural regularizer which has been used widely in mathematical modeling \cite{elsey2009analogue,brito2013fast,gong2017curvature}.

Despite of the rich applications of Gaussian curvature, only few publications dedicated to numerical methods for Gaussian curvature models can be found in the literature. Gaussian curvature driven flows for image smoothing are proposed in \cite{lee2005noise,lu2011high} in which the flow PDE is numerically solved by the forward Euler scheme. Another Gaussian curvature flow is proposed in \cite{firsov2006domain}. The evolution PDE is solved by a Crank-Nicholson scheme together with a domain decomposition technique. In \cite{gong2013local}, one proposes a weighted Gaussian curvature model in which the weight of the Gaussian curvature term is designed such that the model simplifies to a quadratic form leading to an explicit formula for the problem solution.
Recently, the authors of \cite{gong2021discrete} proposed a robust discrete scheme to compute the weighted Gaussian curvature. The authors of \cite{elsey2009analogue} propose a Gaussian curvature based model for surface fairing in which the surface is represented by a triangulation. The proposed model is discretized using a dedicated scheme introduced in \cite{banchoff1997tight} and optimized by gradient descent. The augmented Lagrangian method (ALM) has demonstrated superior performance in image processing \cite{yashtini2019efficient, chan2016plug, yuan2014spatially, lanza2016convex} and has been applied to optimize Gaussian curvature based models for image denoising \cite{brito2016image,ren2018optimization}, image registration \cite{begum2021two,ibrahim2014novel}, and image inpainting \cite{zhong2020image}. Although the ALM may converge very quickly, its performances are sensitive to the choice of the augmentation parameters. Indeed, finding the optimal parameters is tricky and maybe time consuming. A two-step method has been applied to optimize Gaussian curvature based models for image denoising \cite{brito2016image} and surface fairing \cite{brito2013fast}. In each step of the two-step method, the authors solve an optimization problem using gradient descent. As shown in \cite{brito2016image}, the two-stage method is less efficient than the ALM. Numerical algorithms for other curvature based models are developed for the total curvature \cite{zhong2020minimizing}, mean curvature \cite{zhu2012image,ma2018image} and Euler's elastica model \cite{yashtini2016fast,zhu2013image,duan2014two,el2010fast}.

Actually, the ALM is a special operator-splitting method which has a long history on providing efficient numerical solvers for various PDE related problems \cite{burger2016first, glowinski2019fast,glowinski2017splitting}. When solving a complicated time-dependent PDE by an operator-splitting method, one decomposes the PDE into several sub-PDEs such that each sub-PDE problem can be solved efficiently. For each time step, instead of solving the original PDE, one solves these sub-PDEs sequentially \cite{mclachlan2002splitting,liu2019finite}. When applying the above splitting strategy on optimization problems, one first derives the related Euler-Lagrange equation and associates with it an initial value problem such that the steady state solution of this initial value problem solves the optimization problem. Then the initial value problem is solved by the splitting strategy mentioned above. Such a kind of operator-splitting method  has been proposed for image regularization \cite{deng2019new,liu2020color}, and surface reconstruction \cite{he2020curvature,he2019fast}. The performances of these methods have a low sensitivity to the choice of parameters. In \cite{deng2019new}, the authors focus on Euler's elastica model for image denoising. The proposed operator-splitting method is more efficient than the ALM proposed in \cite{tai2011fast}. The authors of \cite{he2020curvature} proposed an operator-splitting method and an ALM to reconstruct a surface from a point cloud. The numerical experiments reported in \cite{he2020curvature} show that the operator-splitting method is more robust than the ALM.

In this article, we propose an operator-splitting method for a two-dimensional Gaussian curvature based model. We consider a general model consisting of a fidelity term and of two regularization terms: a Gaussian curvature term and a total variation one. To decouple the nonlinearities of the model, two matrix- and vector-valued functions are introduced with some constraints. To derive the optimality condition of the new problem, the constraints are enforced by utilizing indicator functionals. We then associate with the optimality conditions an initial-value problem, a time-dependent PDE system, which is time discretized  by an operator-splitting method. In our splitting scheme, each sub-problem has either a closed-form solution or can be solved efficiently. The efficiency of the proposed method is demonstrated on surface smoothing and image denoising examples. Our method can optimize the Gaussian curvature based model efficiently and is not sensitive to the choice of parameters.

The remaining of this paper is organized as follows: We introduce the Gaussian curvature based model in Section \ref{sec.model}. The proposed operator-splitting method and solvers for each sub-problem are presented in Section \ref{sec.splitting}. The proposed method is space discretized in Section \ref{sec.discretization}. In Section \ref{sec.experiments}, we present the results of numerical experiments, where the method we propose is applied to surface smoothing and image denoising problems. We conclude this article in Section \ref{sec.conclusion}.

\section{The Gaussian curvature model}\label{sec.model}
Let $\Omega\subset \RR^2$ be a rectangular domain and $f$ be a noisy function of two variables. The function $f$ is not a surface, but its graph is one. In image processing, one can take $f$ as a noisy image whose function values are pixel values.
We consider regularizing $f$ by the following Gaussian curvature-TV model
\begin{align}
	\min_{v\in \cH^2(\Omega)} \left[\int_{\Omega} \frac{|\det\bD^2 v|}{(1+|\nabla v|^2)^2}ds+\alpha\int_{\Omega}|\nabla v|d\bx +\frac{1}{2\beta}\int_{\Omega}|f-v|^2d\bx\right],
	\label{eq.model0}
\end{align}
where $\cH^2(\Omega)$ is the Sobolev space defined by
\begin{align*}
	&\cH^2(\Omega)=\left\{ v|v\in \cL^2(\Omega),\ \nabla v\in (\cL^2(\Omega))^2,\ \bD^2v\in(\cL^2(\Omega))^{2\times 2}\right\},\\
	&\mbox{with }\cL^2(\Omega)=\left\{v|\int_{\Omega} v^2 d\bx< +\infty\right\},
\end{align*}
the derivatives being in the sense of distributions.
Above, $d\bx=dx_1dx_2$, $s$ denotes the surface area, $\alpha\geq0, \beta>0$ are weighting parameters balancing these terms, and $\bD^2 v$ is the Hessian of $v$ given by
\begin{align}
	\bD^2 v=\begin{pmatrix}
		\frac{\partial^2 v}{\partial x_1^2} & \frac{\partial^2 v}{\partial x_1\partial x_2}\\
		\frac{\partial^2 v}{\partial x_1\partial x_2} & \frac{\partial^2 v}{\partial x_2^2}
	\end{pmatrix}.
\end{align}
In (\ref{eq.model0}), the first two terms are regularization terms: the Gaussian curvature of $v$ \cite{gauss2005general} and the total variation of $v$, respectively. Since the Gaussian curvature is an intrinsic geometric quantity of a surface, we integrate it with respect to the surface area. The third term in (\ref{eq.model0}) is a fidelity term.

Substituting $ds$ by $\sqrt{1+|\nabla v|^2}d\bx$, we get
\begin{align}
	\min_{v\in \cH^2(\Omega)} \left[\int_{\Omega} \frac{|\det\bD^2 v|}{(1+|\nabla v|^2)^{3/2}}d\bx+\alpha\int_{\Omega}|\nabla v|d\bx +\frac{1}{2\beta}\int_{\Omega}|f-v|^2d\bx\right].
	\label{eq.model}
\end{align}
The full nonlinearity and the non-smoothness of the  Gaussian curvature term make solving (\ref{eq.model}) a challenging problem. To overcome this difficulty, we introduce two matrix- and vector-valued functions to decouple the nonlinearities from the differential operators.

Let
$$
\bq=\begin{bmatrix}
	q_1 \\ q_2
\end{bmatrix}\in (\cH^1(\Omega))^2, \mbox{ and }
\bG=\begin{pmatrix}
	G_{11} & G_{12}\\ G_{21} & G_{22}
\end{pmatrix}\in (\cL^{2}(\Omega))^{2\times 2}.
$$
If $u$ is a solution to (\ref{eq.model}), then  $(u,\bp,\bH)$ solves
\begin{equation}
	\begin{cases}
		\min\limits_{\substack{v\in\cH^1(\Omega),\ \bq\in (\cH^1(\Omega))^2, \\ \bG\in (\cL^{2}(\Omega))^{2\times 2}}} \left[\displaystyle\int_{\Omega} \frac{|\det\bG|}{(1+|\bq|^2)^{3/2}}d\bx+\alpha\displaystyle\int_{\Omega}|\bq|d\bx +\frac{1}{2\beta}\displaystyle\int_{\Omega}|f-v|^2d\bx\right],\\
		\bq=\nabla v,\\
		\bG=\nabla \bq
	\end{cases}
	\label{eq.model.1}
\end{equation}
with $\bp=\nabla u,\bH=\nabla\bp$. Therefore (\ref{eq.model}) is converted to the constrained optimization problem (\ref{eq.model.1}). Next, we relax the constraints by utilizing indicator functionals.

Define the sets
\begin{align*}
	&\Sigma=\left\{\bq| \bq\in (\cL^2(\Omega))^2, \exists v\in \cH^1(\Omega) \mbox{ such that } \bq=\nabla v \mbox{ and } \int_{\Omega} (f-v)d\bx=0\right\},\\
	&S=\left\{ (\bq,\bG)| (\bq,\bG)\in \left(\cH^1(\Omega)\right)^2\times \left(\cL^2(\Omega)\right)^{2\times 2} \mbox{ such that } \bG=\nabla \bq\right\},
\end{align*}
and their indicator functionals
\begin{align*}
	I_{\Sigma}(\bq)=\begin{cases}
		0 & \mbox{ if } \bq\in \Sigma,\\
		+\infty & \mbox{ otherwise},
	\end{cases}\quad
	I_{S}(\bq,\bG)=\begin{cases}
		0 &\mbox{ if } (\bq,\bG)\in S,\\
		+\infty &\mbox{ otherwise}.
	\end{cases}
\end{align*}

We have that $(\bp,\bH)$ is the solution to
\begin{align}
	\min\limits_{\substack{\bq\in (\cH^1(\Omega))^2, \\ \bG\in (\cL^{2}(\Omega))^{2\times 2}}} \left[\displaystyle\int_{\Omega} \frac{|\det\bG|}{(1+|\bq|^2)^{3/2}}d\bx+\alpha\displaystyle\int_{\Omega}|\bq|d\bx +\frac{1}{2\beta}\displaystyle\int_{\Omega}|f-v_{\bq}|^2d\bx + I_{\Sigma}(\bq) +I_{S}(\bq,\bG)\right]
	\label{eq.model.2}
\end{align}
where  $v_{\bq}$ is the unique solution to
\begin{align}
	\begin{cases}
		\nabla^2v_{\bq}=\nabla\cdot \bq \mbox{ in } \Omega,\\
		(\nabla v_{\bq}-\bq)\cdot \bn=0 \mbox{ on } \partial\Omega,\\
		\displaystyle\int_{\Omega}(f-v_{\bq})d\bx=0.
	\end{cases}
	\label{eq.vq}
\end{align}
In (\ref{eq.vq}), {$\nabla^2$ represents the Laplace operator, }$\bn$ is the unit outward normal vector at the boundary. Compared to (\ref{eq.model.1}), in (\ref{eq.model.2}) one relaxes the constraints by introducing the two indicator functionals $I_{\Sigma}$ and $I_{S}$. Taking advantage of (\ref{eq.vq}), one can uniquely determine $v$ in (\ref{eq.model.1}) using $\bq$. Therefore the triple $(u,\bp,\bH)$ in (\ref{eq.model.1}) is reduced to $(\bp,\bH)$ in (\ref{eq.model.2}), which is an unconstrained optimization problem.
{
	\begin{remark}
		For any given $\bq$, problem (\ref{eq.vq}) is a standard Poisson--Neumann problem. On rectangular domains, there are many efficient solvers for problem (\ref{eq.vq}), such as sparse Cholesky, conjugate gradient,
		cyclic reduction, etc. In particular, when replacing in (\ref{eq.vq}) the Neumann boundary conditions by periodic ones, (\ref{eq.vq}) can be solved efficiently by FFT, see Section \ref{sec.periodic} and \ref{sec.p.4} for details.
	\end{remark}
}

\section{An operator splitting method to solve problem (\ref{eq.model.2})} \label{sec.splitting}
Operator-splitting methods solve complicated problems by solving a sequence of simpler sub-problems. They have been successfully used for the numerical solutions of PDEs \cite{glowinski2019finite}, inverse problems \cite{glowinski2015penalization}, fluid-structure interactions \cite{bukavc2013fluid} and problems in image processing \cite{deng2019new,liu2020color,glowinski2019fast}. We refer the readers to \cite{glowinski2017splitting} for a detailed discussion of operator-splitting methods.
In this section, we propose an operator splitting method to find the minimizers of (\ref{eq.model.2}).
\subsection{The optimality condition associated with (\ref{eq.model.2})}
The functional in (\ref{eq.model.2}) can be written as
\begin{align*}
	J_1+J_2+J_3
\end{align*}
where
\begin{align}
	&J_1(\bq,\bG)=\int_{\Omega} \frac{|\det\bG|}{(1+|\bq|^2)^{3/2}}d\bx,\\
	&J_2(\bq)=\alpha\int_{\Omega} |\bq|d\bx,\\
	&J_3(\bq)=\frac{1}{2\beta}\int_{\Omega} |f-v_{\bq}|^2d\bx.
\end{align}
The Euler-Lagrange equation for (\ref{eq.model.2}) reads as
\begin{align}
	\begin{cases}
		D_{\bq} J_1(\bp,\bH)+\partial_{\bq} J_2(\bp) + D_{\bq} J_3(\bp) + \partial_{\bq} I_S(\bp,\bH) +\partial_{\bq}I_{\Sigma}(\bp)\ni \mathbf{0},\\
		\partial_{\bG} J_1(\bp,\bH) + \partial_{\bG} I_S(\bp,\bH) \ni \mathbf{0},
	\end{cases}
	\label{eq.optimality}
\end{align}
where $D_{\bq}$ (resp. $\partial_{\bq}$) denotes the partial derivative (resp. subdifferential) of a differentiable functional (resp. a non-smooth functional) with respect to $\bq$. Operator $\partial_{\bG}$ is defined similarly.

With the optimality system (\ref{eq.optimality}), we associate the following initial value problem (dynamical flow):
\begin{align}
	\begin{cases}
		\gamma\frac{\partial\bp}{\partial t}+D_{\bq} J_1(\bp,\bH)+\partial_{\bq} J_2(\bp) + D_{\bq} J_3(\bp) + \partial_{\bq} I_S(\bp,\bH) +\partial_{\bq}I_{\Sigma}(\bp)\ni \mathbf{0},\\
		\frac{\partial\bH}{\partial t}+\partial_{\bG} J_1(\bp,\bH) + \partial_{\bG} I_S(\bp,\bH) \ni \mathbf{0},\\
		\bp(0)=\bp_0,\ \bH(0)=\bH_0
	\end{cases}
	\label{eq.dynamicflow}
\end{align}
with $\gamma>0$.
In (\ref{eq.dynamicflow}), $(\bp_0,\bH_0)$ is the initial condition of the flow. The choice of $(\bp_0,\bH_0)$ will be discussed in Section \ref{sec.initial}. Note that the steady state solution of (\ref{eq.dynamicflow}) solves (\ref{eq.optimality}). In the next subsection, we propose an operator-splitting method to time-discretize (\ref{eq.dynamicflow}) and to compute the steady state solution.

\subsection{An operator-splitting method for the dynamical-flow system}
We use the Lie scheme (see \cite{glowinski2017splitting,glowinski2016some} and the references therein) to time-discretize (\ref{eq.dynamicflow}). Denote by $\tau(>0)$ a time discretization step and by $n$ the step number. Let $t^n=n\tau$. We use $(\bp^n,\bG^n)$ to denote an approximate solution at time $t^n$. Given an initial condition $(\bp_0,\bH_0)$, we update $(\bp^n,\bH^n)$ via the following four steps:\\
\emph{\underline{Initialization}}:
\begin{align}
	(\bp^0,\bH^0)=(\bp_0,\bH_0).
	\label{eq.split.1}
\end{align}
\emph{\underline{Fractional Step 1}}:
\begin{align}
	\begin{cases}
		\begin{cases}
			\gamma\frac{\partial \bp}{\partial t}+D_{\bq}J_1(\bp,\bH)= \mathbf{0},\\
			\frac{\partial \bH}{\partial t}+\partial_{\bG}J_1(\bp,\bH)\ni \mathbf{0},
		\end{cases} \mbox{ in } \Omega\times(t^n,t^{n+1}),\\
		(\bp(t^n),\bH(t^n))=(\bp^n,\bH^n),
	\end{cases}
	\label{eq.split.2}
\end{align}
and set
\begin{align}
	(\bp^{n+1/4},\bH^{n+1/4})=(\bp(t^{n+1}),\bH(t^{n+1})).
	\label{eq.split.3}
\end{align}
\emph{\underline{Fractional Step 2}}:
\begin{align}
	\begin{cases}
		\begin{cases}
			\gamma\frac{\partial \bp}{\partial t}+\partial_{\bq}J_2(\bp)\ni \mathbf{0},\\
			\frac{\partial \bH}{\partial t}= \mathbf{0},
		\end{cases} \mbox{ in } \Omega\times(t^n,t^{n+1}),\\
		(\bp(t^n),\bH(t^n))=(\bp^{n+1/4},\bH^{n+1/4}),
	\end{cases}
	\label{eq.split.4}
\end{align}
and set
\begin{align}
	(\bp^{n+2/4},\bH^{n+2/4})=(\bp(t^{n+1}),\bH(t^{n+1})).
	\label{eq.split.5}
\end{align}
\emph{\underline{Fractional Step 3}}:
\begin{align}
	\begin{cases}
		\begin{cases}
			\gamma\frac{\partial \bp}{\partial t}+\partial_{\bq}I_S(\bp,\bH)\ni \mathbf{0},\\
			\frac{\partial \bH}{\partial t}+\partial_{\bG}I_S(\bp,\bH)\ni \mathbf{0},
		\end{cases} \mbox{ in } \Omega\times(t^n,t^{n+1}),\\
		(\bp(t^n),\bH(t^n))=(\bp^{n+2/4},\bH^{n+2/4}),
	\end{cases}
	\label{eq.split.6}
\end{align}
and set
\begin{align}
	(\bp^{n+3/4},\bH^{n+3/4})=(\bp(t^{n+1}),\bH(t^{n+1})).
	\label{eq.split.7}
\end{align}
\emph{\underline{Fractional Step 4}}:
\begin{align}
	\begin{cases}
		\begin{cases}
			\gamma\frac{\partial \bp}{\partial t}+D_{\bq} J_3(\bp)+ \partial_{\bq}I_{\Sigma}(\bp)= \mathbf{0},\\
			\frac{\partial \bH}{\partial t}= \mathbf{0},
		\end{cases} \mbox{ in } \Omega\times(t^n,t^{n+1}),\\
		(\bp(t^n),\bH(t^n))=(\bp^{n+3/4},\bH^{n+3/4}),
	\end{cases}
	\label{eq.split.8}
\end{align}
and set
\begin{align}
	(\bp^{n+1},\bH^{n+1})=(\bp(t^{n+1}),\bH(t^{n+1})).
	\label{eq.split.9}
\end{align}
In scheme (\ref{eq.split.1})-(\ref{eq.split.9}), the positive constant $\gamma>0$ controls the evolution speed of $\bp$. Scheme (\ref{eq.split.1})-(\ref{eq.split.9}) is only semidiscrete. One still needs to solve the subproblems (\ref{eq.split.2}), (\ref{eq.split.4}), (\ref{eq.split.6}) and (\ref{eq.split.8}). Here we advocate a Marchuk-Yanenko type scheme (see \cite{glowinski2017splitting} for more information on the Marchuk-Yanenko scheme) to time-discretize (\ref{eq.split.1})-(\ref{eq.split.9}), that is:\\
Set
\begin{align}
	(\bp^0,\bH^0)=(\bp_0,\bH_0).
	\label{eq.split.1.dis}
\end{align}
For $n\geq0$, $(\bp^n,\bH^n)\rightarrow (\bp^{n+1/4},\bH^{n+1/4}) \rightarrow (\bp^{n+2/4},\bH^{n+2/4}) \rightarrow (\bp^{n+3/4},\bH^{n+3/4}) \rightarrow (\bp^{n+1},\bH^{n+1})$ as follows:
\begin{align}
	&\begin{cases}
		\gamma\frac{\bp^{n+1/4}-\bp^n}{\tau}+D_{\bq}J_1(\bp^{n+1/4},\bH^n)= \mathbf{0},\\
		\frac{\bH^{n+1/4}-\bH^{n}}{\tau}+\partial_{\bG}J_1(\bp^{n+1/4},\bH^{n+1/4})\ni \mathbf{0},
	\end{cases} \label{eq.split.2.dis}\\
	&\begin{cases}
		\gamma\frac{\bp^{n+2/4}-\bp^{n+1/4}}{\tau}+\partial_{\bq}J_2(\bp^{n+2/4})\ni \mathbf{0},\\
		\frac{\bH^{n+2/4}-\bH^{n+1/4}}{\tau}= \mathbf{0},
	\end{cases}
	\label{eq.split.3.dis}\\
	& \begin{cases}
		\gamma\frac{\bp^{n+3/4}-\bp^{n+2/4}}{\tau}+\partial_{\bq}I_S(\bp^{n+3/4},\bH^{n+3/4})\ni \mathbf{0},\\
		\frac{\bH^{n+3/4}-\bH^{n+2/4}}{\tau}+\partial_{\bG}I_S(\bp^{n+3/4},\bH^{n+3/4})\ni \mathbf{0},
	\end{cases}
	\label{eq.split.4.dis}\\
	&\begin{cases}
		\gamma\frac{\bp^{n+1}-\bp^{n+3/4}}{\tau}+D_{\bq} J_3(\bp^{n+1})+ \partial_{\bq}I_{\Sigma}(\bp^{n+1})\ni \mathbf{0},\\
		\frac{\bH^{n+1}-\bH^{n+3/4}}{\tau}= \mathbf{0}.
	\end{cases}
	\label{eq.split.5.dis}
\end{align}
{
	Problem  (\ref{eq.split.2.dis}) is a time-discrete variant of (\ref{eq.split.2}). Given $\{\bp^n,\bH^n\}$, it is difficult to solve (\ref{eq.split.2}) for $\{\bp^n,\bH^n\}$ directly by an implicit scheme. Therefore, we split this complicated problem into two substeps in (\ref{eq.split.2}) by decoupling variables $\bp$ and $\bH$. Problem  (\ref{eq.split.2.dis}) consists of two substeps: In the first substep, we fix $\bH=\bH^n$ and compute for $\bp^{n+1}$ implicitly. In the second substep, we fix $\bp=\bp^{n+1/4}$ and update $\bH^{n+1/4}$ implicitly. Details on each substep can be found in Section \ref{sec.step2}. Such a splitting strategy is known as the Marchuk-Yanenko type scheme.  The convergence of this scheme is verified by our numerical experiments in Section \ref{sec.experiments}.
}
In the remaining part of this section, we discuss solutions to subproblems (\ref{eq.split.2.dis})-(\ref{eq.split.5.dis}).

{
	\begin{remark}
		Our operator--splitting method is an approximation of the gradient flow of the functional (\ref{eq.model}). The convergence of the proposed method closely relates to that of the gradient flow. When there is only one variable and the operator in each subproblem is smooth enough, the approximation error is of $O(\tau)$ (see \cite{chorin1978product} and \cite[Chapter 6]{glowinski2003finite}). In our problem, since $J_1,J_2, I_S$ and $I_{\Sigma}$ are not smooth, the approximation error of the proposed method requires a separate study. Due to the non-convexity of the functional in (\ref{eq.model}), all we can expect is that the gradient flow and the proposed method converge to a local minimizer. 
	\end{remark}
}
\subsection{On the solution to (\ref{eq.split.2.dis})}\label{sec.step2}
\subsubsection{Computing $\bp^{n+1/4}$}\label{sec.p2}
In (\ref{eq.split.2.dis}), $\bp^{n+1/4}$ solves the following minimization problem
\begin{align}
	\bp^{n+1/4}=\argmin_{\bq\in (\cL^{2}(\Omega))^2} \left[\frac{\gamma}{2}\int_{\Omega} |\bq-\bp^n|^2 d\bx +\tau\int_{\Omega} \frac{|\det \bH^n|}{(1+|\bq|^2)^{3/2}}d\bx\right].
	\label{eq.p.1}
\end{align}
By differentiating the functional in (\ref{eq.p.1}), $\bp^{n+1/4}=\left[p_1^{n+1/4}, p_2^{n+1/4}\right]^{\top}$ satisfies
\begin{align}
	\gamma \bp^{n+1/4}-3\tau|\Delta_1|\frac{\bp^{n+1/4}}{(1+|\bp^{n+1/4}|^2)^{5/2}}=\gamma \bb,
	\label{eq.p.1.optimality}
\end{align}
where $\Delta_1=\det \bH^{n+1/4}$, $\bb=\bp^n$. System (\ref{eq.p.1.optimality}) can be solved by Newton's method or a fixed point method. 

{
	We first discuss Newton's method. Define 
	\begin{align}
		\bF(\bq)=\gamma \bq-3\tau|\Delta_1|\frac{\bq}{(1+|\bq|^2)^{5/2}}-\gamma\bb.
	\end{align}
	for $\bq=[q_1,q_2]^{\top}$. It is easy to derive that
	\begin{align}
		D\bF(\bq)=\begin{pmatrix}
			\frac{\partial F_1}{\partial q_1} & \frac{\partial F_1}{\partial q_2}\\
			\frac{\partial F_2}{\partial q_1} & \frac{\partial F_2}{\partial q_2}
		\end{pmatrix}= \gamma I+\frac{3\tau|\Delta_1|}{(1+|\bq|^2)^{7/2}}
		\begin{pmatrix}
			4q_1^2-q_2^2-1 & 5q_1q_2\\
			5q_1q_2 & 4q_2^2-q_1^2-1
		\end{pmatrix},
	\end{align}
	where $I$ denotes the identity matrix. The Newtons method is conduced as follows:\\
	Set $\bq^0=\bb$. For $k>0$, we update $\bq^k\rightarrow \bq^{k+1}$ as
	\begin{align}
		\bq^{k+1}=\bq-\rho(D\bF(\bq^k))^{-1}F(\bq^k),
		\label{eq.p1.newton}
	\end{align}
	where $\rho\in (0,1]$ is a parameter controlling the updating rate of $\bq$. We update $\bq^k$ until $\|\bq^{k+1}-\bq^{k}\|_{\infty}\leq \xi_1$ for some small $\xi_1$. Denote the converged quantity by $\bp^*$. We set
	\begin{align*}
		\bp^{n+1/4}=\bp^*.
	\end{align*}
}

The formulation of the fixed point method is simpler. First observe that (\ref{eq.p.1.optimality}) can be rewritten as
\begin{align}
	\left(\gamma - \frac{3\tau|\Delta_1|}{(1+|\bp^{n+1/4}|^2)^{5/2}}\right)\bp^{n+1/4}=\gamma \bb.
	\label{eq.p.1.optimality.1}
\end{align}
Set $\bq^0=\bb$. For $k>0$, we update $\bq^k\rightarrow\bq^{k+1}$ as
\begin{align}
	&s^k=\left(\gamma - \frac{3\tau|\Delta_1|}{(1+|\bq^k|^2)^{5/2}}\right),\label{eq.p1.fix1}\\
	&\widetilde{\bq}=\gamma\bb/s^k,\\
	&\bq^{k+1}=(1-\rho_1)\bq^k+\rho_1\widetilde{\bq},\label{eq.p1.fix3}
\end{align}
where $\rho_1\in(0,1]$ is a parameter controlling the updating rate of $\bq$. By using the above algorithm, $\bq^k$ is updated until $\|\bq^{k+1}-\bq^{k}\|_{\infty}\leq \xi_1$ for some small $\xi_1$. Denote the converged quantity by $\bp^*$. We set
\begin{align*}
	\bp^{n+1/4}=\bp^*.
\end{align*}
In our experiments, the fixed point method is more stable and has a faster convergence compared to Newton's method. In all of our experiments reported in this paper, the fixed point method is used.
{
	
	In Newton's method (\ref{eq.p1.newton}) and the fixed point iteration (\ref{eq.p1.fix1})--(\ref{eq.p1.fix3}), initial guess $\bq^0=\bp^n$ is used. In problems (\ref{eq.split.2.dis})-(\ref{eq.split.5.dis}), $\tau$ is the artificial time step which controls the evolution speed of $\bp$ and $\bH$. As long as $\tau$ is small enough, $\{\bp^n,\bH^n\}$ are close to $\{\bp^{n-1},\bH^{n-1}\}$ and $\{\bp^{n-1+1/4},\bH^{n-1+1/4}\}$, where $\bp^{n-1+1/4}$ is the minimizer of (\ref{eq.p.1}) in the previous outer iteration. In addition, the functional in (\ref{eq.p.1}) in the current outer iteration does not change too much from that in the previous outer iteration. It is reasonable to expect the minimizer of (\ref{eq.p.1}) at the current outer iteration is close to $\bp^{n}$ or $\bp^{n-1+1/4}$. Therefore $\bp^n$ is a good initial guess and should converge to the minimizer fast. This is verified in our numerical experiments.
	
	The operator splitting method we used is the Marchuk-Yanenko variant of the Lie scheme. Unlike ADMM type splitting methods, the Lie and Marchuk-Yanenko schemes `enjoy' a systematic splitting error (of order $\tau$, at best typically). In order to have an accurate method one has to use small values of $\tau$, implying many time steps before reaching a steady state solution. This drawback becomes an advantage when using Newton's method initialized with solution at time step $n$ to compute solution at time step $n + 1$, since the small value of $\tau$ one uses implies that both solutions are close to each other, which helps for Newton's method convergence. We expect the same for the fixed point method.
}
\subsubsection{Computing $\bH^{n+1/4}$}\label{sec.H2}
Function $\bH^{n+1/4}$ is the solution to
\begin{align}
	\bH^{n+1/4}=\argmin_{\bG\in (\cL^{2}(\Omega))^{2\times 2}} \left[\frac{1}{2} \int_{\Omega} |\bG-\bH^{n}|^2d\bx +\tau \int_{\Omega} \frac{|\det \bG|}{(1+|\bp^{n+1/4}|^2)^{3/2}}d\bx\right].
	\label{eq.H1}
\end{align}
Problem (\ref{eq.H1}) is of the form
\begin{align}
	\bM=\argmin_{\bG\in (\cL^{2}(\Omega))^{2\times 2}} \left[\frac{1}{2} \int_{\Omega} |\bG-\bB|^2d\bx +\tau \int_{\Omega} \Delta_2|G_{11}G_{22}-G_{12}G_{21}|d\bx\right]
	\label{eq.H1.1}
\end{align}
with $\bB=\bH^n, \Delta_2=(1+|\bp^{n+1/4}|^2)^{-3/2}$. By grouping $(G_{11},G_{12})$ and $(G_{21},G_{22})$, we use the following relaxation method to solve for $\bM$:\\
Set $\bM^0=\bB$, and fix $\rho_2\in(0,1]$. For $k>0$, we update $\bM^k\rightarrow \bM^{k+1}$ in the following two steps:\\
\textbf{Step 1}: Solve
\begin{align}
	(\widetilde{M}_{11},\widetilde{M}_{12})=\argmin_{(z_1,z_2)\in (\cL^{2}(\Omega))^{2}} \left[\frac{1}{2} \int_{\Omega} \left(|z_1-B_{11}|^2+|z_2-B_{12}|^2\right) d\bx +\tau \int_{\Omega} \Delta_2|M_{22}^kz_1-M_{21}^kz_2|d\bx\right]
	\label{eq.alterH.1}
\end{align}
and update
\begin{align}
	M_{11}^{k+1}=(1-\rho_2)M_{11}^k+\rho_2 \widetilde{M}_{11},\\
	M_{12}^{k+1}=(1-\rho_2)M_{12}^k+\rho_2 \widetilde{M}_{12}.
\end{align}
\textbf{Step 2}: Solve
\begin{align}
	(\widetilde{M}_{22},\widetilde{M}_{21})=\argmin_{(z_1,z_2)\in (\cL^{2}(\Omega))^{2}} \left[\frac{1}{2} \int_{\Omega} \left(|z_1-B_{22}|^2+|z_2-B_{21}|^2\right) d\bx +\tau \int_{\Omega} \Delta_2|M_{11}^{k+1}z_1-M_{12}^{k+1}z_2|d\bx\right]
	\label{eq.alterH.2}
\end{align}
and update
\begin{align}
	M_{22}^{k+1}=(1-\rho_2)M_{22}^k+\rho_2 \widetilde{M}_{22},\\
	M_{21}^{k+1}=(1-\rho_2)M_{21}^k+\rho_2 \widetilde{M}_{21}.
	\label{eq.alterH.end}
\end{align}
The above procedure is repeated until
$$
\max\left\{\|M_{11}^{k+1}-M_{11}^k\|_{\infty}, \|M_{22}^{k+1}-M_{22}^k\|_{\infty}, \|M_{12}^{k+1}-M_{12}^k\|_{\infty}, \|M_{21}^{k+1}-M_{21}^k\|_{\infty}\right\}\leq \xi_2
$$
for some small $\xi_2$.

Problems (\ref{eq.alterH.1}) and (\ref{eq.alterH.2}) can be solved pixel-wise. On each pixel, one needs to solve a minimization problem in the form of
\begin{align}
	(v_1,v_2)=\argmin_{(w_1,w_2)\in \RR^2} \left[\frac{1}{2} \left( |w_1-b_1|^2+|w_2-b_2|^2\right) +c|a_1w_1-a_2w_2|\right]
	\label{eq.alterH.general}
\end{align}
for some constants $a_1,a_2,b_1,b_2,c\in \RR$ with $c>0$. The closed-form solution of (\ref{eq.alterH.general}) is summarized in the following theorem.
\begin{theorem}\label{thm.H}
	The closed-form solution of (\ref{eq.alterH.general}) is given in the following five cases:
	\begin{enumerate}[{Case} 1:]
		\item $a_1=0$. The solution is
		\begin{align}
			v_1=b_1,\ v_2=\max\left\{ 0, 1-\frac{c|a_2|}{|b_2|}\right\}b_2.
		\end{align}
		\item $a_2=0$. The solution is
		\begin{align}
			v_1=\max\left\{ 0, 1-\frac{c|a_1|}{|b_1|}\right\}b_1,\ v_2=b_2.
		\end{align}
		\item $a_1\neq0,a_2\neq0$ and $(a_1b_1-a_2b_2)-(a_1^2+a_2^2)c>0.$ The solution is
		\begin{align}
			v_1=b_1-ca_1,\ v_2=b_2+ca_2.
		\end{align}
		\item $a_1\neq0,a_2\neq0$ and $(a_1b_1-a_2b_2)+(a_1^2+a_2^2)c<0.$ The solution is
		\begin{align}
			v_1=b_1+ca_1,\ v_2=b_2-ca_2.
		\end{align}
		\item Other cases. The solution is
		\begin{align}
			v_1=\frac{a_2^2b_1+a_1a_2b_2}{a_1^2+a_2^2}, \ v_2=\frac{a_1a_2b_1+a_1^2b_2}{a_1^2+a_2^2}.
		\end{align}
	\end{enumerate}
\end{theorem}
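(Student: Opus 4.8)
The plan is to use convexity and the characterization of minimizers via subdifferentials. Write the objective in (\ref{eq.alterH.general}) as $\phi(w_1,w_2)=\tfrac12\bigl(|w_1-b_1|^2+|w_2-b_2|^2\bigr)+c\,|a_1w_1-a_2w_2|$. Since $c>0$ and the quadratic part has Hessian equal to the identity, $\phi$ is strictly convex and coercive, so it has a unique minimizer $(v_1,v_2)$, fully characterized by $\mathbf{0}\in\partial\phi(v_1,v_2)$. Denoting the linear form $\ell(w)=a_1w_1-a_2w_2$, the sum rule and the chain rule for subdifferentials give $\partial\phi(w)=(w_1-b_1,\,w_2-b_2)+c\,s\,(a_1,\,-a_2)$ with $s\in\partial|\cdot|(\ell(w))$, i.e. $s=\mathrm{sign}(\ell(w))$ when $\ell(w)\neq0$ and $s\in[-1,1]$ when $\ell(w)=0$. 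The optimality system is therefore
\[
v_1=b_1-c\,a_1 s,\qquad v_2=b_2+c\,a_2 s,\qquad s\in\partial|\cdot|\bigl(a_1v_1-a_2v_2\bigr).
\]

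First I would dispose of Cases~1 and~2. If $a_1=0$ the penalty reduces to $c|a_2|\,|w_2|$ and $\phi$ separates in $w_1$ and $w_2$: then $v_1=b_1$, and $v_2$ is the proximal point of $c|a_2|\,|\cdot|$ at $b_2$, i.e. the one-dimensional soft-thresholding $v_2=\mathrm{sign}(b_2)\max\{0,|b_2|-c|a_2|\}=\max\{0,1-c|a_2|/|b_2|\}b_2$; Case~2 is symmetric in the roles of the two coordinates. For the remaining cases $a_1\neq0$ and $a_2\neq0$, substitute the optimality expressions into $\ell(v)$ to get $\ell(v)=(a_1b_1-a_2b_2)-c(a_1^2+a_2^2)s$, and split on the value of $s$. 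Choosing $s=1$ forces $\ell(v)=(a_1b_1-a_2b_2)-c(a_1^2+a_2^2)$, which is consistent with $s=\mathrm{sign}(\ell(v))=1$ exactly when this quantity is positive — that is Case~3, giving $v_1=b_1-ca_1$, $v_2=b_2+ca_2$. Choosing $s=-1$ yields Case~4 in the same way. Finally, imposing $\ell(v)=0$ and solving $a_1v_1-a_2v_2=0$ jointly with $v_1=b_1-ca_1s$, $v_2=b_2+ca_2s$ determines $cs=(a_1b_1-a_2b_2)/(a_1^2+a_2^2)$ uniquely, and back-substitution produces the rational formulas of Case~5; the admissibility requirement $s\in[-1,1]$ becomes $|a_1b_1-a_2b_2|\le c(a_1^2+a_2^2)$, precisely the parameter region left uncovered by Cases~3 and~4.

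To conclude, I would check that the five cases are exhaustive and mutually consistent: Cases~3 and~4 are disjoint (they demand $a_1b_1-a_2b_2>c(a_1^2+a_2^2)>0$ and $a_1b_1-a_2b_2<-c(a_1^2+a_2^2)<0$ respectively), their complement inside $\{a_1\neq0,\,a_2\neq0\}$ is exactly the Case~5 region, and Cases~1 and~2 overlap only when $a_1=a_2=0$, where both return $(v_1,v_2)=(b_1,b_2)$. In each case the displayed point satisfies the optimality inclusion $\mathbf{0}\in\partial\phi$, so by uniqueness of the minimizer of a strictly convex function it is the solution. The only mildly delicate point is the bookkeeping in the case split — making sure the sign consistency condition on $s$ lines up exactly with the stated inequalities on $(a_1,a_2,b_1,b_2,c)$ — but I do not expect a genuine obstacle, since everything reduces to one-dimensional soft-thresholding and the solution of a $2\times2$ linear system.
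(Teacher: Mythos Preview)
Your argument is correct and follows essentially the same route as the paper: both proofs separate out the degenerate cases $a_1=0$ or $a_2=0$ via one-dimensional shrinkage, and for $a_1,a_2\neq0$ both perform a case split on the sign of $a_1v_1-a_2v_2$ (equivalently, on the value of the subgradient $s$), checking the consistency conditions that yield Cases~3--5. Your write-up is slightly more careful in invoking strict convexity to guarantee uniqueness, and in Case~5 you solve for $s$ and back-substitute rather than eliminating $v_2$ and minimizing a one-variable quadratic as the paper does, but these are cosmetic differences.
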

\begin{proof}
	We derive the closed form solution for each case.
	
	\emph{Cases 1 and 2:} Case 1 and 2 are very similar to each other. We derive the expression of the solution in Case 1. The solution in Case 2 can be derived analogously. When $a_1=0$, (\ref{eq.alterH.general}) reduces to
	\begin{align}
		(v_1,v_2)=&\argmin_{(w_1,w_2)\in \RR^2} \left[\frac{1}{2} (|w_1-b_1|^2+|w_2-b_2|^2) +c|a_2||w_2|\right]\nonumber\\
		=&\argmin_{w_1\in \RR} \frac{1}{2}|w_1-b_1|^2 +\argmin_{w_2\in \RR} \left[\frac{1}{2}|w_2-b_2|^2+c|a_2||w_2|\right].
		\label{eq.case1}
	\end{align}
	In (\ref{eq.case1}), the minimization problem with respect to $w_1$ has solution $v_1=b_1$. The minimization problem with respect to $w_2$ is a common one in image processing; its solution is given via the shrinkage operator \cite{donoho1995noising}
	\begin{align}
		v_2=\max\left\{ 0, 1-\frac{c|a_2|}{|b_2|}\right\}b_2.
	\end{align}
	
	\emph{Case 3-5:} In Case 3-5, $a_1,a_2\neq 0$. When $a_1v_1-a_2v_2>0$, the optimality condition of $(v_1,v_2)$ is
	\begin{align}
		\begin{cases}
			v_1-b_1+ca_1=0,\\
			v_2-b_2-ca_2=0,
		\end{cases}
	\end{align}
	which gives $v_1=b_1-ca_1,\ v_2=b_2+ca_2$. Substituting this expression into the condition $a_1v_1-a_2v_2>0$ yields
	\begin{align}
		(a_1b_1-a_2b_2)-c(a_1^2+a_2^2)>0,
	\end{align}
	which proves Case 3.
	
	When $a_1v_1-a_2v_2<0$, the optimality condition of $(v_1,v_2)$ is
	\begin{align}
		\begin{cases}
			v_1-b_1-ca_1=0,\\
			v_2-b_2+ca_2=0,
		\end{cases}
	\end{align}
	which gives $v_1=b_1+ca_1,\ v_2=b_2-ca_2$. Substituting this expression into the condition $a_1v_1-a_2v_2<0$ yields
	\begin{align}
		(a_1b_1-a_2b_2)+c(a_1^2+a_2^2)<0,
	\end{align}
	which proves Case 4.
	
	For Case 5, the condition is $a_1,a_2\neq0$, $(a_1b_1-a_2b_2)-c(a_1^2+a_2^2)\leq0$ and $(a_1b_1-a_2b_2)+c(a_1^2+a_2^2)\geq0$. Under this condition, the optimality conditions in Cases 3 and 4 can not be satisfied. Therefore we must have $a_1v_1-a_2v_2=0$. Since $a_1,a_2\neq 0$, we can write $v_2=a_1v_1/a_2$. Then (\ref{eq.alterH.general}) reduces to
	\begin{align}
		\begin{cases}
			v_1=\argmin\limits_{w_1\in \RR} \frac{1}{2} \left(|w_1-b_1|^2+\left|\frac{a_1w_1}{a_2}-b_2\right|^2 \right)\\
			v_2=a_1v_1/a_2.
		\end{cases}
		\label{eq.case5}
	\end{align}
	The functional in (\ref{eq.case5}) is a quadratic form of $v_1$, implying that
	\begin{align*}
		v_1=\frac{a_2^2b_1+a_1a_2b_2}{a_1^2+a_2^2}, \ v_2=\frac{a_1a_2b_1+a_1^2b_2}{a_1^2+a_2^2}.
	\end{align*}
\end{proof}
{ Similar to our discusion in Section \ref{sec.step2} on the convergence of the fixed-point iteration for $\bp^{n+1/4}$, $\bH^n$ is a good initial guess of the iteration (\ref{eq.alterH.1})--(\ref{eq.alterH.end}) and the iteration should converge to the minimizer fast. This is verified by our numerical experiments.
}
\subsection{On the solution of (\ref{eq.split.3.dis})}
In (\ref{eq.split.3.dis}), $\bp^{n+2/4}$ solves the following problem
\begin{align}
	\min_{\bq\in (\cL^{2}(\Omega))^2} \left[\frac{\gamma}{2}\int_{\Omega} |\bq-\bp^{n+1/4}|^2d\bx +\tau\alpha\int_{\Omega} |\bq|d\bx\right].
	\label{eq.p.2}
\end{align}
We have $\bp^{n+2/4}$ closed form through the shrinkage operation, namely
\begin{align}
	\bp^{n+2/4}=\max\left\{ 0, 1-\frac{\tau\alpha/\gamma}{|\bp^{n+1/4}|}\right\}\bp^{n+1/4}.
	\label{eq.p3}
\end{align}
Then we set $\bH^{n+2/4}=\bH^{n+1/4}$.

\subsection{On the solution of (\ref{eq.split.4.dis})}
In (\ref{eq.split.4.dis}), $(\bp^{n+3/4},\bH^{n+3/4})$ solves
\begin{align}
	\begin{cases}
		\bH^{n+3/4}=\nabla \bp^{n+3/4},\\
		\bp^{n+3/4}=\argmin\limits_{\bq\in (\cH^1(\Omega))^2} \left[\frac{1}{2}\displaystyle\int_{\Omega}\left( \gamma|\bq-\bp^{n+2/4}|^2+|\nabla \bq-\bH^{n+2/4}|^2\right) d\bx\right].
	\end{cases}
	\label{eq.step4}
\end{align}
It follows from (\ref{eq.step4}) that $\bp^{n+3/4}$ is the unique solution to
\begin{align}
	\begin{cases}
		\bp^{n+3/4}\in (\cH^1(\Omega))^2,\\
		\displaystyle\int_{\Omega} \left(\gamma\bp^{n+3/4}\cdot \bq +\nabla \bp^{n+3/4}\colon\nabla\bq\right)d\bx= \displaystyle\int_{\Omega} \left( \gamma \bp^{n+2/4}\cdot \bq +\bH^{n+2/4}\cdot \nabla \bq\right) d\bx,\\
		\forall \bq\in (\cH^1(\Omega))^2,
	\end{cases}
\end{align}
where $\nabla \bp\colon\nabla\bq=\nabla p_1\cdot \nabla q_1+ \nabla p_2\cdot \nabla q_2$. Note that $\bp^{n+3/4}$ is also the unique weak solution of the following linear elliptic problem (a Neumann problem)
\begin{align}
	\begin{cases}
		-\nabla^2 p^{n+3/4}_k+\gamma p^{n+3/4}_k=\gamma p^{n+2/4}_k-\nabla \cdot\bH^{n+2/4}_k &\mbox{ in } \Omega,\\
		\left(\nabla p_k^{n+3/4}-\bH^{n+2/4}_k\right)\cdot \bn=0 &\mbox{ on } \partial \Omega,\\
		\mbox{for } k=1,2,
	\end{cases}
	\label{eq.p4}
\end{align}
where $\bH^{n+2/4}_k=[H_{k1}^{n+2/4}, H_{k2}^{n+2/4}]^{\top}$. 
\subsection{On the solution of (\ref{eq.split.5.dis})}
In (\ref{eq.split.5.dis}), $\bH^{n+1}=\bH^{n+3/4}$ and $\bp^{n+1}$ is the solution to
\begin{align}
	\begin{cases}
		u^{n+1}=\argmin\limits_{v\in \cH^1(\Omega)} \left[\frac{1}{2}\displaystyle\int_{\Omega} \gamma|\nabla v-\bp^{n+3/4}|^2d\bx +\frac{\tau}{2\beta}\displaystyle\int_{\Omega} |f-v|^2d\bx\right],\\
		\bp^{n+1}=\nabla u^{n+1}.
	\end{cases}
	\label{eq.p5}
\end{align}
From (\ref{eq.p5}), $u^{n+1}$ is the unique solution to the linear variational problem
\begin{align}
	\begin{cases}
		u^{n+1}\in \cH^1(\Omega),\\
		\displaystyle\int_{\Omega} \gamma\nabla u^{n+1}\cdot \nabla vd\bx + \frac{\tau}{\beta}\displaystyle\int_{\Omega} u^{n+1}vd\bx =\frac{\tau}{\beta}\displaystyle\int_{\Omega} fvd\bx +\gamma \displaystyle\int_{\Omega} \bp^{n+3/4}\cdot \nabla vd\bx,\\
		\forall v\in \cH^1(\Omega).
	\end{cases}
\end{align}
Note that $u^{n+1}\in \cH^1(\Omega)$ is also the weak solution to the following Neumann problem
\begin{align}
	\begin{cases}
		-\gamma\nabla^2 u^{n+1} +\frac{\tau}{\beta} u^{n+1} =\frac{\tau}{\beta} f-\nabla\cdot(\gamma \bp^{n+3/4}) &\mbox{ in } \Omega,\\
		\left(\nabla u^{n+1}-\bp^{n+3/4}\right)\cdot \bn=0 &\mbox{ on } \partial \Omega.
	\end{cases}
	\label{eq.p5.1}
\end{align}

Our algorithm is summarized in Algorithm \ref{alg.1} below:
\begin{algorithm}
	\caption{\label{alg.1}An operator-splitting method for solving problem (\ref{eq.model.2})}
	\begin{algorithmic}
		\STATE {\bf Input:} The noisy function $f$, parameters $\alpha,\beta,\gamma,\tau$.
		\STATE {\bf Initialization:} $n=0,$ $(\bp^0,\bH^0)=(\bp_0,\bH_0)$.
		\WHILE{not converge}
		\STATE 1. Solve (\ref{eq.split.2.dis}) using (\ref{eq.p1.fix1})-(\ref{eq.p1.fix3}), (\ref{eq.alterH.1})-(\ref{eq.alterH.end}) to obtain $(\bp^{n+1/4}, \bH^{n+1/4})$.
		\STATE 2. Solve (\ref{eq.split.3.dis}) using (\ref{eq.p3}) to obtain $(\bp^{n+2/4}, \bH^{n+2/4})$.
		\STATE 3. Solve (\ref{eq.split.4.dis}) using (\ref{eq.p4}) to obtain $(\bp^{n+3/4}, \bH^{n+3/4})$.
		\STATE 4. Solve (\ref{eq.split.5.dis}) using (\ref{eq.p5.1}) to obtain $(\bp^{n+1}, \bH^{n+1})$.
		\STATE 5. Set $n=n+1$.
		\ENDWHILE
		\STATE Solve (\ref{eq.vq}) using the converged function $\bp^*$ to obtain $u^*$.
		\STATE {\bf Output:} The function $u^*$.
	\end{algorithmic}
\end{algorithm}
\subsection{Initial condition}\label{sec.initial}
Scheme (\ref{eq.split.1.dis})-(\ref{eq.split.5.dis}) requires an initial condition $(\bp_0,\bH_0)$. One simple choice is
\begin{align}
	\bp_0=\nabla f, \ \bH_0=\nabla \bp_0.
	\label{eq.initial1}
\end{align}

A more sophisticated choice is to set $\bp_0$ as the gradient of a smoothed $f$. Let $\varepsilon>0$ be a small constant. We first solve
\begin{align}
	\begin{cases}
		u_0\in \cH^1(\Omega),\\
		\displaystyle\int_{\Omega} u_0vd\bx +\varepsilon\displaystyle\int_{\Omega}\nabla u_0\cdot \nabla vd\bx=\displaystyle\int_{\Omega} fvd\bx,\\
		\forall v\in \cH^1(\Omega).
	\end{cases}
	\label{eq.initial2.1}
\end{align}
Here $u_0$ is the weak solution of
\begin{align}
	\begin{cases}
		u_0-\varepsilon\nabla^2 u_0=f &\mbox{ in } \Omega,\\
		\nabla u_0\cdot  \bn(=\partial u_0/ \partial \bn)=0 &\mbox{ on } \partial \Omega.
	\end{cases}
	\label{eq.initial2.1.pde}
\end{align}
Then we take
\begin{align}
	\bp_0=\nabla u_0,\ \bH_0=\nabla\bp_0.
	\label{eq.initial2.2}
\end{align}

\subsection{On periodic boundary conditions}\label{sec.periodic}
Periodic boundary conditions are commonly used in image processing and enable one to use FFT when solving certain elliptic linear PDEs. { The operator-splitting method and the solvers to each subproblem discussed so far consider Neumann boundary conditions.} In this subsection, we discuss the minimal efforts one needs to modify the aforementioned algorithm and solvers in order to handle periodic boundary conditions.

Assume that our computational domain is $\Omega=[0,L_1]\times [0,L_2]$. The first modification one needs is to replace the functional space $\cH^1(\Omega)$ by $\cH^1_P(\Omega)$ defined as
\begin{align*}
	&\cH_P^1(\Omega)=\left\{v\in \cH^1(\Omega): v(0,:)=v(L_1,:), v(:,0)=v(:,L_2)\right\}.
\end{align*}
Correspondingly, the sets $\Sigma$ and $S$ are replaced by
\begin{align*}
	&\Sigma=\left\{\bq| \bq\in (\cL^2(\Omega))^2, \exists v\in \cH_P^1(\Omega) \mbox{ such that } \bq=\nabla v \mbox{ and } \int_{\Omega} (f-v)d\bx=0\right\},\\
	&S=\left\{ (\bq,\bG)| (\bq,\bG)\in \left(\cH^1_P(\Omega)\right)^2\times \left(\cL^2(\Omega)\right)^{2\times 2} \mbox{ such that } \bG=\nabla \bq\right\}.
\end{align*}
Problems (\ref{eq.model.2}) and (\ref{eq.vq}) are replaced by
\begin{align}
	\min\limits_{\substack{\bq\in \left(\cH_P^1(\Omega)\right)^2, \\ \bG\in \left(\cL^{2}(\Omega)\right)^{2\times 2}}} \left[\displaystyle\int_{\Omega} \frac{|\det\bG|}{(1+|\bq|^2)^{3/2}}d\bx+\alpha\displaystyle\int_{\Omega}|\bq|d\bx +\frac{1}{2\beta}\displaystyle\int_{\Omega}|f-v_{\bq}|^2d\bx + I_{\Sigma}(\bq) +I_{S}(\bq,\bG)\right]
	\label{eq.model.2.periodic}
\end{align}
and
\begin{align}
	\begin{cases}
		\nabla^2v_{\bq}=\nabla\cdot \bq \mbox{ in } \Omega,\\
		v_{\bq} \mbox{ verifies periodic doundary conditions},\\
		(\nabla v_{\bq}-\bq)\cdot \be_j \mbox{ is periodic in the $Ox_j$-direction}, \forall j=1,2,\\
		\displaystyle\int_{\Omega}(f-v_{\bq})d\bx=0,
	\end{cases}
	\label{eq.vq.periodic}
\end{align}
respectively. In (\ref{eq.vq.periodic}), $\be_j$ is the unit vector of the $Ox_j$ direction.



Finally, we modify the subproblem solvers as follows:\\
For $(\bp^{n+3/4},\bH^{n+3/4})$, replace (\ref{eq.step4}) and (\ref{eq.p4}) by
\begin{align}
	\begin{cases}
		\bH^{n+3/4}=\nabla \bp^{n+3/4},\\
		\bp^{n+3/4}=\argmin\limits_{\bq\in \left(\cH_P^1(\Omega)\right)^2} \left[\frac{1}{2}\displaystyle\int_{\Omega} \left( \gamma|\bq-\bp^{n+2/4}|^2+|\nabla \bq-\bH^{n+2/4}|^2\right)d\bx\right]
	\end{cases}
	\label{eq.step4.periodic}
\end{align}
and
\begin{align}
	\begin{cases}
		-\nabla^2 p^{n+3/4}_k+\gamma p^{n+3/4}_k=\gamma p^{n+2/4}_k-\nabla \cdot\bH^{n+2/4}_k \mbox{ in } \Omega,\\
		p_k^{n+3/4}(0,x_2)=p_k^{n+3/4}(L_1,x_2), \ 0< x_2<L_2,\\ p_k^{n+3/4}(x_1,0)=p_k^{n+3/4}(x_1,L_2),\ 0<x_1<L_1,\\
		\left(\frac{ \partial p_k^{n+3/4}}{\partial x_1}-H^{n+2/4}_{k1}\right)(0,x_2)= \left(\frac{ \partial p_k^{n+3/4}}{\partial x_1}-H^{n+2/4}_{k1}\right)(L_1,x_2), \ 0< x_2<L_2,\\
		\left(\frac{ \partial p_k^{n+3/4}}{\partial x_2}-H^{n+2/4}_{k2}\right)(x_1,0)= \left(\frac{ \partial p_k^{n+3/4}}{\partial x_2}-H^{n+2/4}_{k2}\right)(x_1,L_2), \ 0< x_1<L_1,\\
		\mbox{for } k=1,2,
	\end{cases}
	\label{eq.p4.periodic}
\end{align}
respectively.

For $\bp^{n+1}$, replace (\ref{eq.p5}) and (\ref{eq.p5.1}) by
\begin{align}
	\begin{cases}
		u^{n+1}=\argmin\limits_{v\in \cH^1_P(\Omega)} \left[\frac{1}{2}\displaystyle\int_{\Omega} \gamma|\nabla v-\bp^{n+3/4}|^2d\bx +\frac{\tau}{2\beta}\displaystyle\int_{\Omega} |f-v|^2d\bx\right],\\
		\bp^{n+1}=\nabla u^{n+1},
	\end{cases}
	\label{eq.p5.periodic}
\end{align}
and
\begin{align}
	\begin{cases}
		-\gamma\nabla^2 u^{n+1} +\frac{\tau}{\beta} u^{n+1} =\frac{\tau}{\beta} f-\nabla\cdot(\gamma \bp^{n+3/4}) \mbox{ in } \Omega,\\
		u^{n+1}(0,x_2)=u^{n+1}(L_1,x_2), \ 0< x_2<L_2,\\
		u^{n+1}(x_1,0)=u^{n+1}(x_1,L_2),\ 0<x_1<L_1,\\
		\left(\frac{ \partial u^{n+1}}{\partial x_1}-p^{n+3/4}_1\right)(0,x_2)= \left(\frac{ \partial u^{n+1}}{\partial x_1}-p^{n+3/4}_1\right)(L_1,x_2), \ 0< x_2<L_2,\\
		\left(\frac{ \partial u^{n+1}}{\partial x_2}-p^{n+3/4}_2\right)(x_1,0)= \left(\frac{ \partial u^{n+1}}{\partial x_2}-p^{n+3/4}_2\right)(x_1,L_2), \ 0< x_1<L_1,
	\end{cases}
	\label{eq.p5.1.periodic}
\end{align}
respectively.

We replace the initial conditions (\ref{eq.initial2.1}) and (\ref{eq.initial2.1.pde}) by
\begin{align}
	\begin{cases}
		u_0\in \cH_P^1(\Omega),\\
		\displaystyle\int_{\Omega} u_0vd\bx +\varepsilon\displaystyle\int_{\Omega}\nabla u_0\cdot \nabla vd\bx=\displaystyle\int_{\Omega} fvd\bx,\\
		\forall v\in \cH_P^1(\Omega),
	\end{cases}
	\label{eq.initial2.1.periodic}
\end{align}
and
\begin{align}
	\begin{cases}
		u_0-\varepsilon\nabla^2 u_0=f \mbox{ in } \Omega,\\
		u_0(0,x_2)=u_0(L_1,x_2), \ 0< x_2<L_2,\\
		u_0(x_1,0)=u_0(x_1,L_2),\ 0<x_1<L_1,\\
		\frac{\partial u_0}{\partial x_1}(0,x_2)=\frac{\partial u_0}{\partial x_1}(L_1,x_2), \ 0<x_2<L_2\\
		\frac{\partial u_0}{\partial x_2}(x_1,0)=\frac{\partial u_0}{\partial x_2}(x_1,L_2), \ 0<x_1<L_1,
	\end{cases}
	\label{eq.initial2.1.pde.periodic}
\end{align}
respectively.

Problems (\ref{eq.p4.periodic}), (\ref{eq.p5.1.periodic}) and (\ref{eq.initial2.1.pde.periodic}) are linear elliptic problems with periodic boundary conditions. Their finite difference analogues can be solved efficiently by FFT, as shown in Sections \ref{sec.p.3}, \ref{sec.p.4} and \ref{sec.initial.dis}. In the remainder of this article (Section \ref{sec.discretization} and \ref{sec.experiments}), periodic boundary conditions are used.
\section{Space discretization}\label{sec.discretization}
In this section, we discuss the finite difference analogues of (\ref{eq.split.1.dis})-(\ref{eq.split.5.dis}) with periodic boundary conditions. Let $\Omega=(0,L_1)\times(0,L_2)$ be discretized by $M\times N$ grids with step size $h=L_1/M=L_2/N$. For simplicity, we denote by $v(i,j)$ the approximate value of $v$ at $(ih,jh)$ for any function $v$ defined on $\Omega$. Assume that all of the variables mentioned before satisfy periodic boundary conditions. 

We first define the forward $(+)$ and backward $(-)$ finite differences for $1\leq i\leq M,\ 1\leq j\leq N$:
\begin{align*}
	&\partial_1^+ v(i,j)=\left(v(i+1,j)-v(i,j)\right)/h,\\
	&\partial_1^- v(i,j)=\left(v(i,j)-v(i-1,j)\right)/h,\\
	&\partial_2^+ v(i,j)=\left(v(i,j+1)-v(i,j)\right)/h,\\
	&\partial_2^- v(i,j)=\left(v(i,j)-v(i,j-1)\right)/h,
\end{align*}
where $v(M+1,j)=v(1,j), v(-1,j)=v(M,j)$ and $v(i,N+1)=v(i,1), v(i,-1)=v(i,N)$ are used. With the above notation, the forward $(+)$ and backward $(-)$ gradient operators for a scalar-valued function $v$ are defined by
\begin{align*}
	\nabla^{\pm}v(i,j)=(\partial_1^{\pm}v(i,j),\partial_2^{\pm}v(i,j)).
\end{align*}
Correspondingly, the forward $(+)$ and backward $(-)$ divergence and gradient operators for a vector-valued function $\bq$ are defined by
\begin{align*}
	&\diver^{\pm}\bq(i,j)=\partial_1^{\pm}q_1(i,j)+\partial_2^{\pm}q_2(i,j),\ \nabla^{\pm}\bq(i,j)=\begin{pmatrix} \partial_1^{\pm}q_1(i,j) & \partial_2^{\pm}q_1(i,j)\\\partial_2^{\pm}q_2(i,j) & \partial_2^{\pm}q_2(i,j)\end{pmatrix}.
\end{align*}
We define the shifting and identity operator by
\begin{align}
	\cS_1^{\pm}v(i,j)=v(i\pm 1,j), \ \cS_2^{\pm}v(i,j)=v(i,j\pm 1),\ \cI v(i,j)=v(i,j).
\end{align}
Denote the discrete Fourier transform and its inverse by $\cF$ and $\cF^{-1}$, respectively. We have
\begin{align}
	&\cF(\cS_1^{\pm}v)(i,j)=(\cos z_i \pm \sqrt{-1}\sin z_i)\cF(v)(i,j), \\
	&\cF(\cS_2^{\pm}v)(i,j)=(\cos z_j \pm \sqrt{-1}\sin z_j)\cF(v)(i,j),
\end{align}
with
\begin{align}
	z_i=\frac{2\pi}{M}(i-1),\ z_j=\frac{2\pi}{N}(j-1).
	\label{eq.z}
\end{align}

\subsection{Computing the discrete analogue of $\bp^{n+1/4}$ and $\bH^{n+1/4}$}
For the discrete analogue of $\bp^{n+1/4}$, we first compute
\begin{align}
	\Delta_1(i,j)=H_{11}(i,j)H_{22}(i,j)-H_{12}(i,j)H_{21}(i,j), \ |\bq^k(i,j)|^2=(q_1^k(i,j))^2+(q_2^k(i,j))^2.
\end{align}
Then $\bq^{k+1}$ is updated according to (\ref{eq.p1.fix1})-(\ref{eq.p1.fix3}) pixelwisely. After $\bq^{k+1}$ has converged to $\bq^*$, we set $\bp^{n+1/4}=\bq^*$.

For the discrete analogue of $\bH^{n+1/4}$, we compute
\begin{align*}
	\Delta_2(i,j)=\left(1+\left(p_1^{n+1/4}(i,j)\right)^2+\left(p_2^{n+1/4}(i,j)\right)^2\right)^{-3/2}.
\end{align*}
Then $\bM^{k+1}$ is updated pixelwisely according to (\ref{eq.alterH.1})-(\ref{eq.alterH.end}) and Theorem \ref{thm.H}. After $\bM^{k+1}$ has converged to $\bM^*$, we set $\bH^{n+1}=\bM^*$.

\subsection{Computing the discrete analogue of $\bp^{n+2/4}$ and $\bH^{n+2/4}$}
According to (\ref{eq.p3}), we compute
\begin{align}
	\bp^{n+2/4}(i,j)=\max\left\{ 0, 1-\frac{\tau\alpha/\gamma}{\sqrt{\left(p_1^{n+1/4}(i,j)\right)^2 +\left(p_2^{n+1/4}\right)^2}}\right\}\bp^{n+1/4}(i,j).
\end{align}
and set $\bH^{n+2/4}(i,j)=\bH^{n+1/4}(i,j)$.

\subsection{Computing the discrete analogue of $\bp^{n+3/4}$ and $\bH^{n+3/4}$}\label{sec.p.3}
We first compute $\bp^{n+3/4}$ according to (\ref{eq.p4.periodic}). Problem (\ref{eq.p4.periodic}) is discretized by
\begin{align}
	-\diver^+\nabla^- p^{n+3/4}_k+\gamma p^{n+3/4}_k=\gamma p^{n+2/4}_k-\diver^+\bH^{n+2/4}_k \mbox{ in } \Omega,
	\label{eq.p4.dis}
\end{align}
for $k=1,2$. Problem (\ref{eq.p4.dis}) can be solved efficiently by fast Fourier transform (FFT). Note that (\ref{eq.p4.dis}) can be rewritten as
\begin{align}
	\left[\gamma h^2\cI-(\cS_1^+-\cI)(\cI-\cS_1^-)-(\cS_2^+-\cI)(\cI-\cS_2^-)\right]p^{n+3/4}_k=g_k
\end{align}
with $g_k=\gamma h^2 p^{n+2/4}_k-h^2\diver^-\bH^{n+2/4}_k$ for $k=1,2$.
Applying Fourier transform for both sides, we get
\begin{align}
	a\cF(p^{n+3/4}_k)=\cF(g_k)
\end{align}
with
\begin{align*}
	a(i,j)= &\gamma h^2-(\cos z_i+\sqrt{-1}\sin z_i-1)(1-\cos z_i+\sqrt{-1}\sin z_i) \\
	&-(\cos z_j+\sqrt{-1}\sin z_j-1)(1-\cos z_j+\sqrt{-1}\sin z_j)\\
	=&\gamma h^2+4-2\cos z_i-2\cos z_j,
\end{align*}
where $z_i,z_j$ are defined in (\ref{eq.z}). Then $p_k^{n+3/4}$ is computed as
\begin{align}
	p_k^{n+3/4}=\real\left[\cF^{-1}\left(\frac{\cF(g_k)}{a}\right)\right],
\end{align}
where $\real(\cdot)$ denotes the real part of its argument. We then compute
\begin{align*}
	\bH^{n+3/4}=\nabla^-\bp^{n+3/4}.
\end{align*}

\subsection{Computing the discrete analogue of $\bp^{n+1}$ and $\bH^{n+1}$} \label{sec.p.4}
We first compute $u^{n+1}$ by solving (\ref{eq.p5.1.periodic}), which is discretized as
\begin{align}
	-\gamma\diver^- \nabla^+ u^{n+1} +\frac{\tau}{\beta} u^{n+1} =\frac{\tau}{\beta} f-\gamma \diver^- \bp^{n+3/4} \mbox{ in } \Omega.
	\label{eq.p5.dis}
\end{align}
Problem (\ref{eq.p5.dis}) can be rewritten as
\begin{align}
	\left[ \frac{\tau}{ \beta} h^2\cI-\gamma(\cI-\cS_1^-)(\cS_1^+-\cI)-\gamma(\cI-\cS_2^-)(\cS_2^+-\cI)\right] u^{n+1}=g
\end{align}
with $g=\frac{\tau}{\beta} h^2 f-\gamma h^2\diver^- \bp^{n+3/4}$. Taking the Fourier transform on both sides, we get
\begin{align}
	b\cF(u^{n+1})=\cF(g)
\end{align}
with
$b=\frac{\tau}{\beta} h^2+4\gamma-2\gamma\cos z_i-2\gamma \cos z_j,$
where $z_i,z_j$ are defined in (\ref{eq.z}).

We compute
\begin{align}
	u^{n+1}=\real\left[\cF^{-1}\left(\frac{\cF(g)}{b}\right)\right]
\end{align}
and then set $\bp^{n+1}=\nabla^+ u^{n+1}, \bH^{n+1}=\bH^{n+3/4}$.

\subsection{On the discrete analogue of $(\bp_0,\bH_0)$}\label{sec.initial.dis}
For the choice of (\ref{eq.initial1}), we set
\begin{align*}
	\bp_0=\nabla^+ f,\ \bH_0=\nabla^- \bp_0.
	\label{eq.initial1.dis}
\end{align*}

For the choice of (\ref{eq.initial2.1}), one may first follow (\ref{eq.initial2.1.periodic}) to compute $u_0$ in the same way as $u^{n+1}$:
\begin{align}
	u_0=\real\left[\cF^{-1}\left(\frac{\cF(f)}{c}\right)\right]
\end{align}
with
$	c=h^2+4\varepsilon-2\varepsilon\cos z_i-2\varepsilon \cos z_j.$
Then $\bp_0,\bH_0$ are set as
\begin{align*}
	\bp_0=\nabla^+ u_0,\ \bH_0=\nabla^- \bp_0.
\end{align*}

\begin{figure}[t!]
	\begin{tabular}{ccc}
		(a) & (b) & (c)\\
		\includegraphics[width=0.28\textwidth]{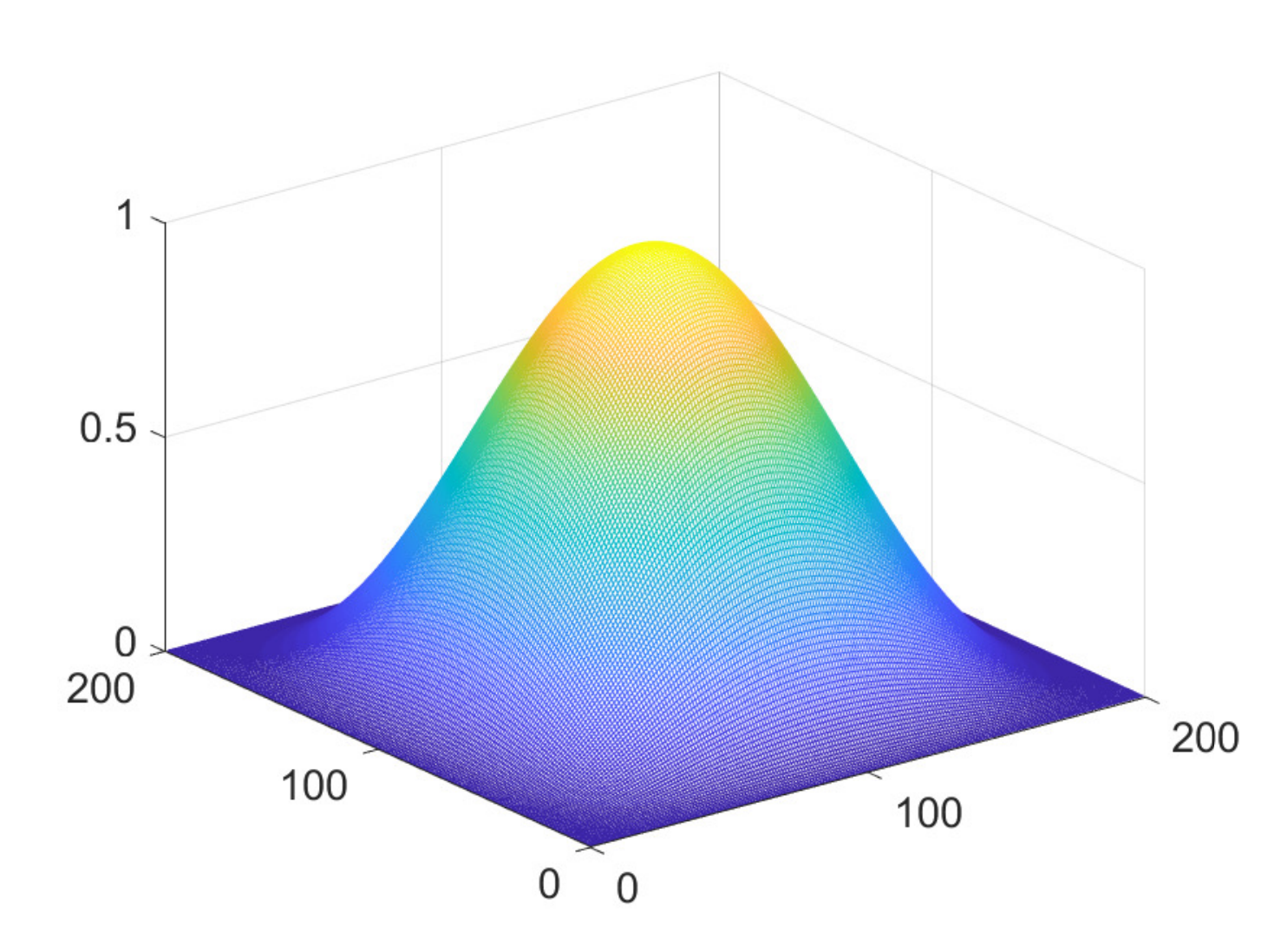}&
		\includegraphics[width=0.28\textwidth]{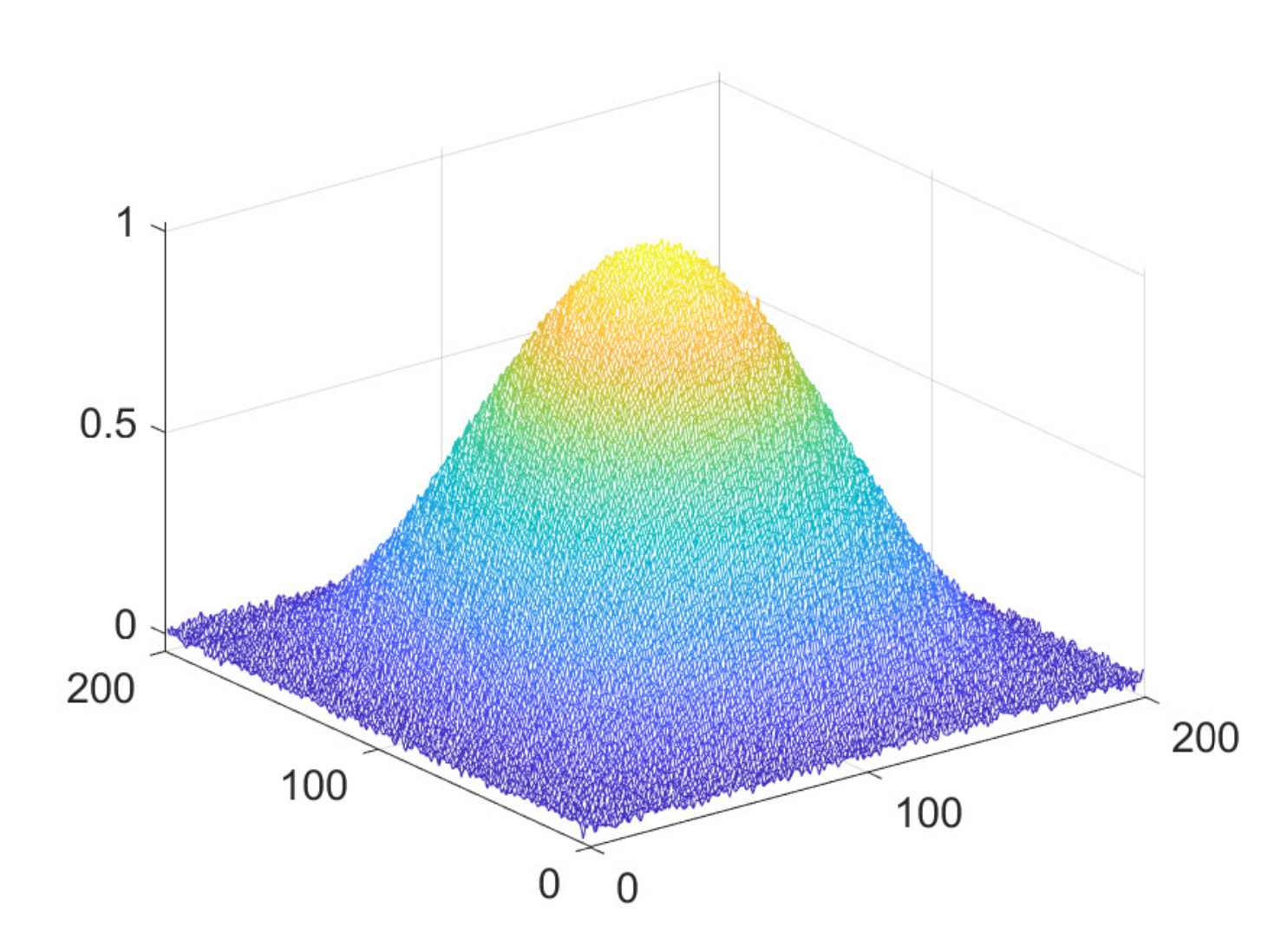}&
		\includegraphics[width=0.28\textwidth]{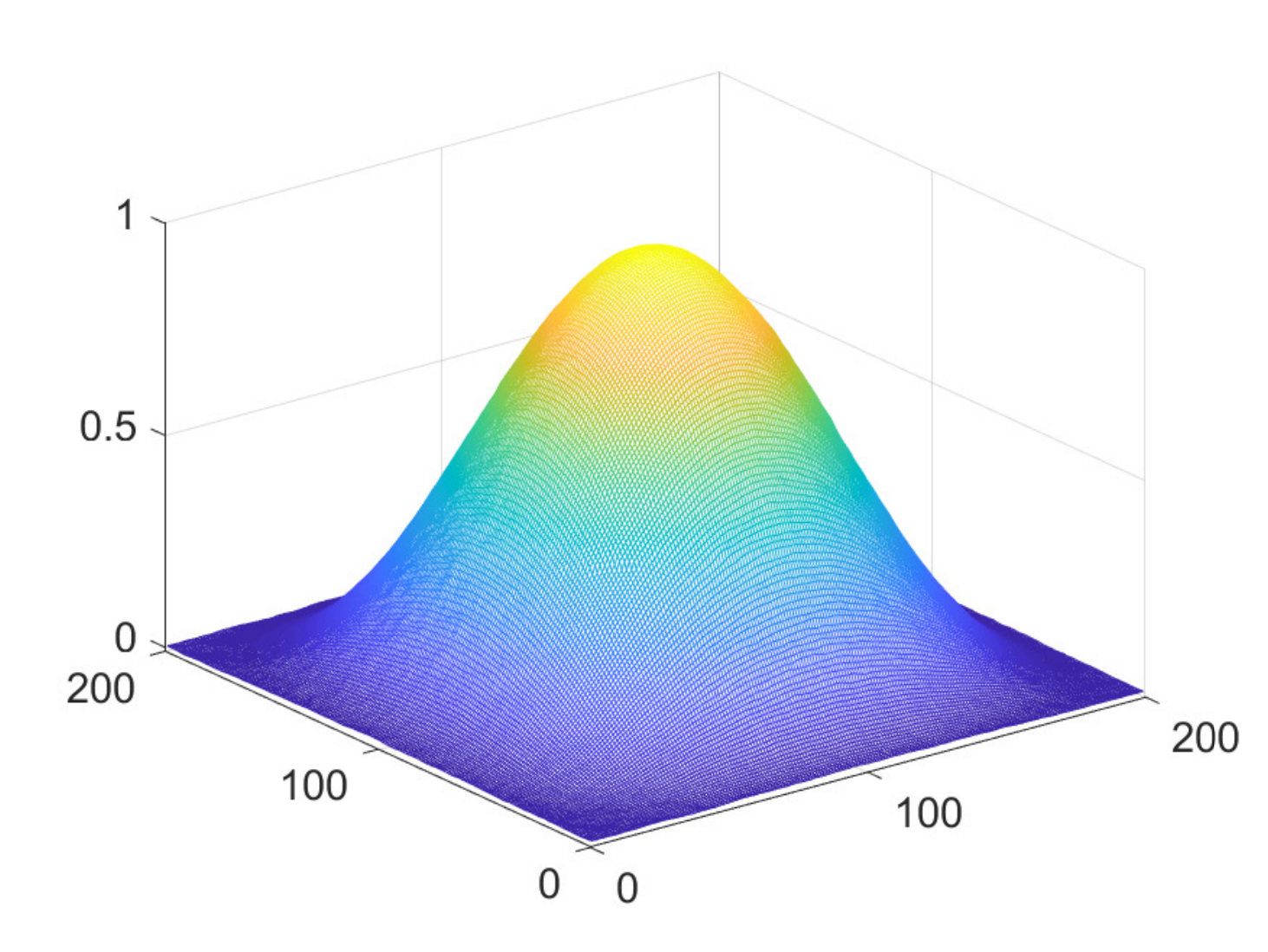}\\
		\includegraphics[width=0.28\textwidth]{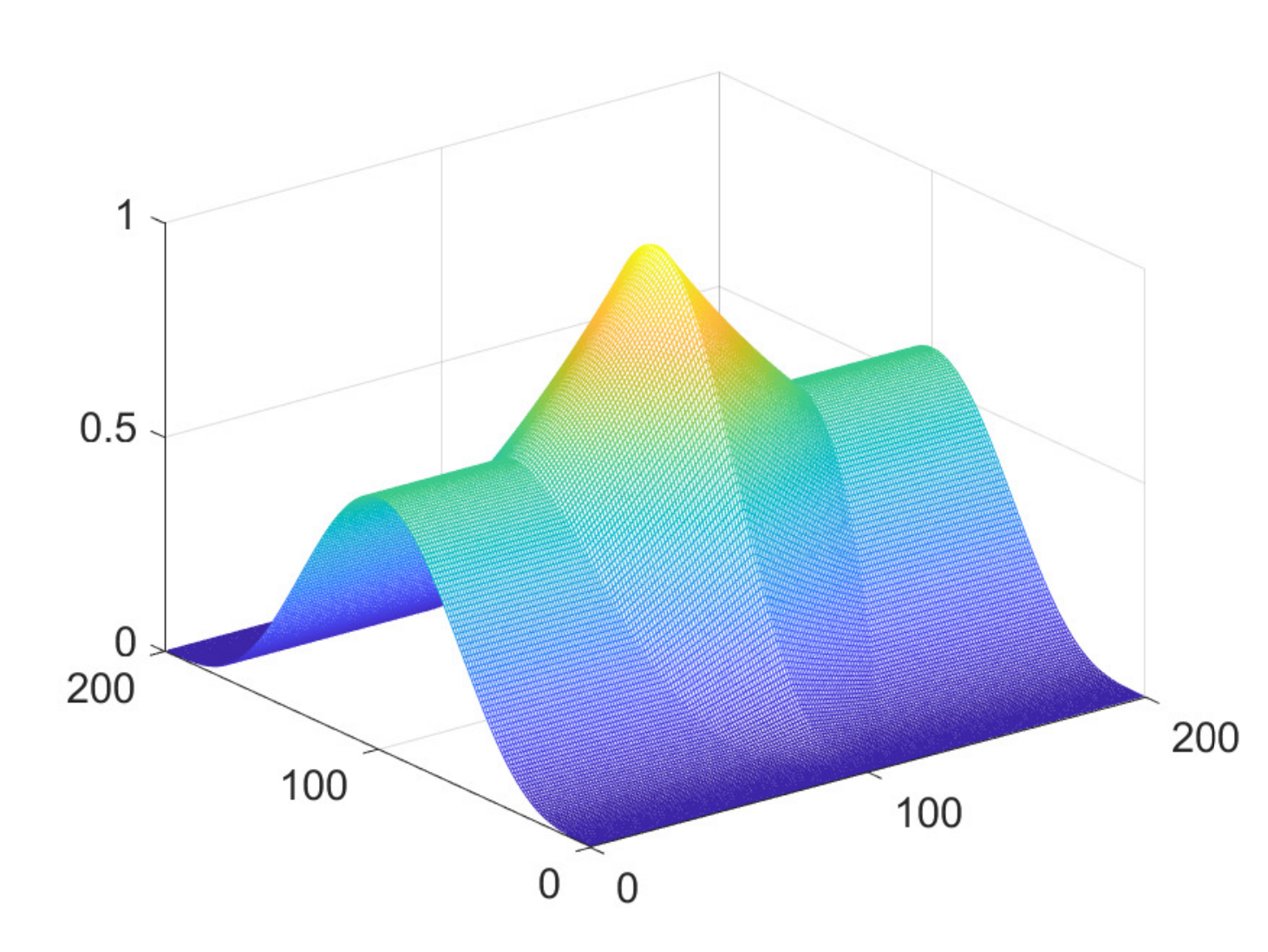}&
		\includegraphics[width=0.28\textwidth]{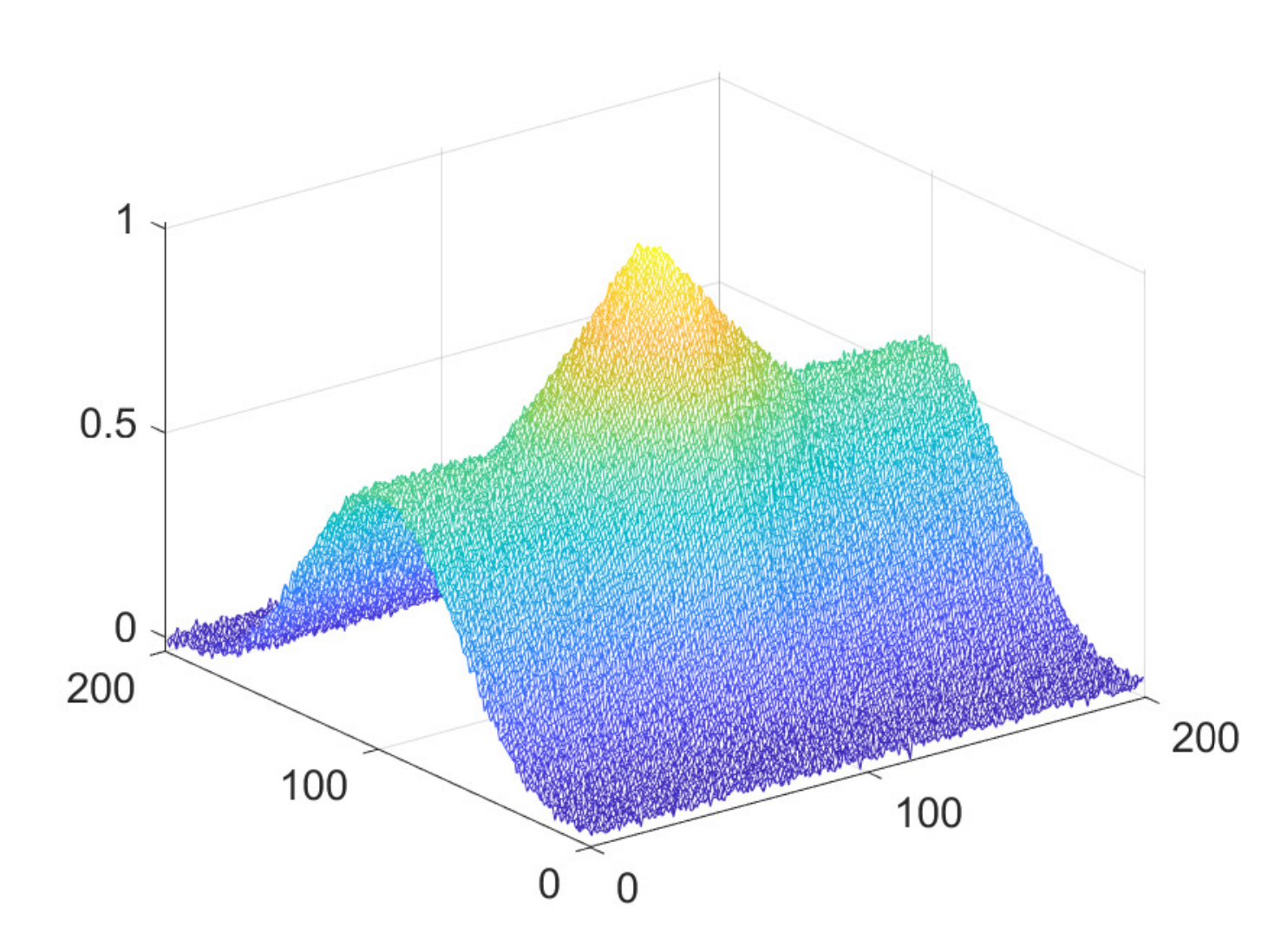}&
		\includegraphics[width=0.28\textwidth]{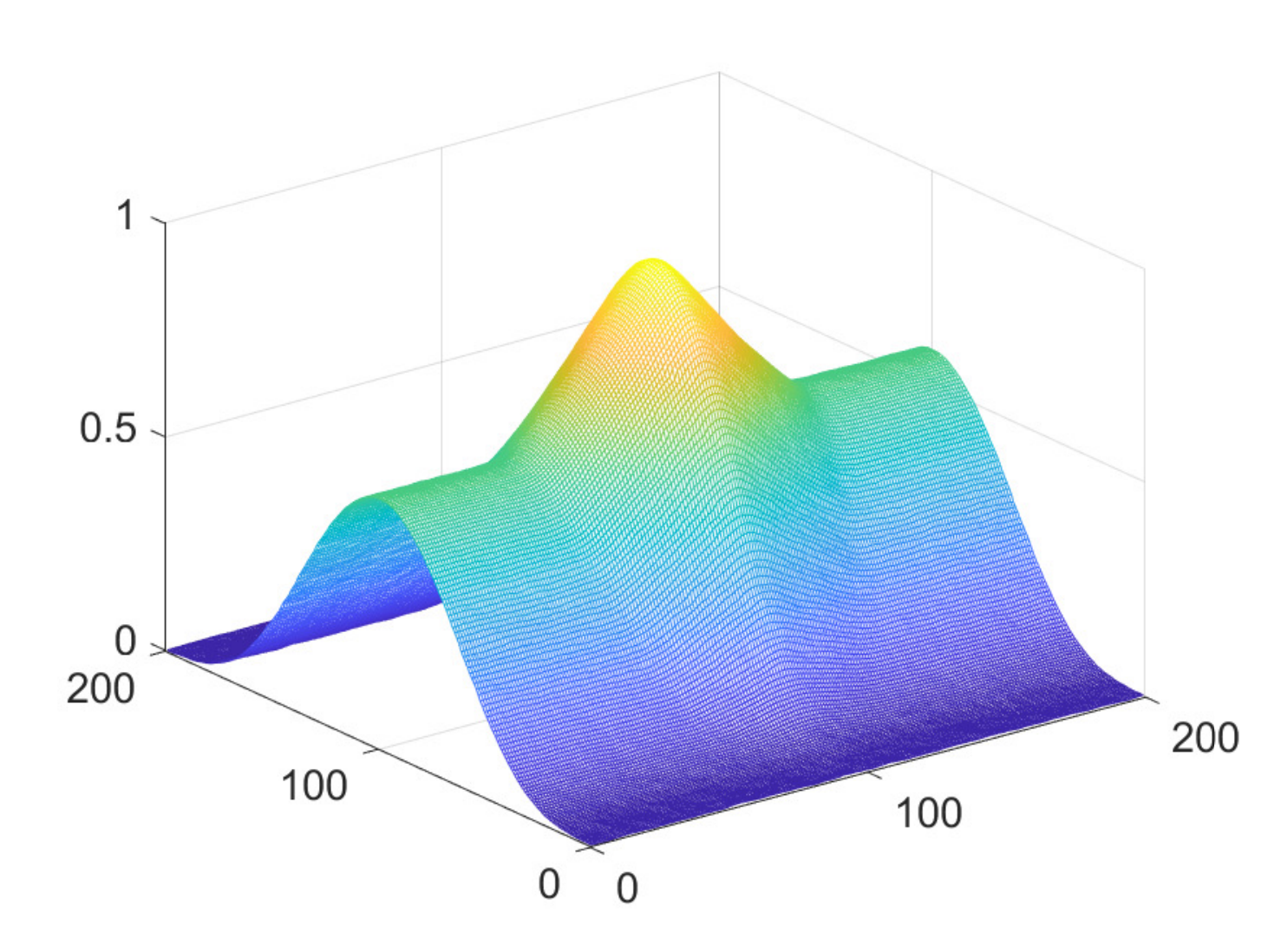}
	\end{tabular}
	\caption{(Surface smoothing.) (a) Clean surfaces. (b) Noisy surfaces with $\sigma=10^{-4}$. (c) Smoothed surfaces by the proposed model with $\alpha=1,\beta=0.1$.\vshrink}
	\label{fig.surf}
\end{figure}

\begin{figure}[t!]
	\begin{tabular}{ccc}
		(a) & (b) & (c)\\
		\includegraphics[width=0.28\textwidth]{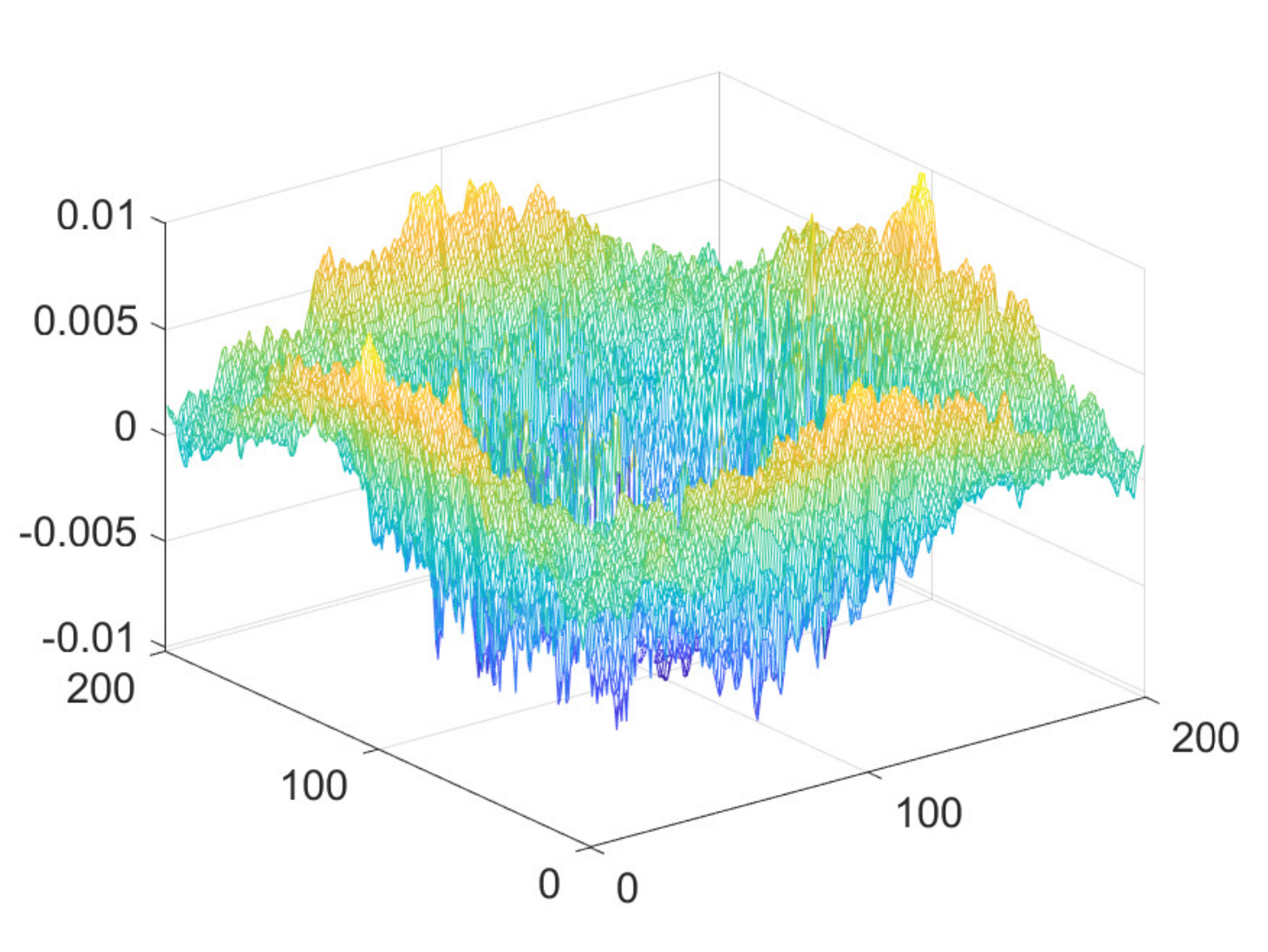}&
		\includegraphics[width=0.28\textwidth]{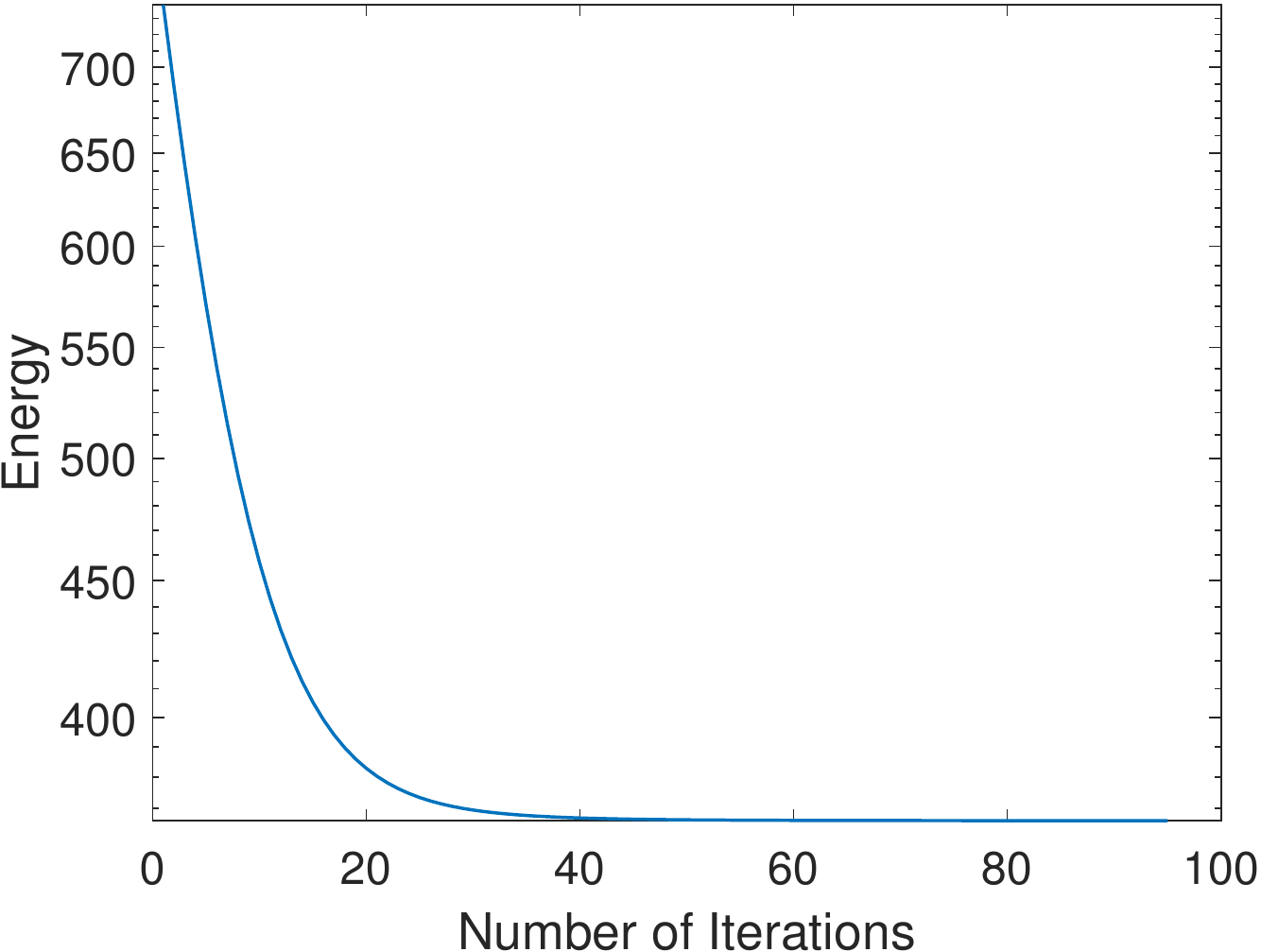}&
		\includegraphics[width=0.28\textwidth]{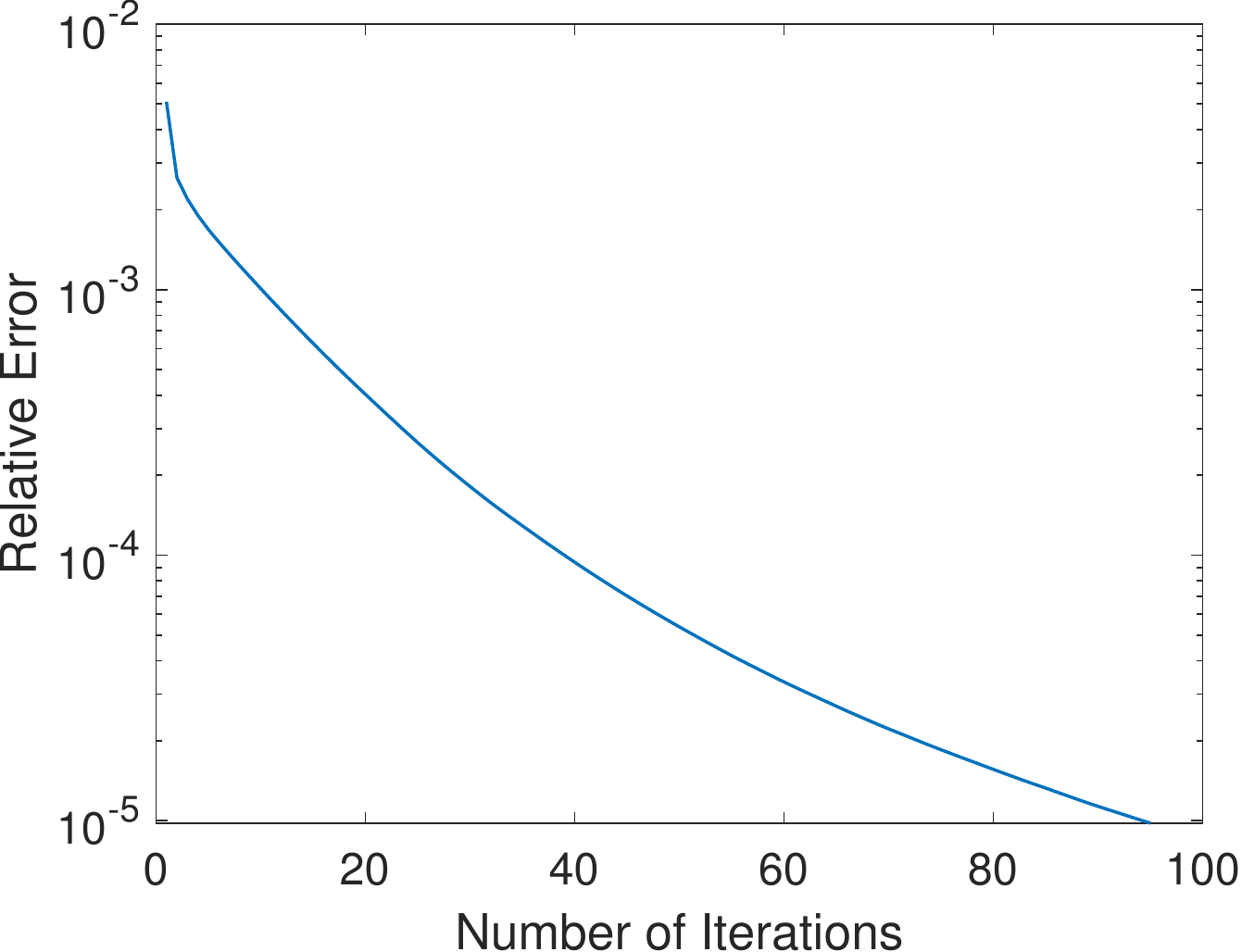}\\
		\includegraphics[width=0.28\textwidth]{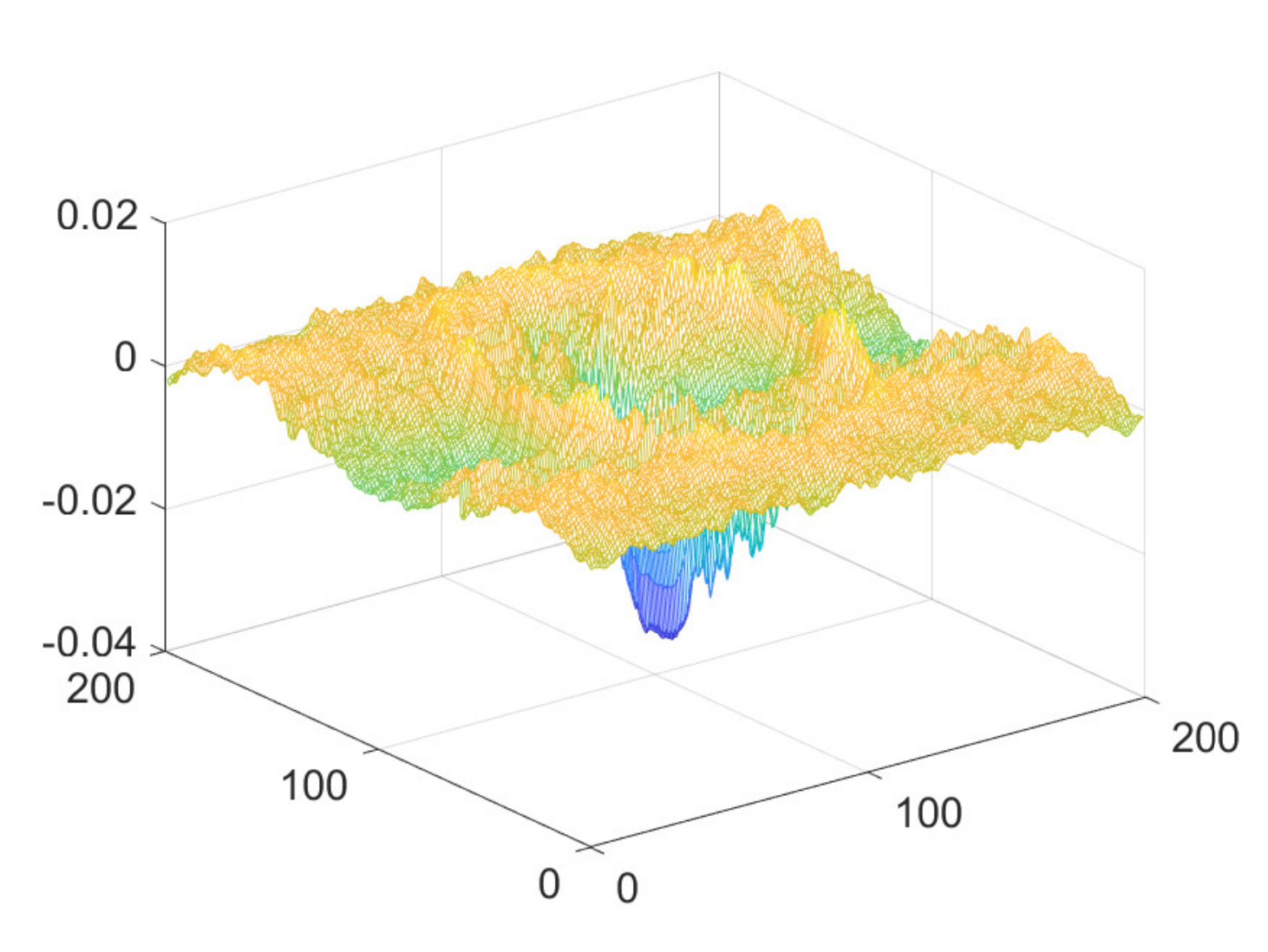}&
		\includegraphics[width=0.28\textwidth]{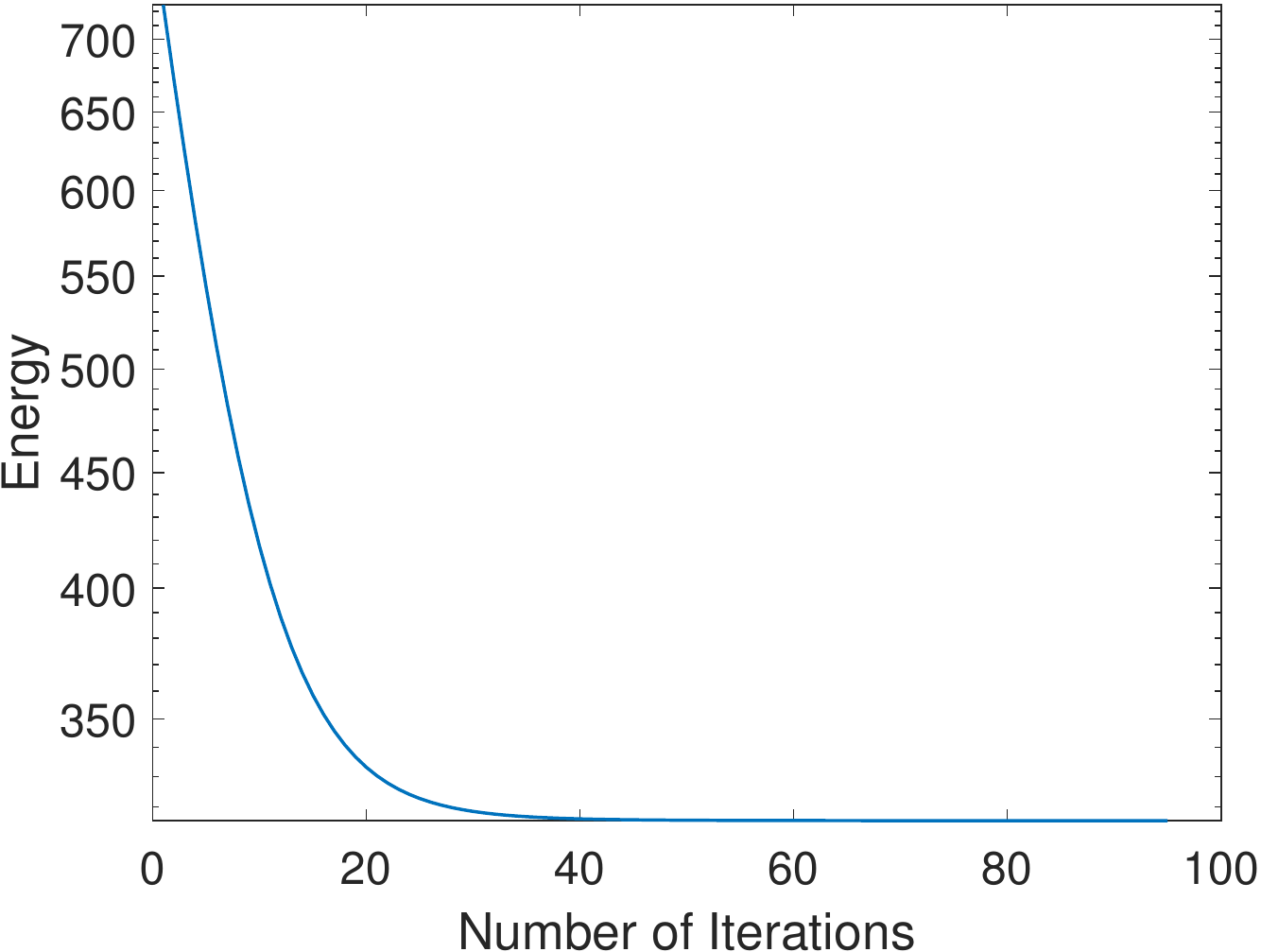}&
		\includegraphics[width=0.28\textwidth]{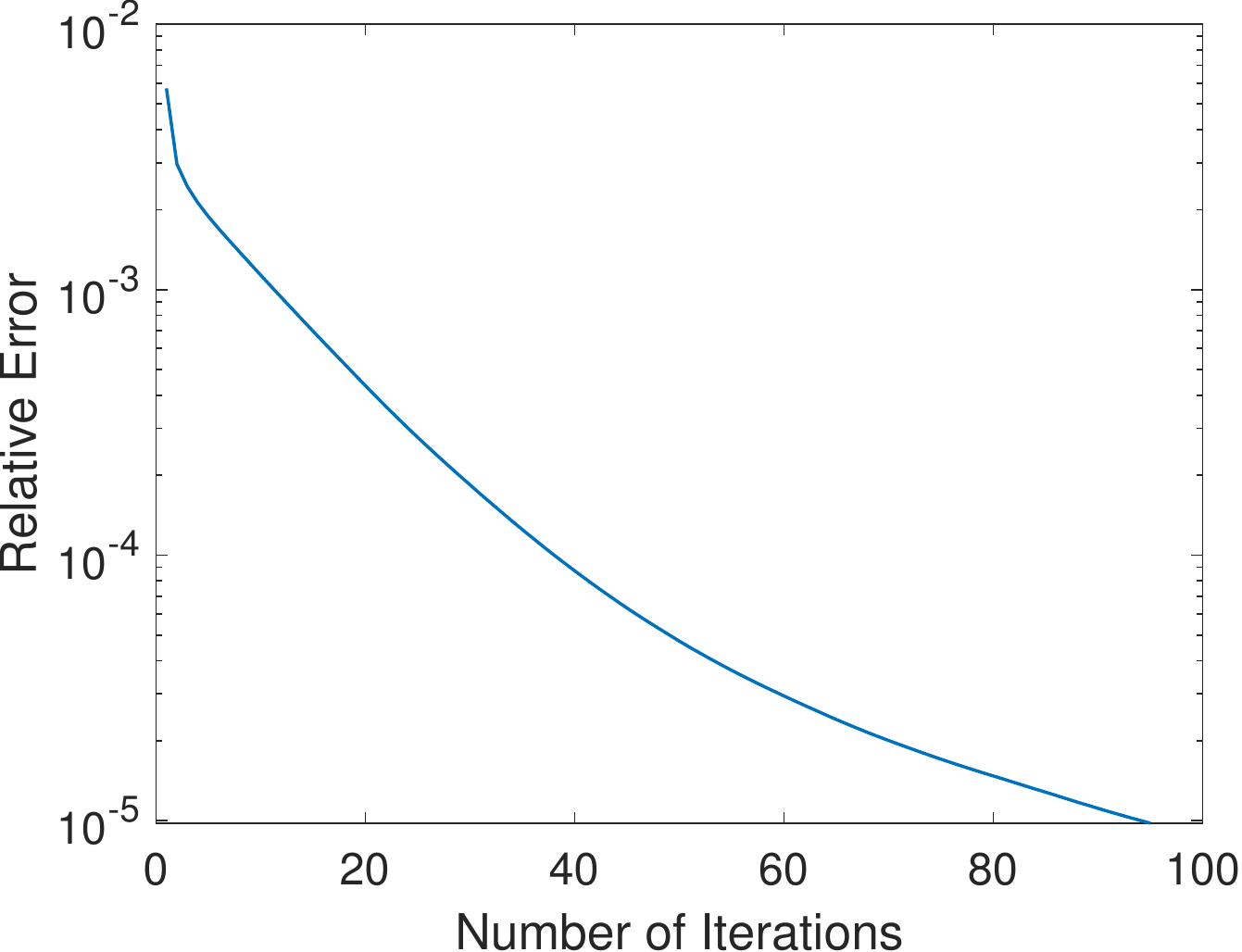}
	\end{tabular}
	\caption{(Surface smoothing.) For results in Figure \ref{fig.surf}, (a) the graph of $u-f^*$, and histories of the (b) energy and (c) relative error w.r.t. the number of iterations. The first (resp. second) row corresponds to the result in the first (resp. second) row of Figure \ref{fig.surf}(c).\vshrink}
	\label{fig.surf.err}
\end{figure}

\begin{figure}[th!]
	\begin{tabular}{cccc}
		(a) & (b) & (c) & (d)\\
		\includegraphics[trim={0.8cm 0 0.3cm 0},clip,width=0.21\textwidth]{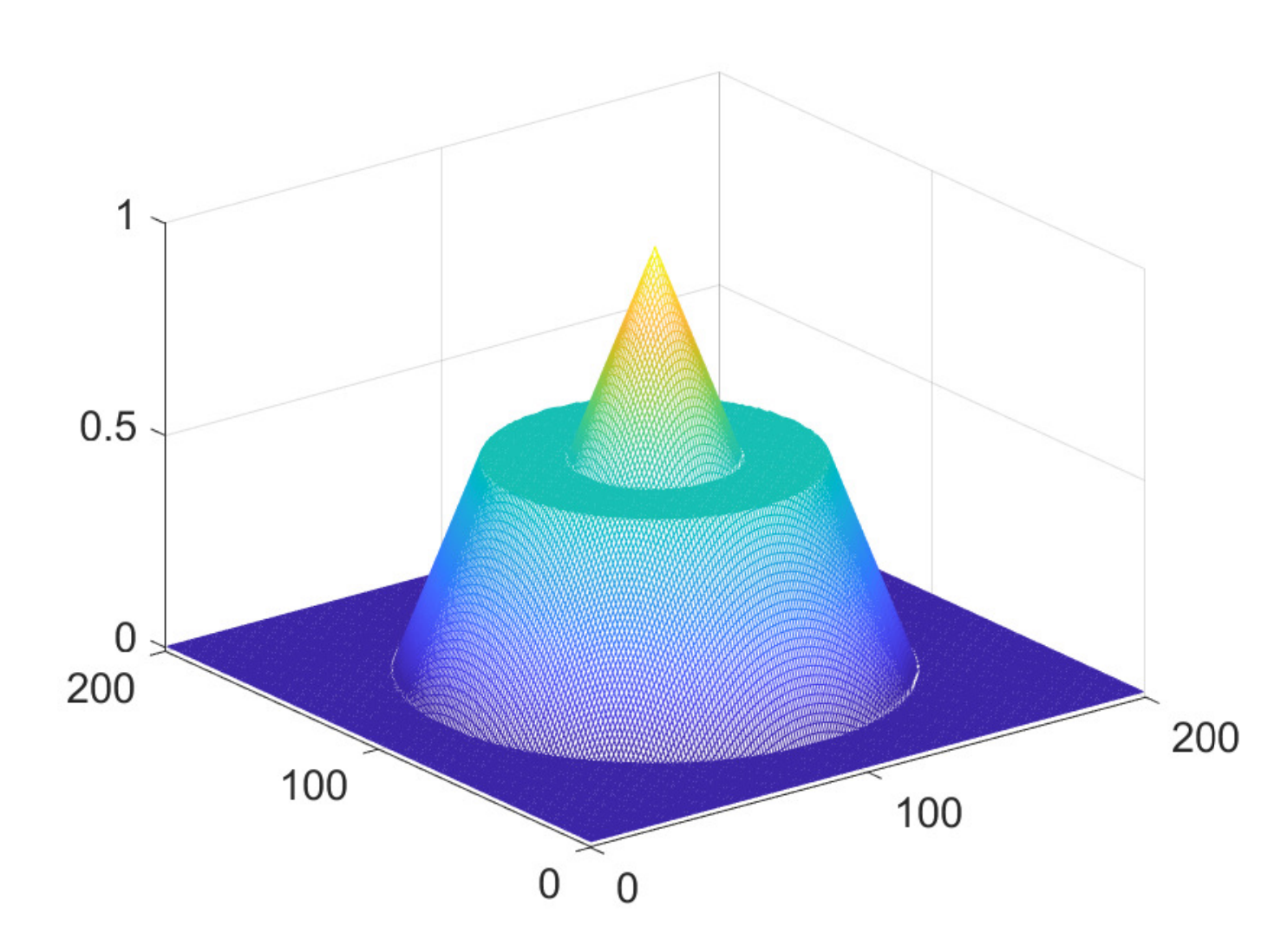}&
		\includegraphics[trim={0.8cm 0 0.3cm 0},clip,width=0.21\textwidth]{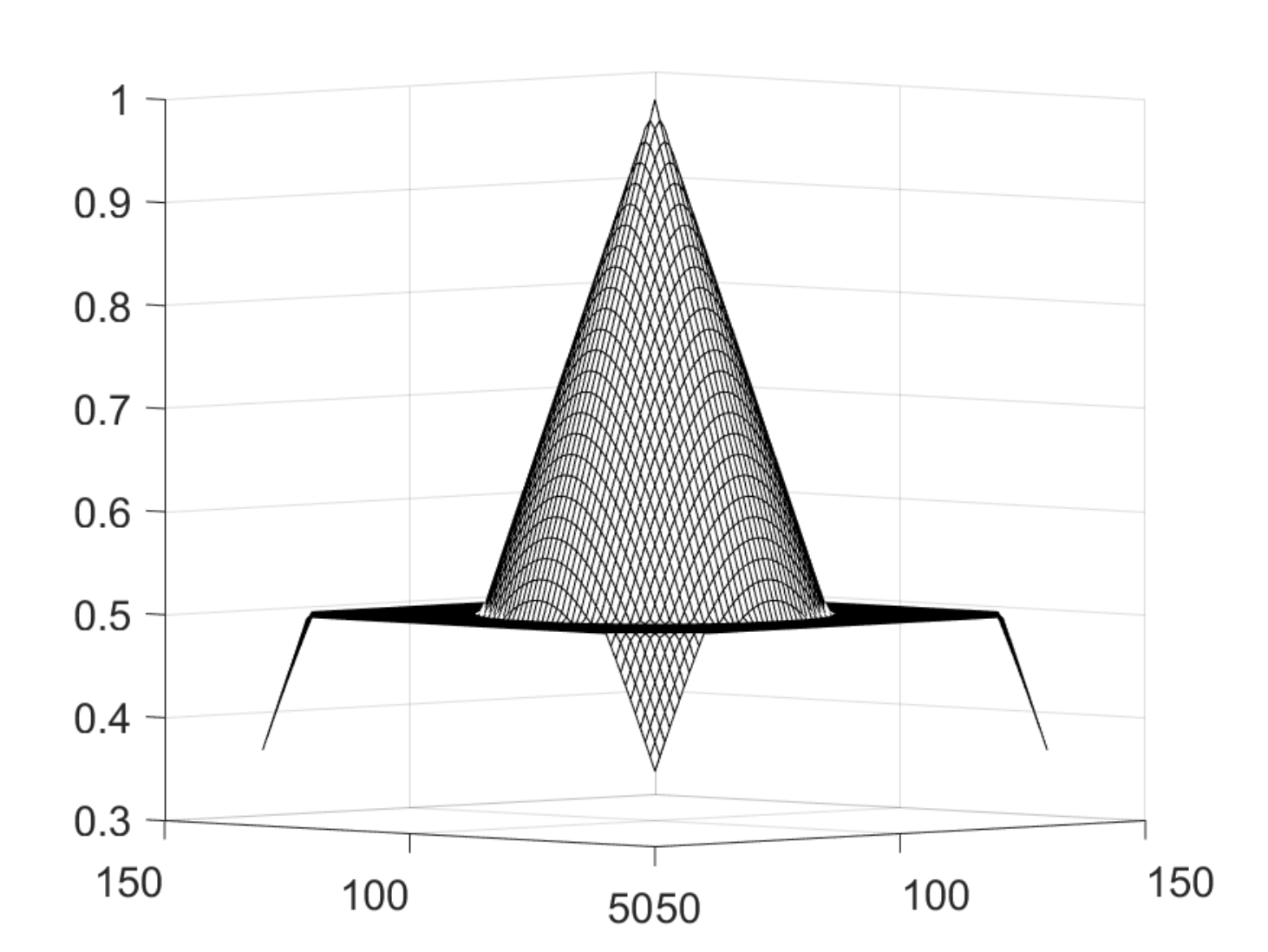}&
		\includegraphics[trim={0.8cm 0 0.3cm 0},clip,width=0.21\textwidth]{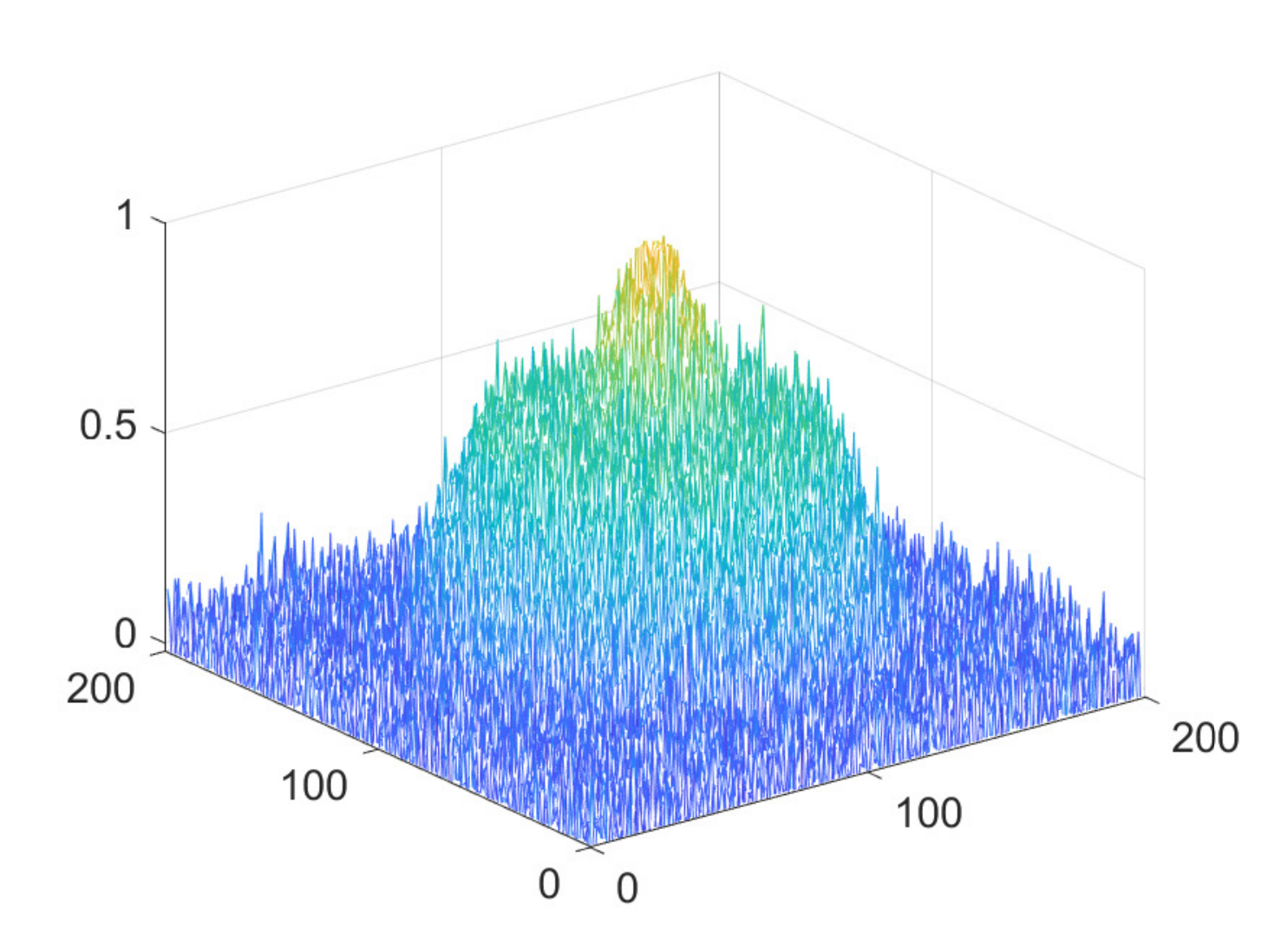}&
		\includegraphics[trim={0.8cm 0 0.3cm 0},clip,width=0.21\textwidth]{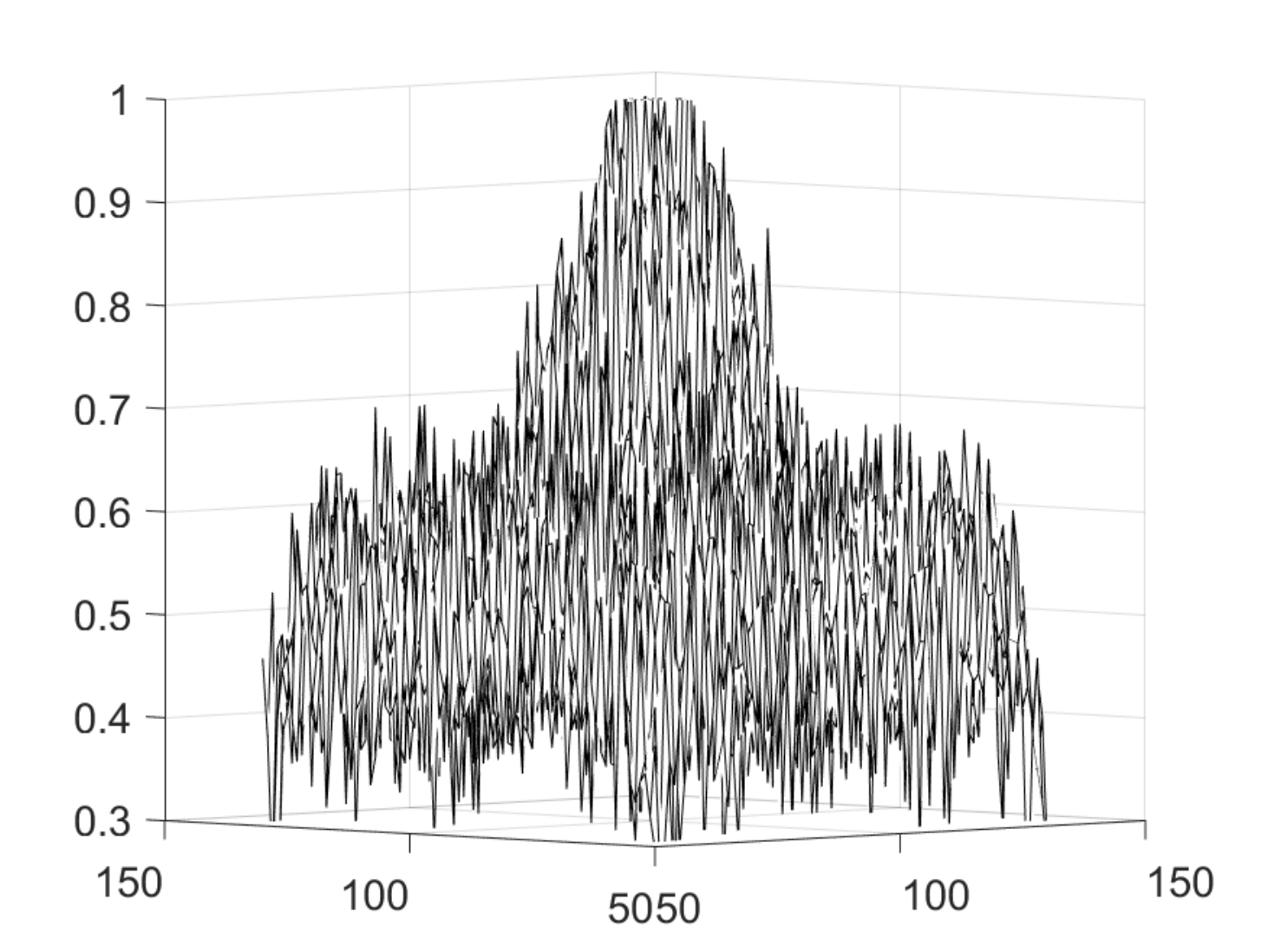}\\
		(e) & (f) & (g) & (h)\\
		\includegraphics[trim={0.8cm 0 0.3cm 0},clip,width=0.21\textwidth]{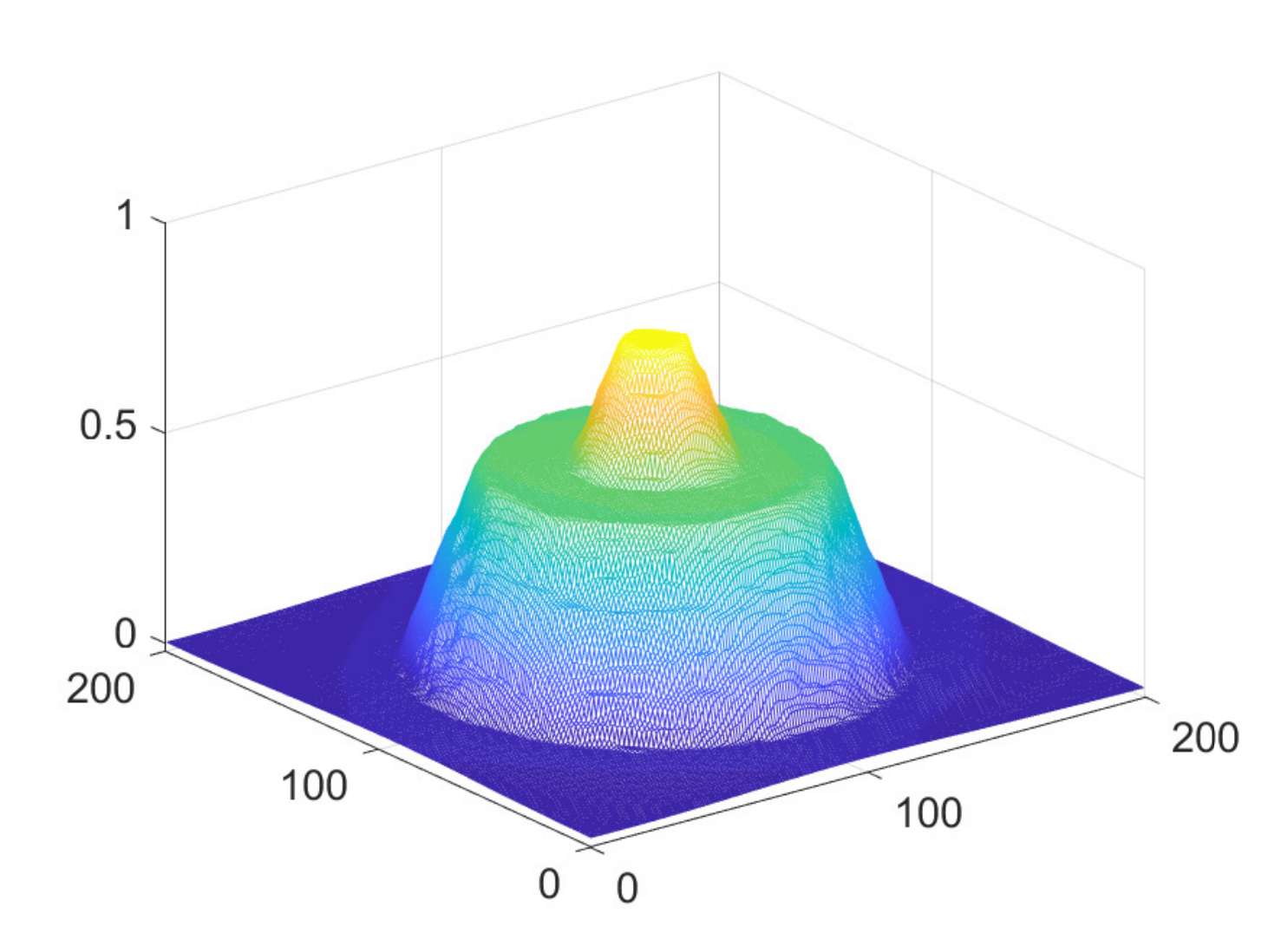}&
		\includegraphics[trim={0.8cm 0 0.3cm 0},clip,width=0.21\textwidth]{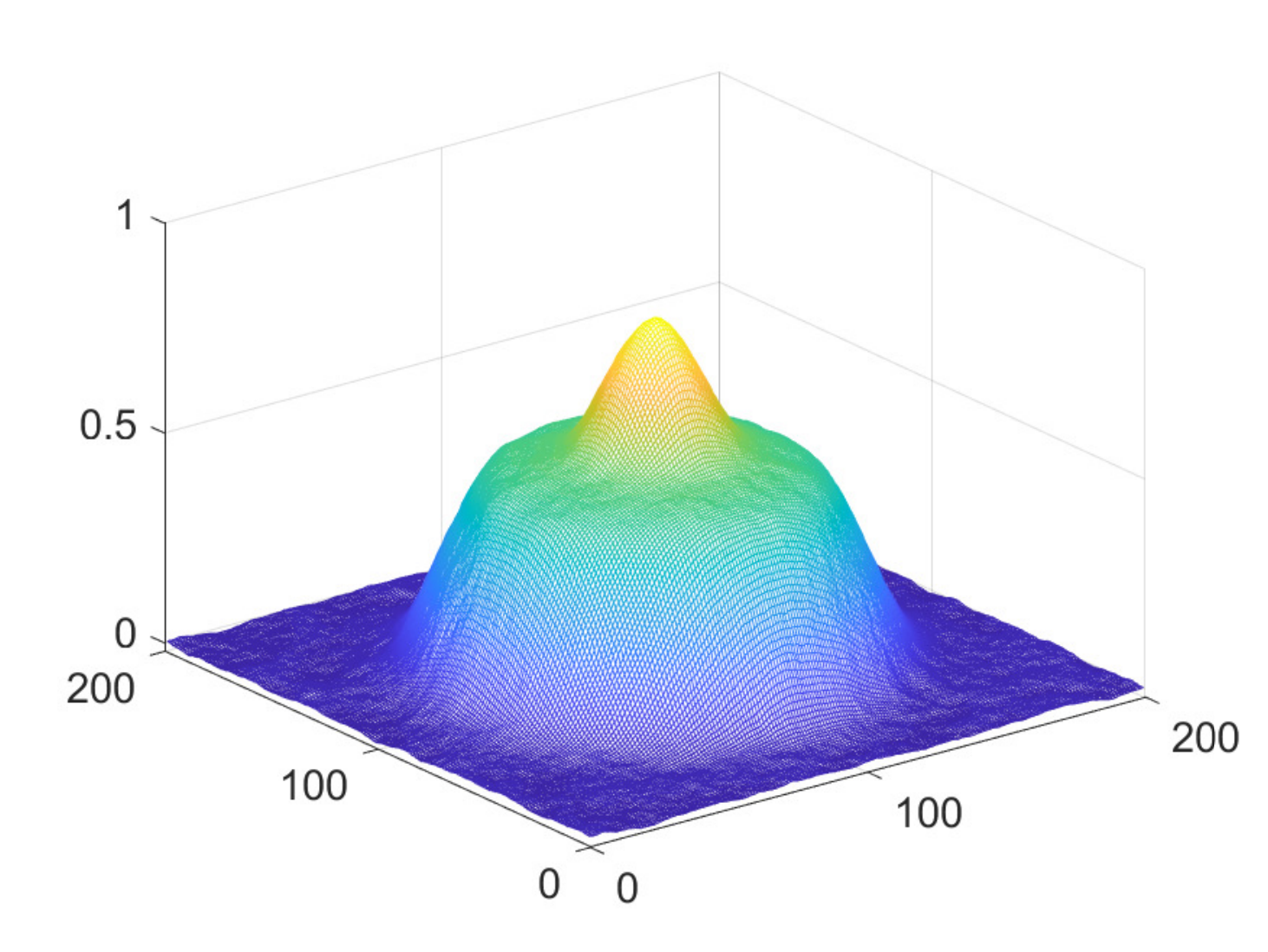}&
		\includegraphics[trim={0.8cm 0 0.3cm 0},clip,width=0.21\textwidth]{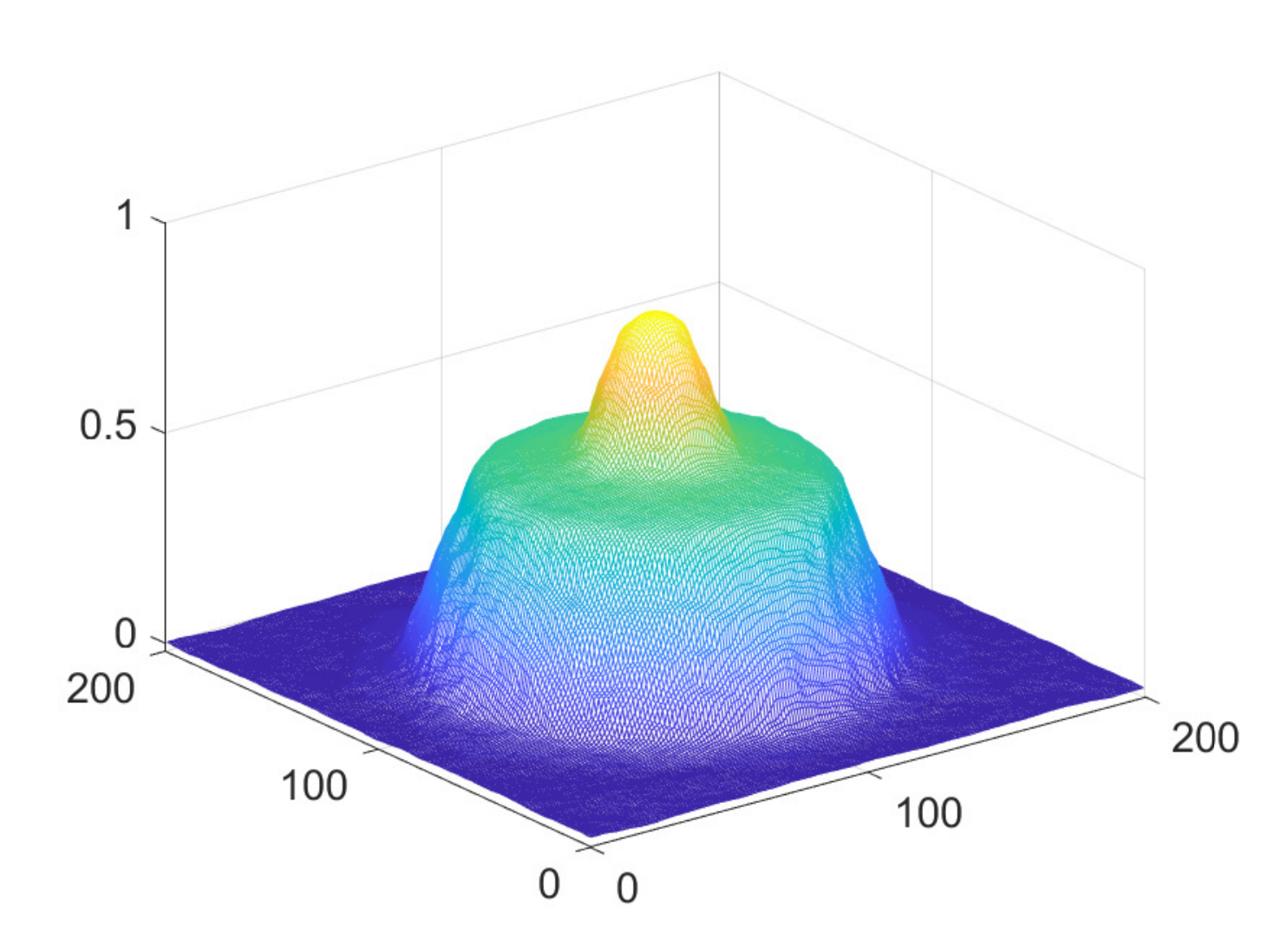}&
		\includegraphics[trim={0.8cm 0 0.3cm 0},clip,width=0.21\textwidth]{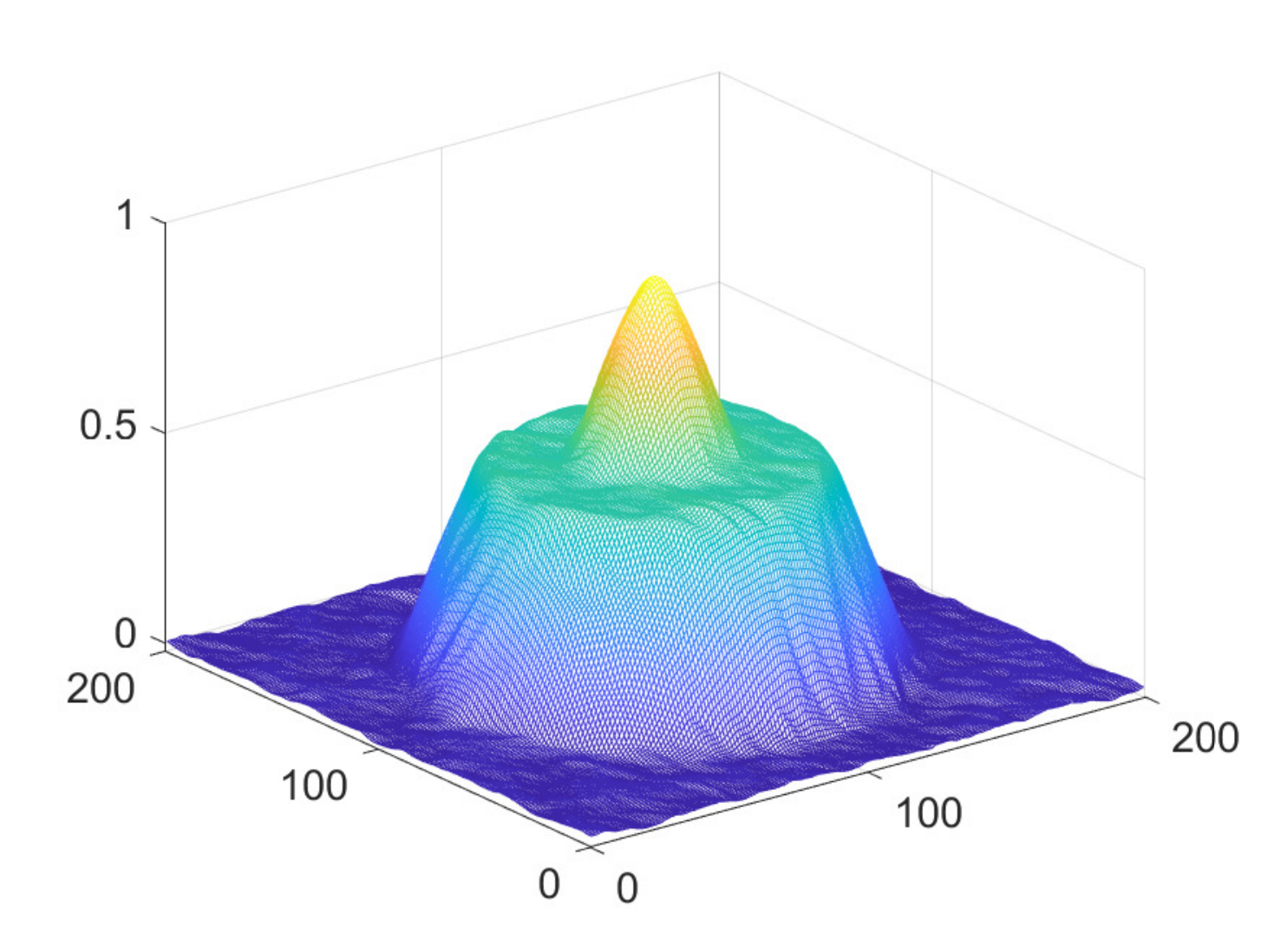}\\
		\includegraphics[trim={0.8cm 0 0.3cm 0},clip,width=0.21\textwidth]{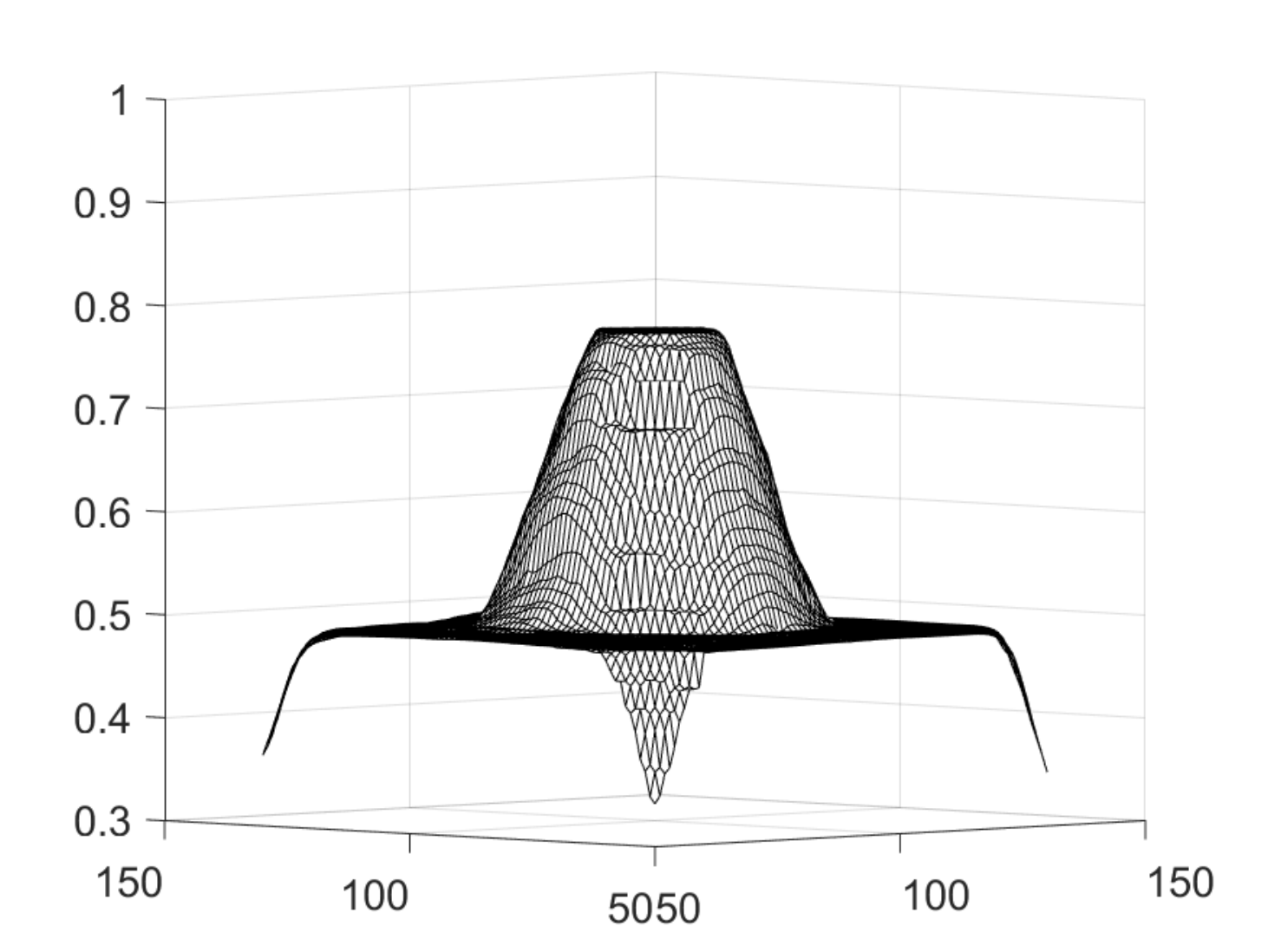}&
		\includegraphics[trim={0.8cm 0 0.3cm 0},clip,width=0.21\textwidth]{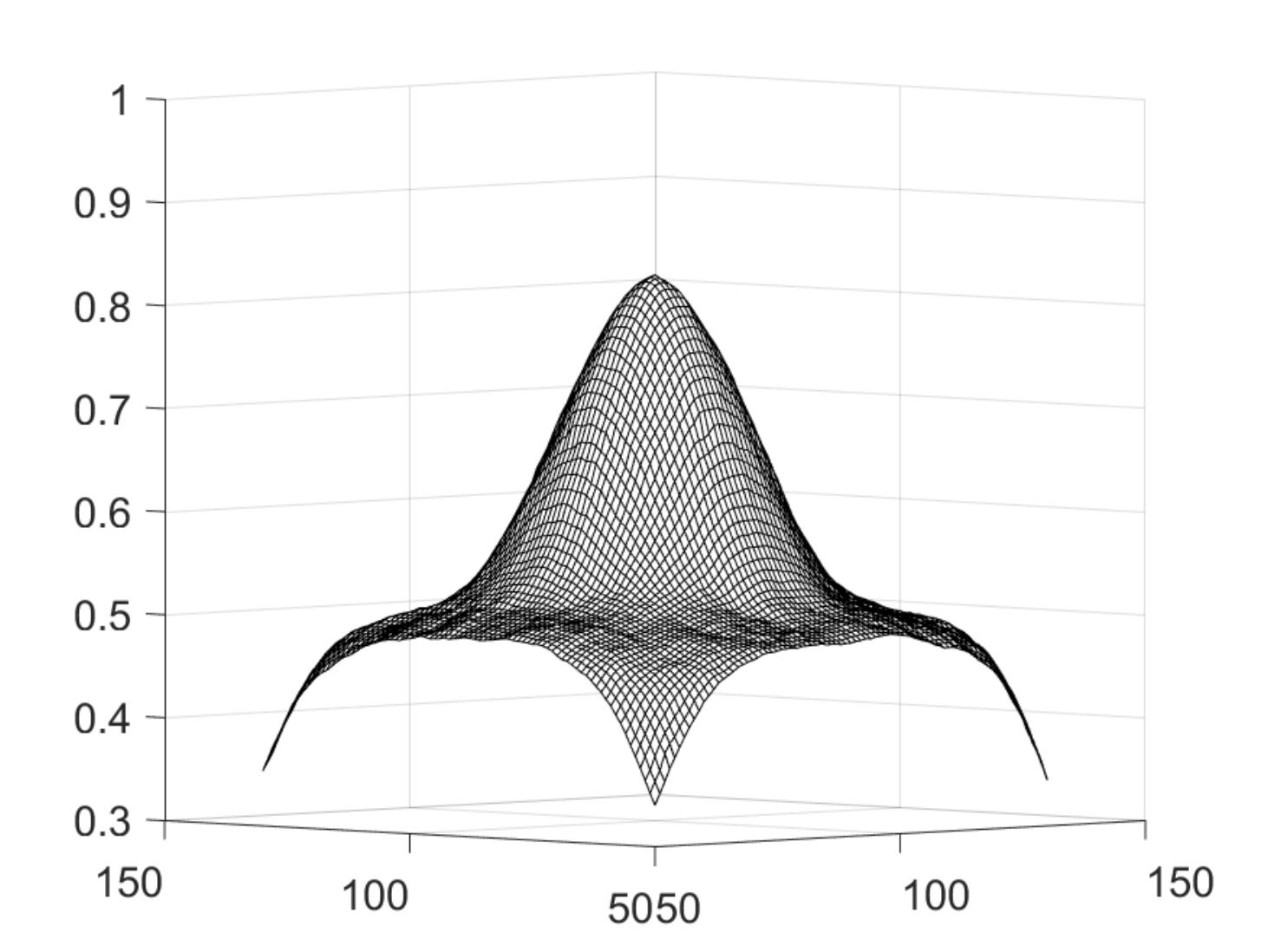}&
		\includegraphics[trim={0.8cm 0 0.3cm 0},clip,width=0.21\textwidth]{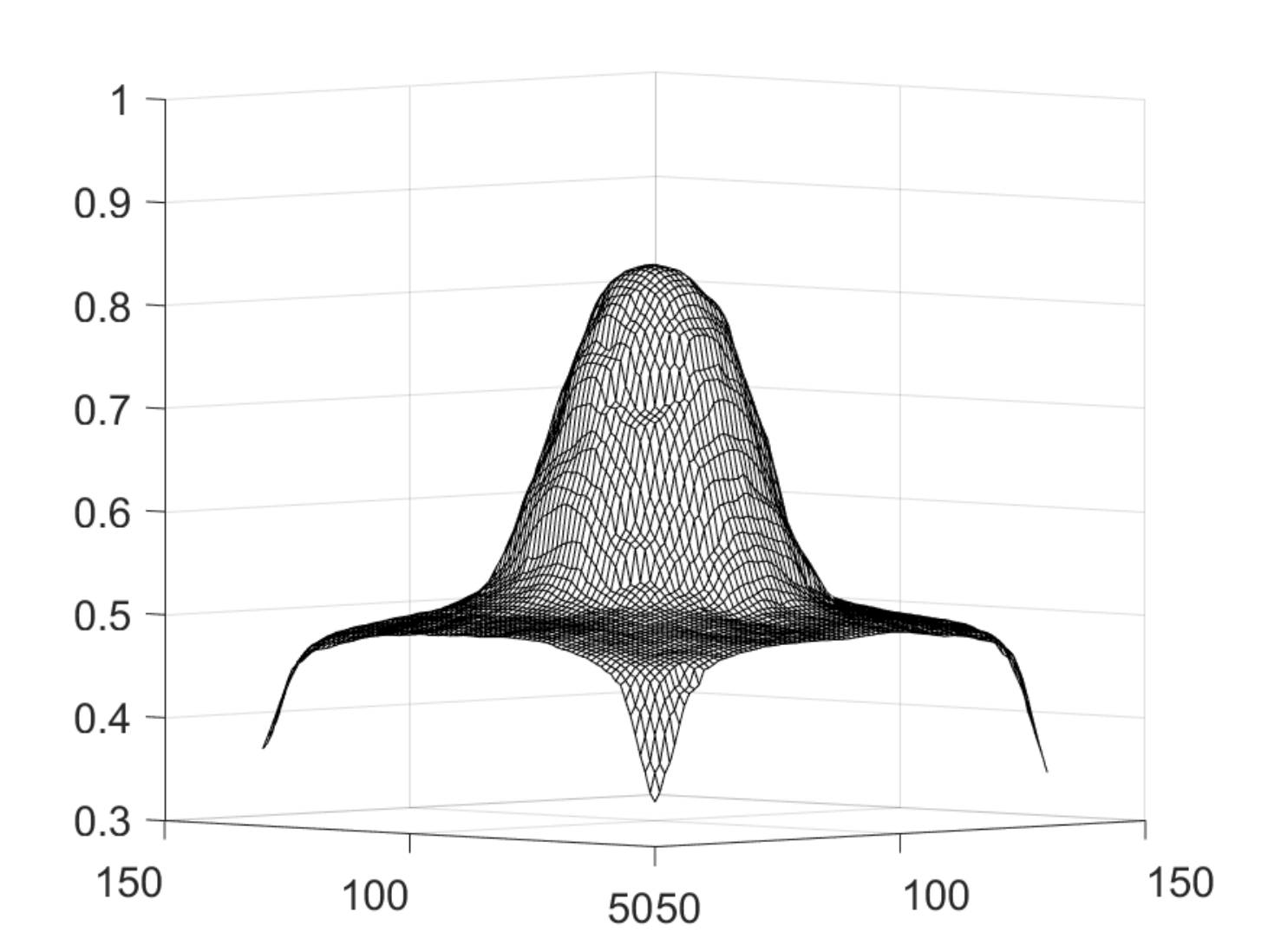}&
		\includegraphics[trim={0.8cm 0 0.3cm 0},clip,width=0.21\textwidth]{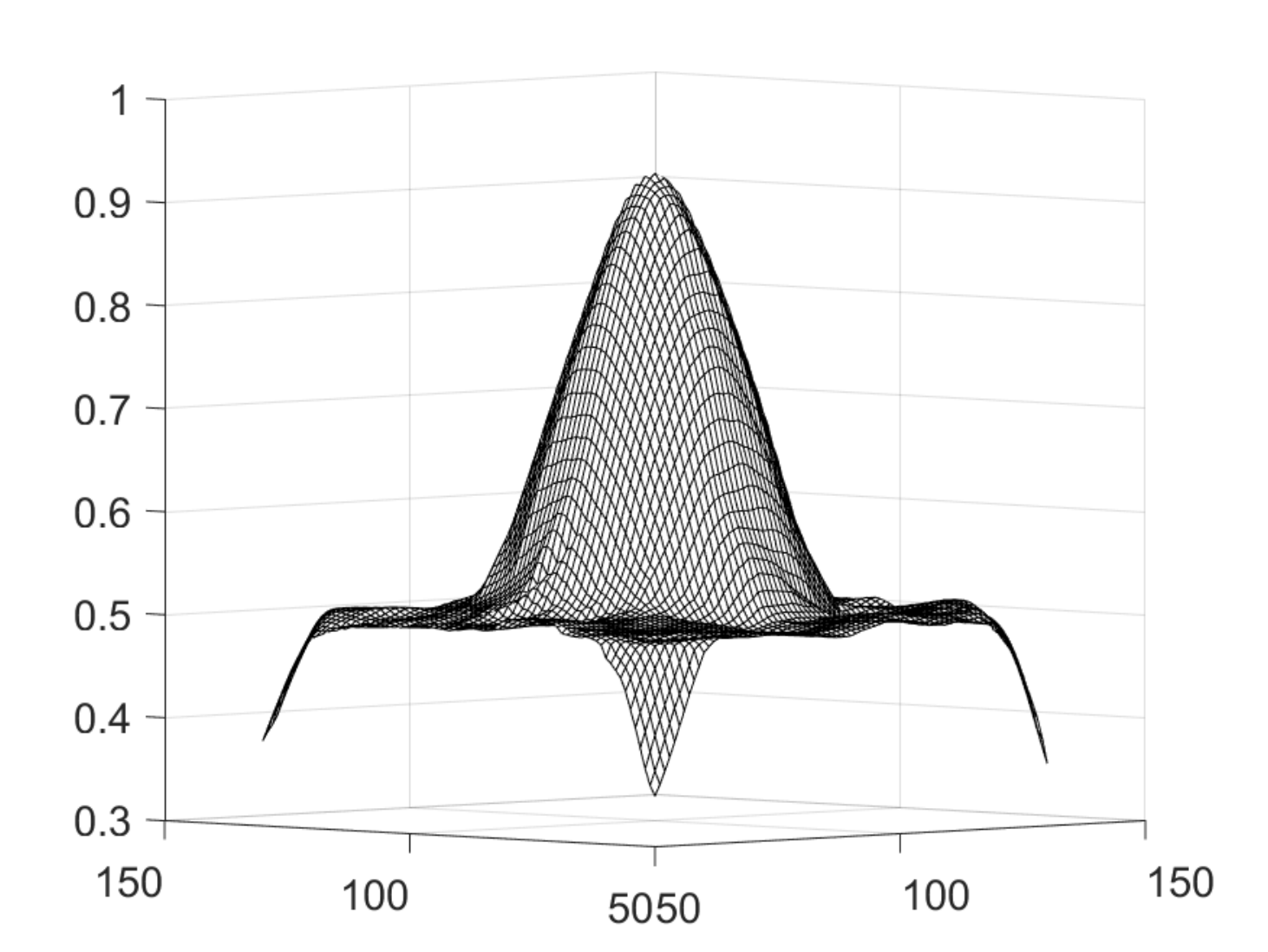}\\
		\includegraphics[trim={0.8cm 0 0.3cm 0},clip,width=0.21\textwidth]{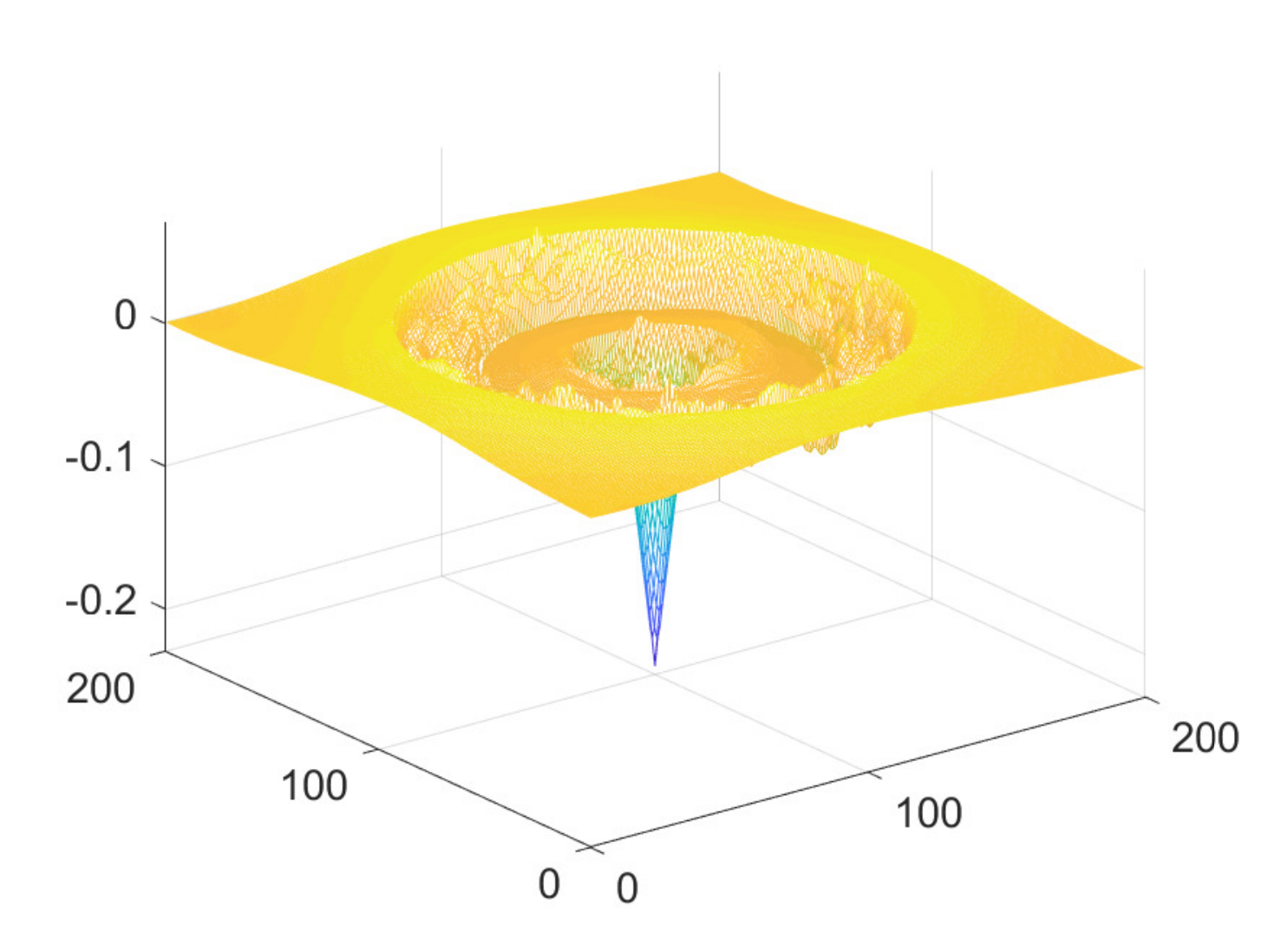}&
		\includegraphics[trim={0.8cm 0 0.3cm 0},clip,width=0.21\textwidth]{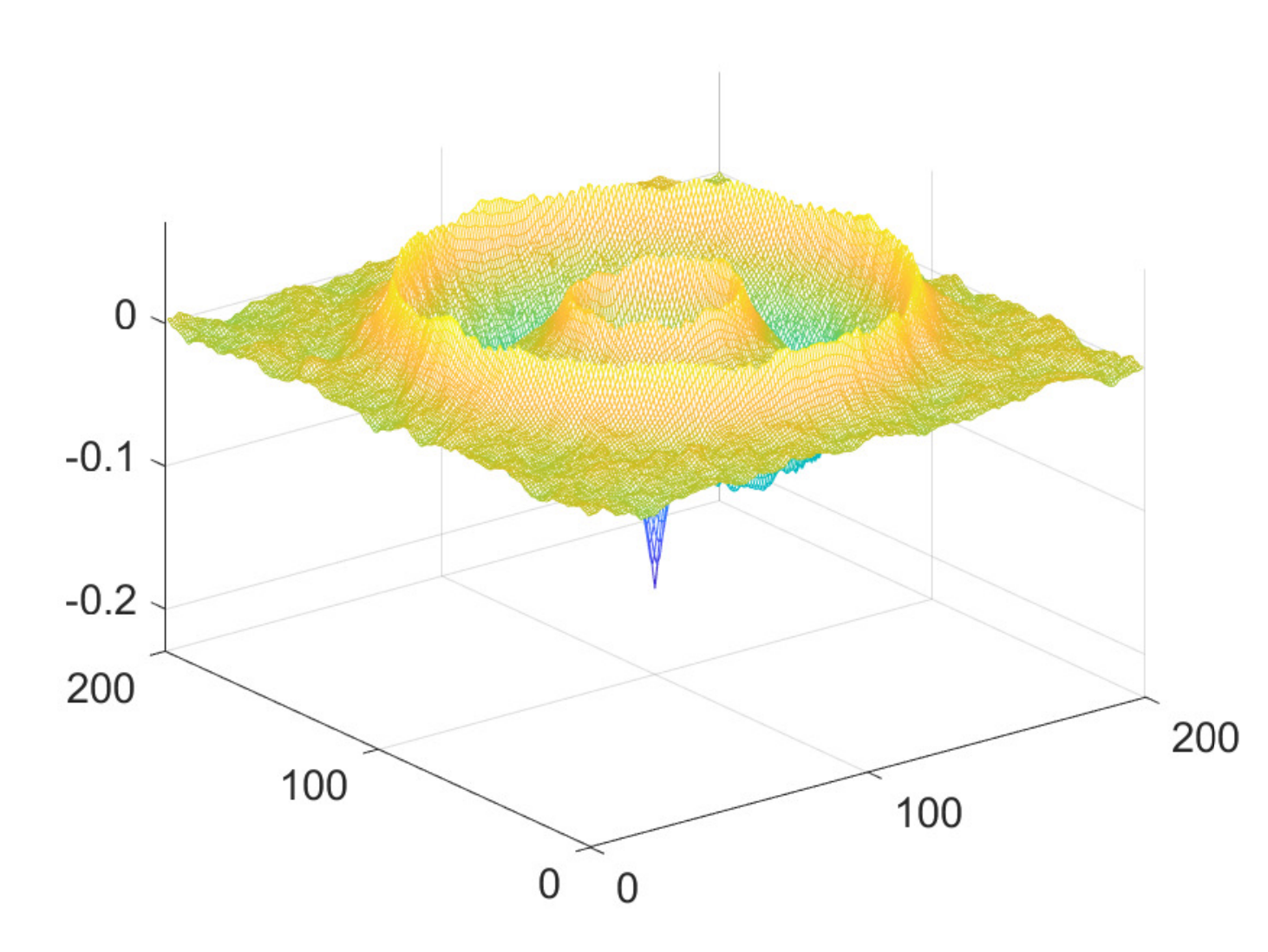}&
		\includegraphics[trim={0.8cm 0 0.3cm 0},clip,width=0.21\textwidth]{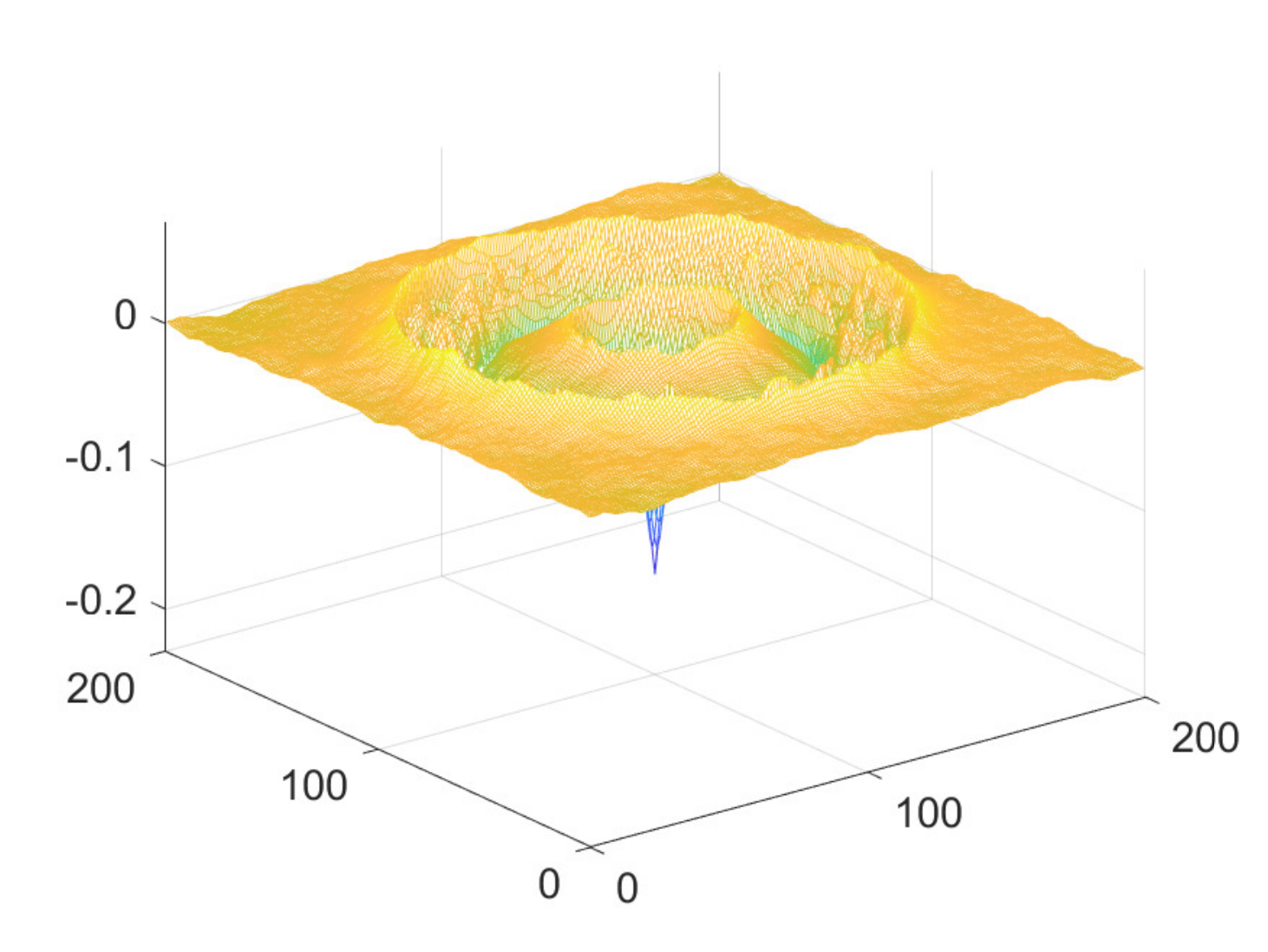}&
		\includegraphics[trim={0.8cm 0 0.3cm 0},clip,width=0.21\textwidth]{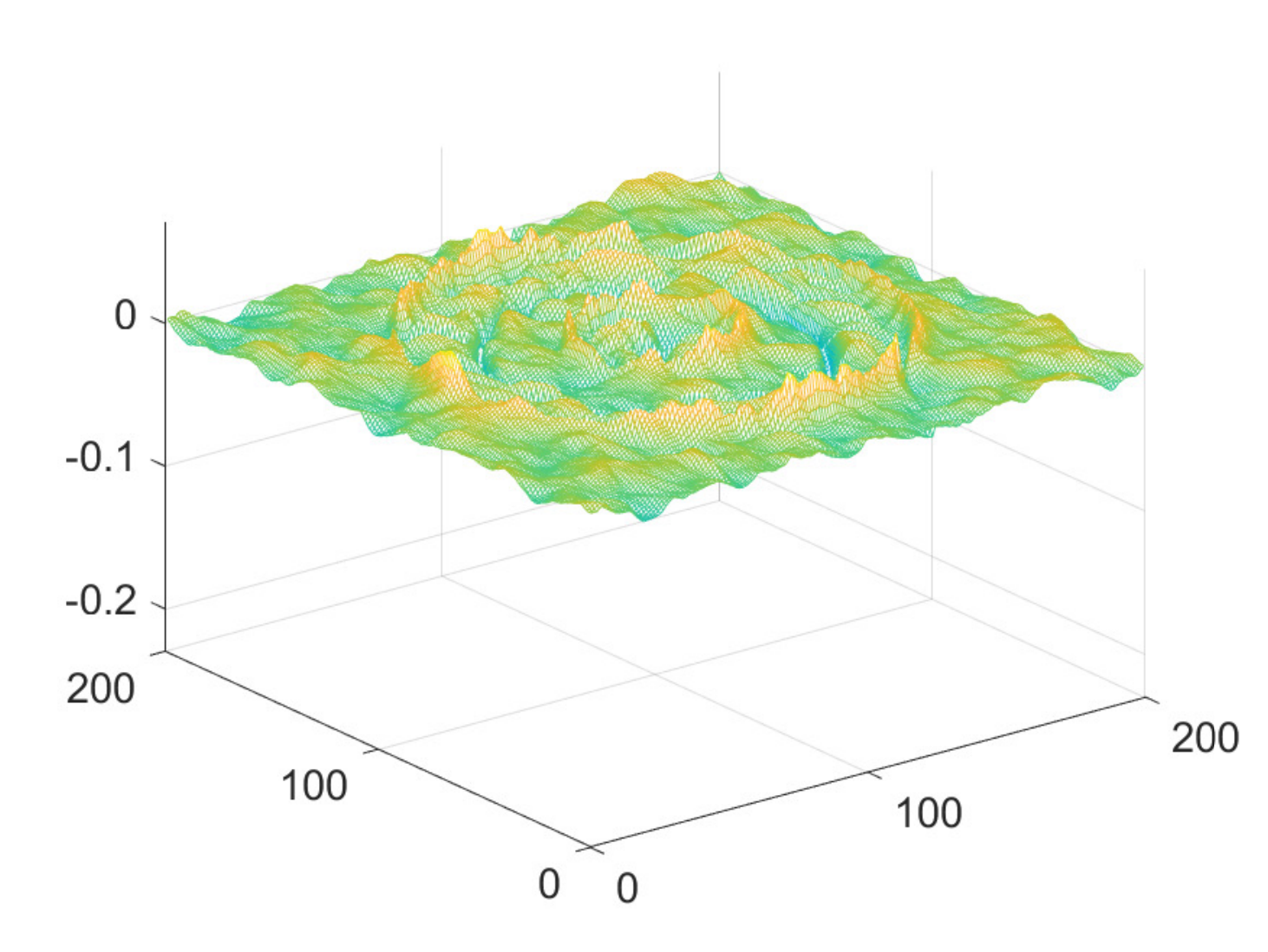}\\

	\end{tabular}
	\caption{(Comparison with other models on surface smoothing.) Comparison of the proposed model with the TV model and Euler's elastica model on smoothing a piecewise developable surface. (a) The clean surface. (c) The noisy surface with $\sigma=0.005$. (b) and (d) The central region of (a) and (c), respectively. (e) Results by the TV model with $\eta=0.5$ (the coefficient of the total variation term). (f) Results by Euler's elastica model with $a=b=0.4$. (g) Results by the proposed model with $\alpha=0.3,\beta=1$. (h) Results by the proposed model with $\alpha=5\times 10^{-5}$ and $\beta=10^3$. The second row shows the smoothed surfaces. The third row shows the plot of the central region. The forth row shows the graph of $u-f^*$. \vshrink
	}
	\label{fig.developable}
\end{figure}
\section{Numerical experiments}\label{sec.experiments}
We demonstrate the effectiveness of the proposed method through several experiments on surface smoothing and image denoising. All experiments are implemented in
MATLAB(R2018b) on a laptop of 8GB RAM and Intel Core i7-4270HQ CPU: 2.60 GHz. In our experiments, $\gamma=1$ and $h=1$ are used. For the scheme (\ref{eq.split.1.dis})-(\ref{eq.split.5.dis}), we adopt the initial condition (\ref{eq.initial1}) and stopping criterion on the relative error
$ \|u^{n+1}-u^n\|_{2}/\|u^{n+1}\|_{2}\leq tol$
for some small $tol>0$. In this paper, without specification, $tol=10^{-5}$ is used, and the fixed point method (\ref{eq.p1.fix1})-(\ref{eq.p1.fix3}) is used to compute $\bp^{n+1/4}$.  When computing $\bp^{n+1/4}$ and $\bH^{n+1/4}$, we set $\xi_1=\xi_2=10^{-5}, \rho_1=\rho_2=0.8$ for (\ref{eq.p1.fix1})-(\ref{eq.p1.fix3}) and (\ref{eq.alterH.1})-(\ref{eq.alterH.end}). This article considers Gaussian noise whose magnitude is controlled by its variance, denoted by $\sigma$. Our code is available at the homepage of the first author\footnote{\url{https://www.math.hkbu.edu.hk/~haoliu/code.html}}.

\begin{remark}
	Although there are several parameters in the proposed method, these parameters can be adjusted easily and the performance of the method is not sensitive to their values. Specifically, $\xi_1$ are $\xi_2$ are stopping criteria of the iterative methods computing $\bp^{1+1/4}$ and $\bH^{1+1/4}$, respectively. $\rho_1$ and $\rho_2$ are parameters controlling the evolution speed of $\bq$ and $\bG$ when computing $\bp^{1+1/4}$ and $\bH^{1+1/4}$. Parameter $\gamma$ controls the evolution speed of $\bp$. The proposed method converges as long as these parameters are small enough.
\end{remark}

{
	\begin{remark}
		As discussed in Section \ref{sec.p2} and \ref{sec.H2}, the subiterations (\ref{eq.p1.fix1})--(\ref{eq.p1.fix3}) and (\ref{eq.alterH.1})--(\ref{eq.alterH.end}) are expected to fast converge  with initial guess $\bp^n$ and $\bH^n$. In all our experiments with the choice of parameters mentioned above, in each outer iteration, most of the subiterations only require less than 10 iterations to satisfy the stopping criterion.
	\end{remark}
}
\subsection{Surface smoothing}
The first problem we use to demonstrate the effectiveness of the proposed algorithm is surface smoothing. We consider the clean surfaces defined on a $200\times 200$ grid shown in Figure \ref{fig.surf}(a). The noisy surfaces are constructed by adding Gaussian noise with $\sigma=10^{-4}$ and are shown in (b). In our algorithm, we set $\alpha=1, \beta=0.1,\tau=0.01$ and $tol=10^{-5}$. We present the smoothed surfaces in Figure \ref{fig.surf}(c). The difference between the smoothed surfaces $u$ and the clean surfaces $f^*$ are shown in Figure \ref{fig.surf.err}(a). The smoothed surfaces are close to the clean surfaces with small mistaches. To demonstrate the efficiency of the proposed method, we present the histories of the energy and relative error with respect to the number of iterations in Figure \ref{fig.surf.err}(b) and (c), respectively.  For both examples, the energy achieves its minimum with about 40 iterations. Sublinear convergence is observed for the relative error.

We next compare the proposed model with the TV model \cite{rudin1992nonlinear,chambolle2011first} and Euler's elastica model \cite{deng2019new} for the smoothing of a developable surface shown in Figure \ref{fig.developable}(a). The noisy surface is shown in Figure \ref{fig.developable}(c). The plot of the central region of the clean and noisy surfaces are shown in (b) and (d), respectively.  In this set of experiments, we run the algorithm of each model until converge. The smoothed surfaces by the TV model with $\eta=0.5$ (the coefficient of the total variation term), Euler's elastica model with $a=b=0.4$  and the proposed model with $\alpha=0.3,\beta=1$ are shown in (e)-(g), respectively. Since the surface is developable, Gaussian curvature is a perfect regularizer.  Figure \ref{fig.developable}(h) presents the results by the proposed  model with $\alpha=5\times 10^{-5}$ and $\beta=10^3$. Under this setting, the Gaussian curvature dominates the proposed functional in (\ref{eq.model}). For better visualization of the difference, the graph of the central region of the smoothed surfaces and the difference $u-f^*$ are presented in the third and forth row, respectively. 
In this comparison, staircase effects are observed in the result by the TV model: the peak in the smoothed surface is flattened. While the peak is kept in the result of Euler's elastica model, it is smoothed a lot. The central flat region in this result is also smoothed and no longer flat. By the proposed model, the flat region is retained and the peak is recovered well. As shown in the forth row, the proposed model gives results with the smallest mismatch.
To quantify the difference $u-f^*$, we report the  errors $\|u-f^*\|_1$ and $\|u-f^*\|_{\infty}$ in Table \ref{tab.developable}. Results by the propose model give smaller errors than those of the other two models.  Under the choice of parameters in (h), the Gaussian curvature dominates the functional (\ref{eq.model}). The surface $f^*$ in this experiment is expected to be close to the global minimum of the functional since $f^*$ is piecewise developable. From Table \ref{tab.developable}, the result of (h)  has a very small $L^{\infty}$ error, i.e., it is very close to $f^*$. Therefore the proposed method provides a result that is close to the global minimum.

\begin{table}[th!]
	\centering
	\begin{tabular}{c|c|c|c|c}
		\hline
		Results in Figure \ref{fig.developable} & (e)&(f) & (g)& (h)\\
		\hline
		$\|u-f^*\|_1$ & 482.08 & 565.97 &345.35 &206.04\\
		\hline
		$\|u-f^*\|_{\infty}$ &0.2244&0.1701 &0.1600& 0.0717\\
		\hline
	\end{tabular}
	\caption{(Comparison with other models on surface smoothing.) Comparison of the errors $\|u-f^*\|_1$ and $\|u-f^*\|_{\infty}$ of results in Figure \ref{fig.developable}. (e) Result by the TV model. (f) Result by Euler's elastica model. (g)-(h) Results by the proposed model. \vshrink}
	\label{tab.developable}
\end{table}

\begin{figure}[t!]
	\begin{tabular}{ccc}
		\includegraphics[width=0.28\textwidth]{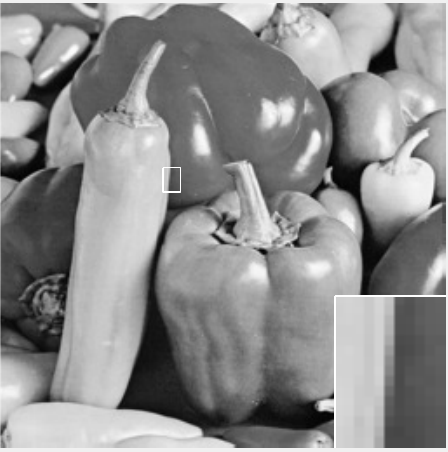}&
		\includegraphics[width=0.28\textwidth]{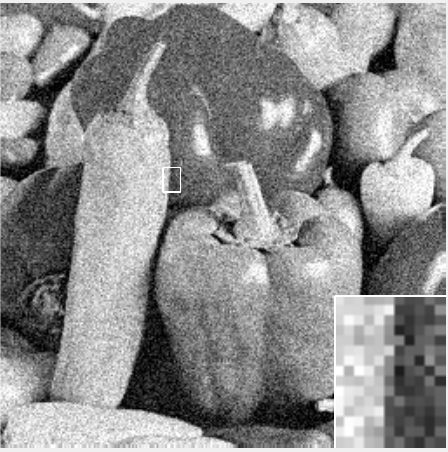}&
		\includegraphics[width=0.28\textwidth]{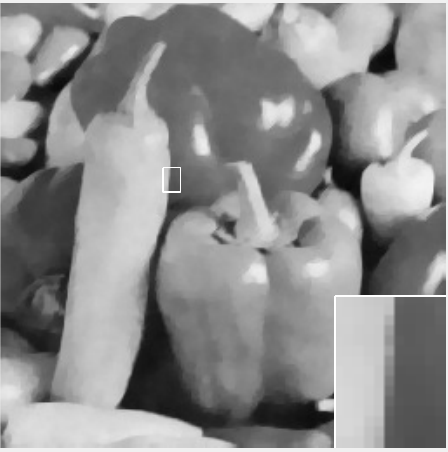}\\
		\includegraphics[width=0.28\textwidth]{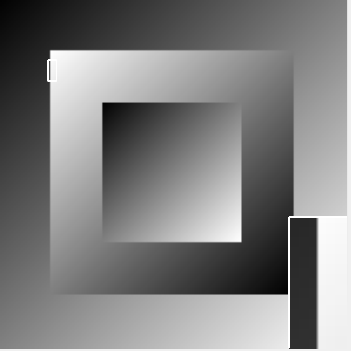}&
		\includegraphics[width=0.28\textwidth]{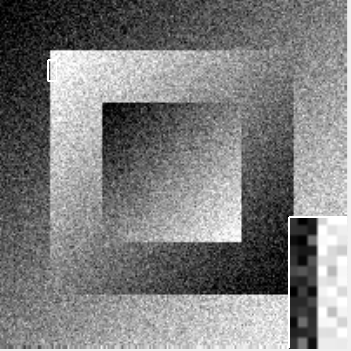}&
		\includegraphics[width=0.28\textwidth]{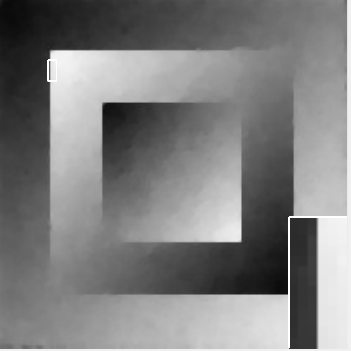}
	\end{tabular}
	\caption{(Gaussian noise with $\sigma=0.01$.) Denoised images by the proposed model with $\alpha=0.2,\beta=0.6$. First column: Clean images. Second column: Noisy images. Third column: Denoised images. \vshrink}
	\label{fig.G001}
\end{figure}

\begin{figure}[t!]
	\begin{tabular}{cccc}
		(a) & (b) & (c) &(d)\\
		\includegraphics[width=0.21\textwidth]{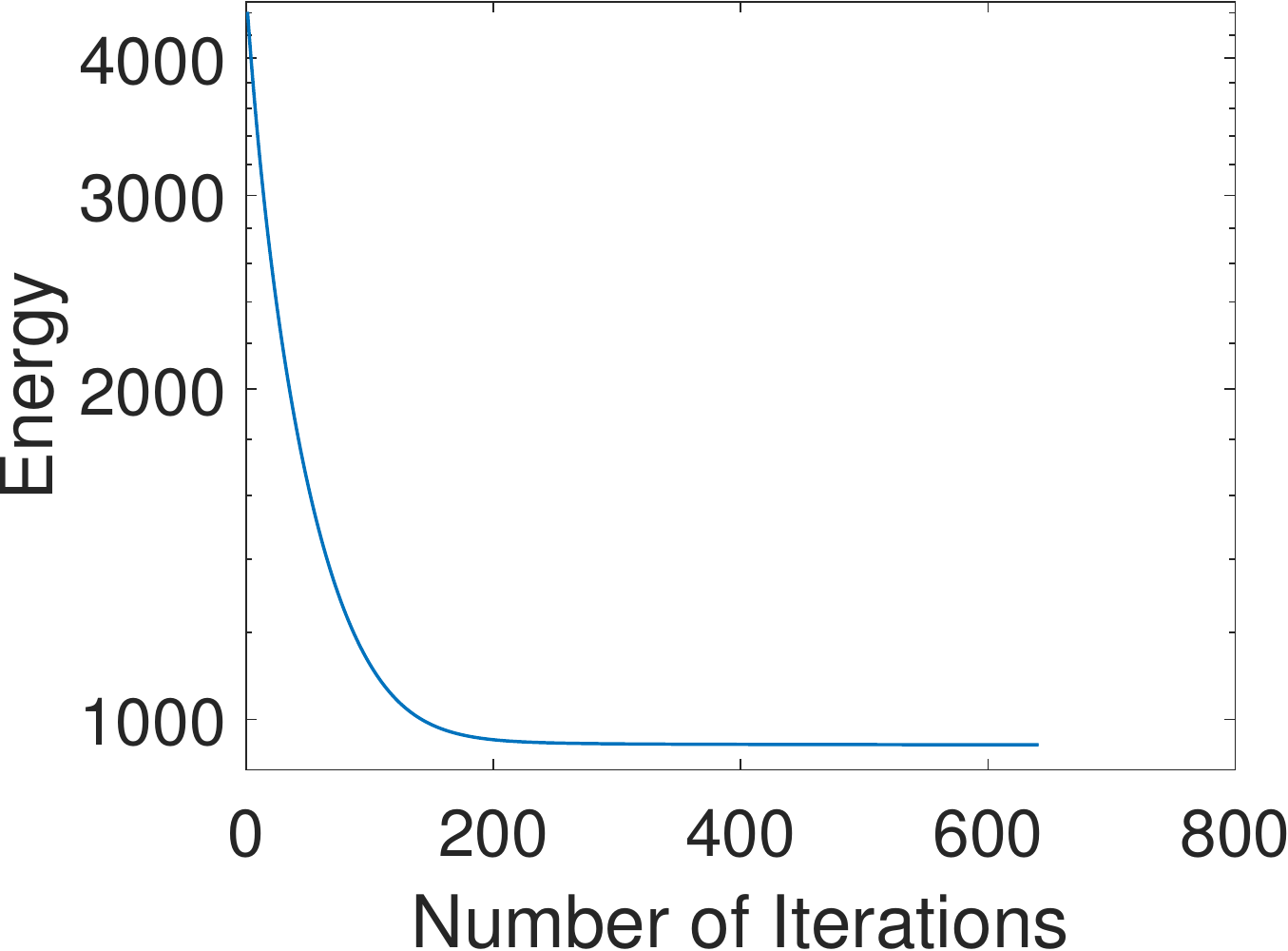}&
		\includegraphics[width=0.21\textwidth]{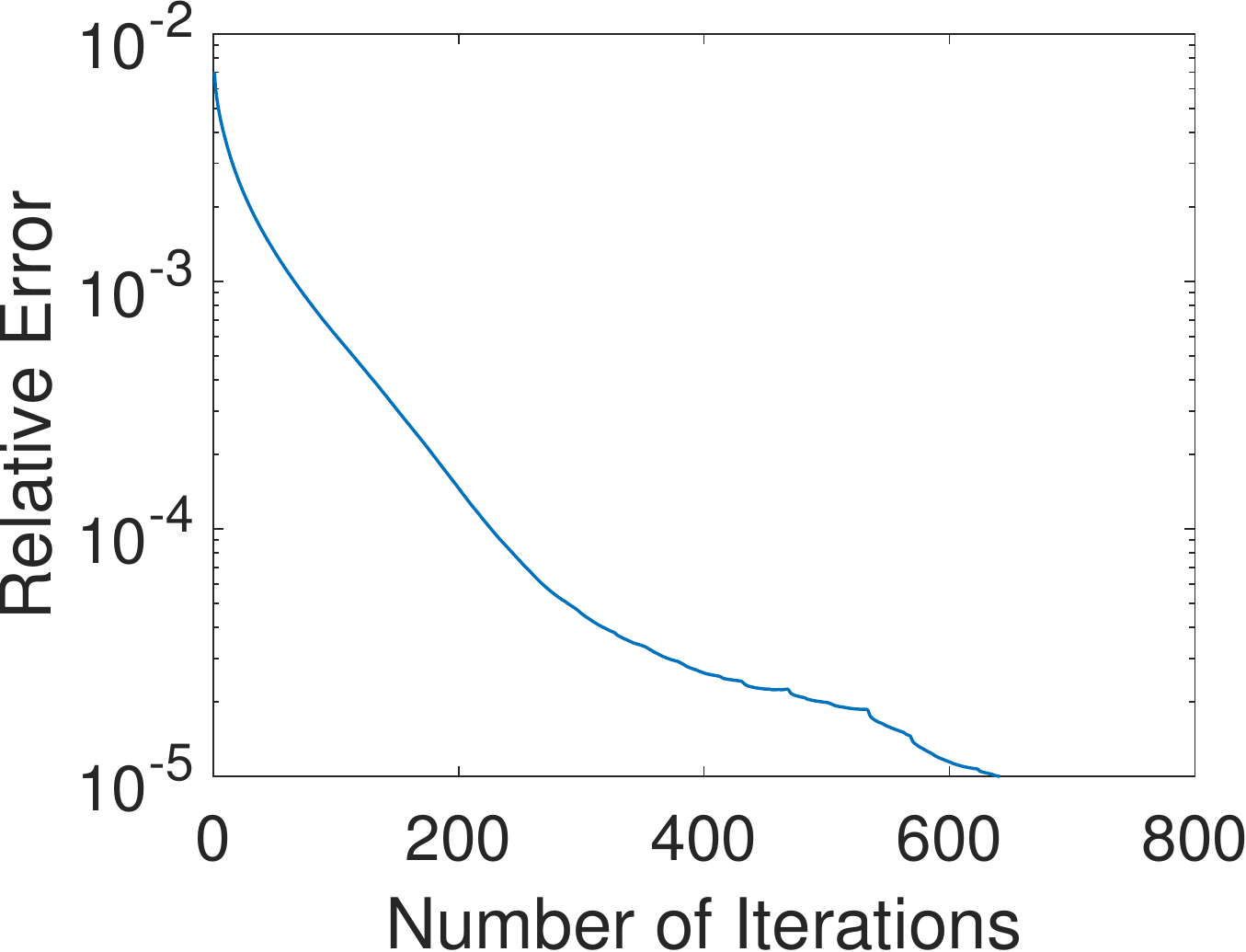}&
		\includegraphics[width=0.21\textwidth]{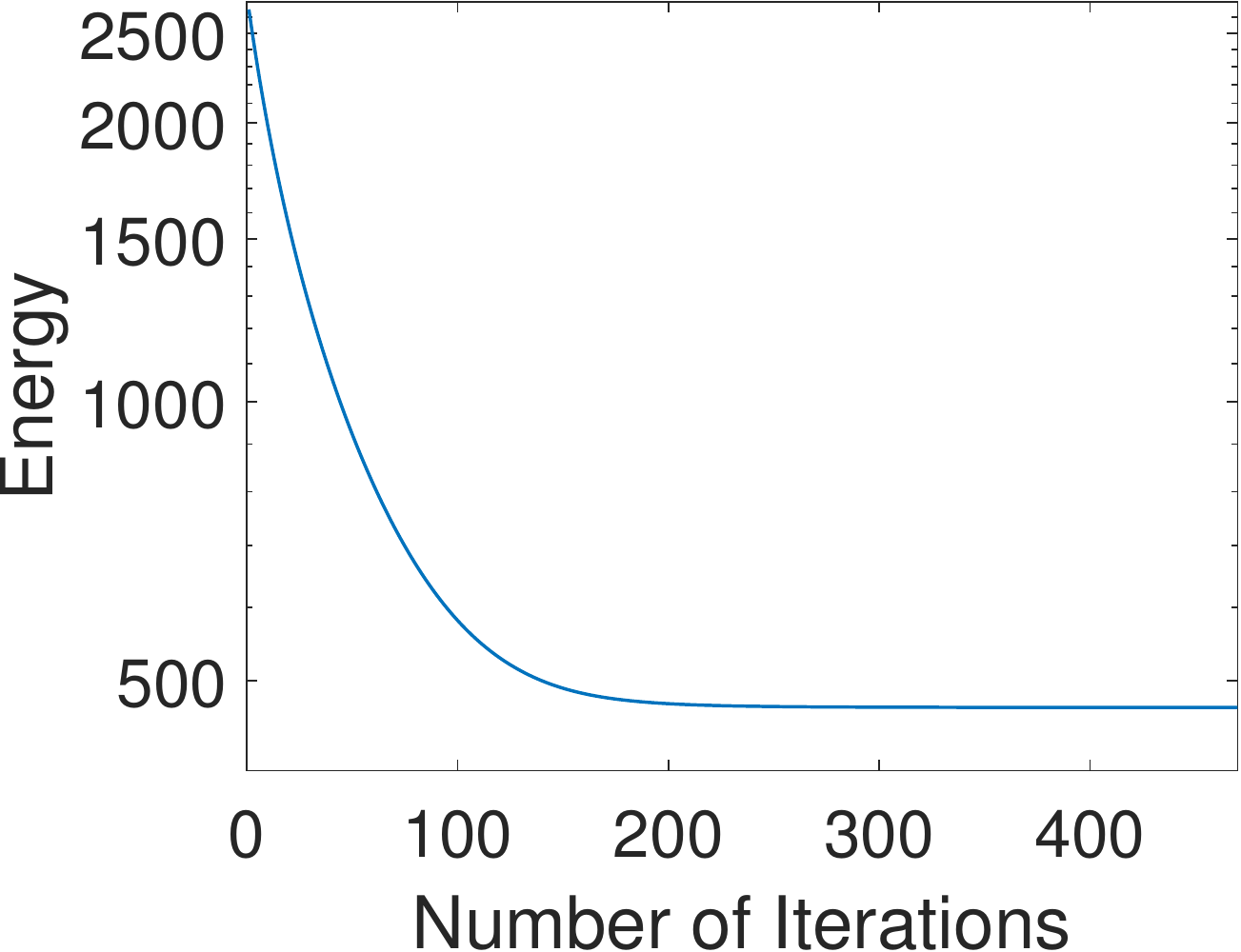}&
		\includegraphics[width=0.21\textwidth]{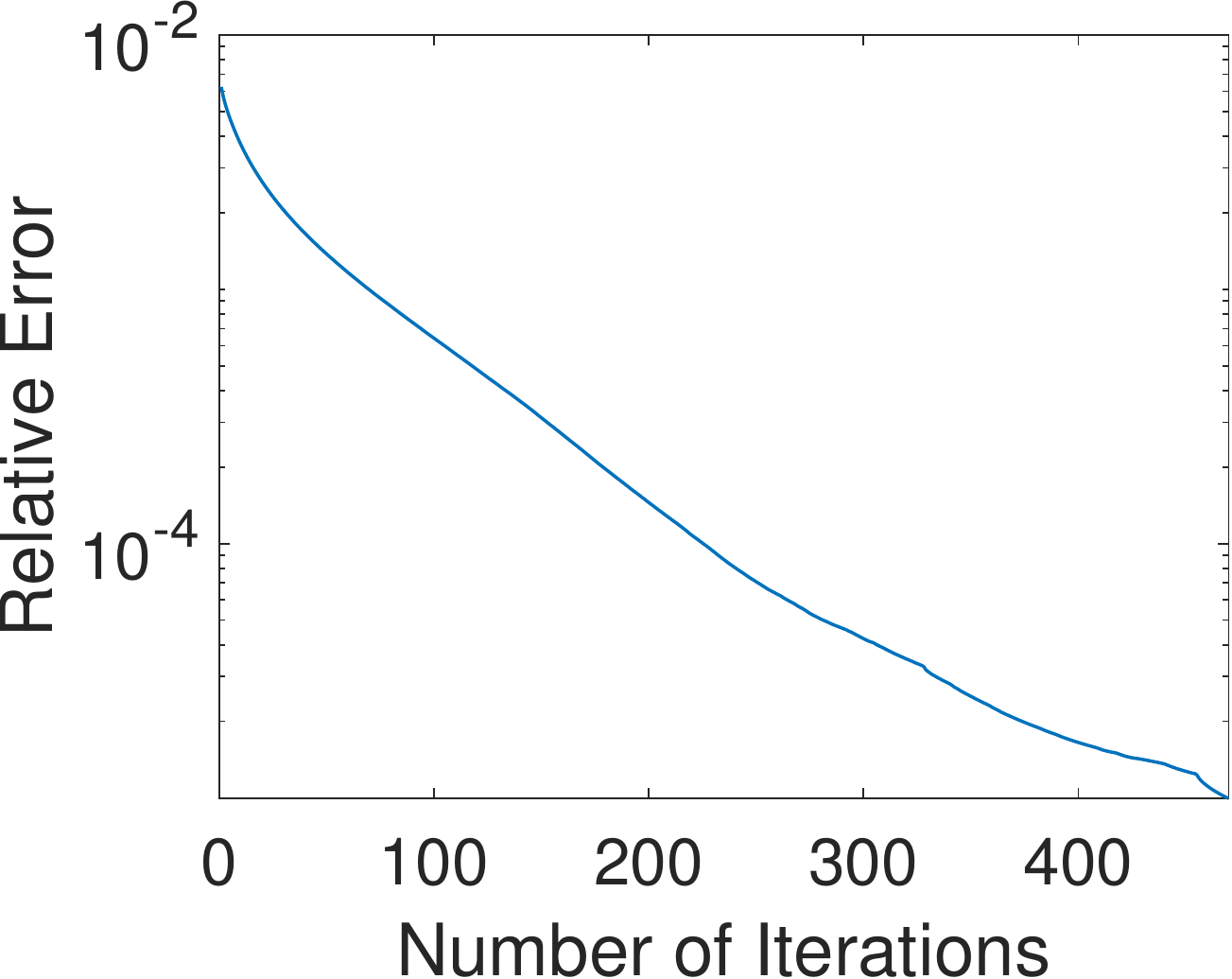}
	\end{tabular}
	\caption{(Gaussian noise with $\sigma=0.01$.) Histories of ((a) and (c)) the  energy and ((b) and (d)) the relative error of results in Figure \ref{fig.G001}. Here (a)-(b) and (c)-(d) correspond to the results in the first row and second row of Figure \ref{fig.G001}, respectively. \vshrink}
	\label{fig.G001.E}
\end{figure}

\begin{table}[th!]
	\centering
	\begin{tabular}{c|c|c|c|c||c}
		\hline
		& $\tau=0.5$& $\tau=0.1$& $\tau=0.05$ & $\tau=0.01$ & CPU time per iter. \\
		\hline
		Newton& 3.16 & 2.41 &2.07 & 2 & $1.25\times 10^{-2}$\\
		\hline
		Fixed point& 6.42 & 5.12 &5.08 & 5 &$7.01\times 10^{-3}$ \\
		\hline
	\end{tabular}
	\caption{ (Comparison of the efficiency of Newton's method and the fixed point method when computing $\bp^{n+1/4}$.) We take the image in the second row of Figure \ref{fig.G001} as an example. Column 2--5 show the averaged number of iterations used in Newton's method (\ref{eq.p1.newton}) and the fixed point method (\ref{eq.p1.fix1})--(\ref{eq.p1.fix3}) when computing $\bp^{n+1/4}$ per outer iteration. Column 6 shows the CPU time per iteration in Newton's method and the fixed point method. \vshrink}
	\label{tab.newtonfix}
\end{table}

\begin{table}[th!]
	\centering
	\begin{tabular}{c|c|c|c||c|c}
		\hline
		Image size $p$& Num. of Iter.& Total& Order & $\bp^{n+1/4}$ & $\bH^{n+1/4}$\\
		\hline
		50 & 505 & 1.32 &-- & 0.29 & 0.42 \\
		\hline
		100 & 559 & 3.96 & 1.58 & 1.17 & 0.99\\
		\hline
		150 & 630 & 8.00 & 1.73 & 2.53 & 2.20\\
		\hline
		200 & 498 & 9.67 & 0.66 & 3.53 & 2.35\\
		\hline
		250 & 477 & 13.64 & 1.54 &4.40 & 3.75\\
		\hline
		300 & 479 & 20.67 & 2.28& 6.99 & 5.31\\
		\hline
	\end{tabular}
	\caption{(Computational complexity with respect to image size.) Number of iterations and CPU time in seconds required to satisfy the stopping criterion with image size $p\times p$ for $p=50,100,150,200,250,300$. We take the image in the second row of Figure \ref{fig.G001} as an example. Column 1: Image size $p$. Column 2: Number of iterations. Column 3: Total CPU time. Column 4: Power order of total CPU time in terms of $p$. Column 5: CPU time used to compute the subiteration (\ref{eq.p1.fix1})--(\ref{eq.p1.fix3}) for $\bp^{n+1/4}$. Column 6: CPU time used to compute the subiteration (\ref{eq.alterH.1})--(\ref{eq.alterH.end}) for $\bH^{n+1/4}$.\vshrink}
	\label{tab.size}
\end{table}

\begin{figure}[th!]
	\begin{tabular}{cccc}
		(a) & (b) & (c) & (d)\\
		\includegraphics[width=0.21\textwidth]{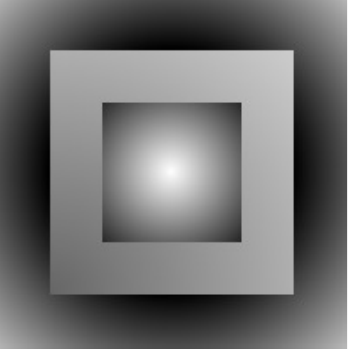}&
		\includegraphics[trim={0.8cm 0 0.3cm 0},clip,width=0.21\textwidth]{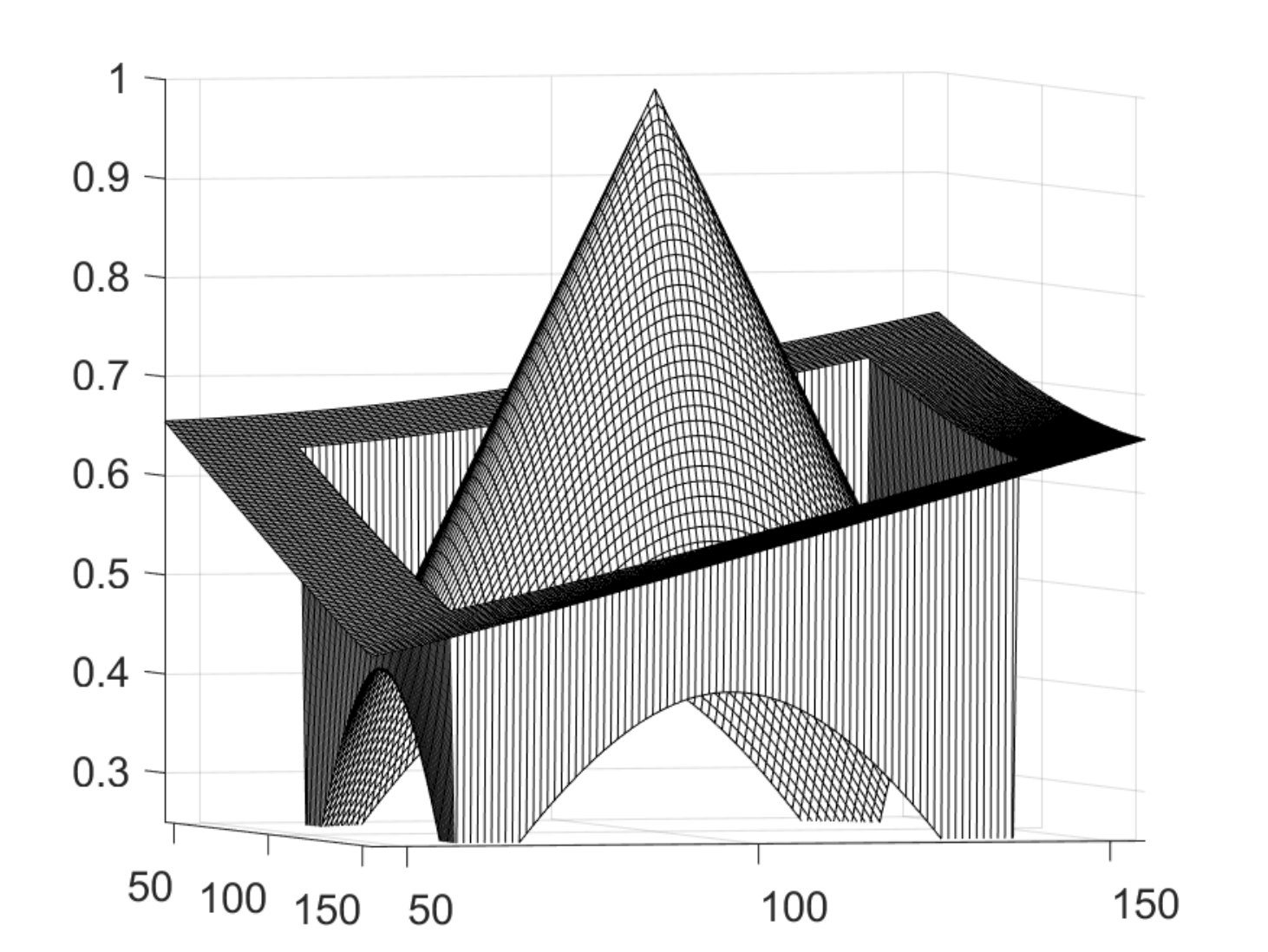}&
		\includegraphics[width=0.21\textwidth]{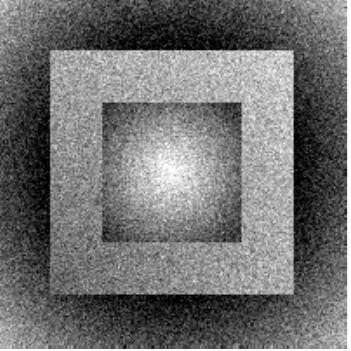}&
		\includegraphics[trim={0.8cm 0 0.3cm 0}, clip,width=0.21\textwidth]{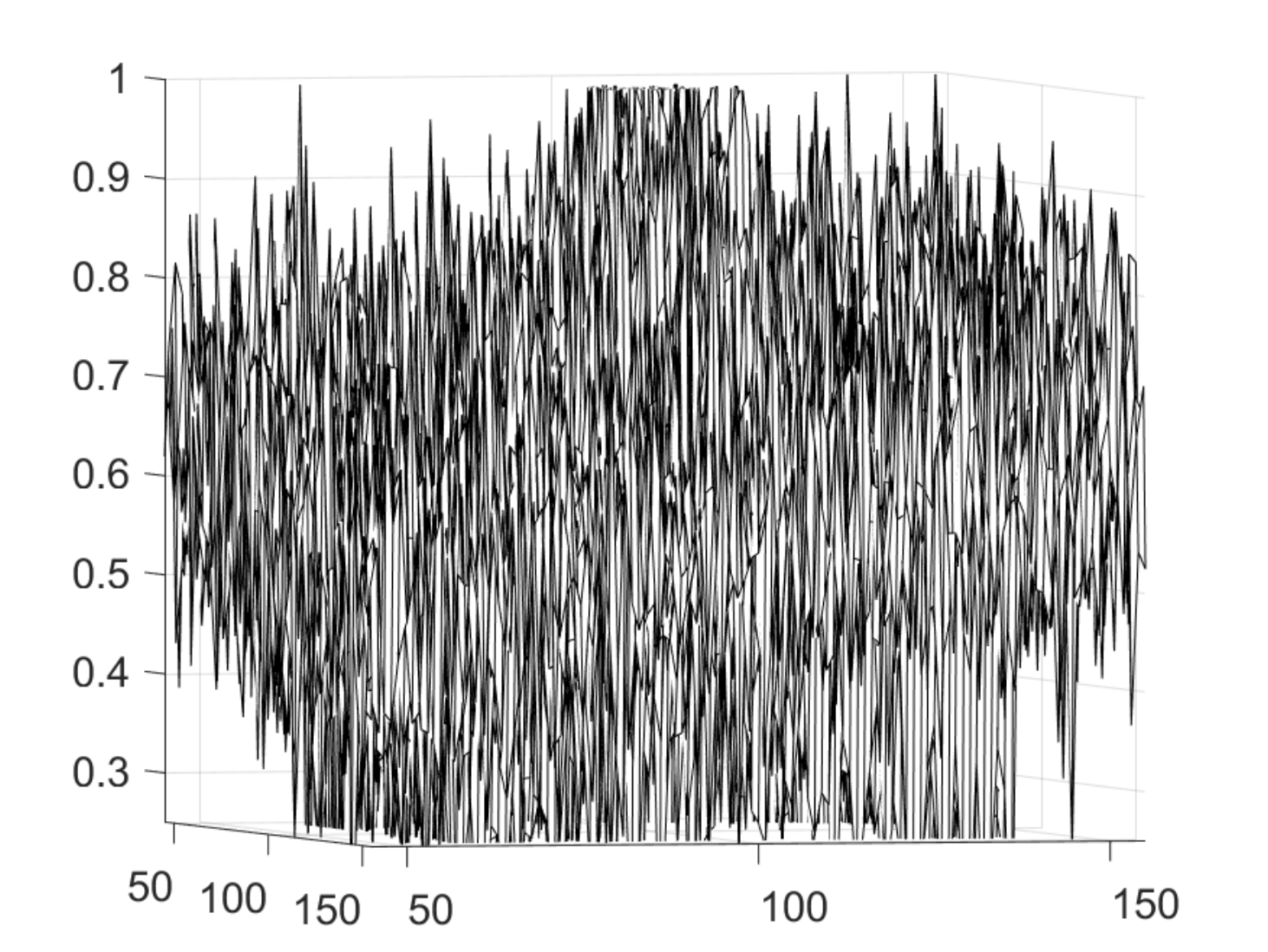}\\	
	\end{tabular}
	\begin{tabular}{ccc}
		(e) & (f) & (g) \\
		\includegraphics[width=0.3\textwidth]{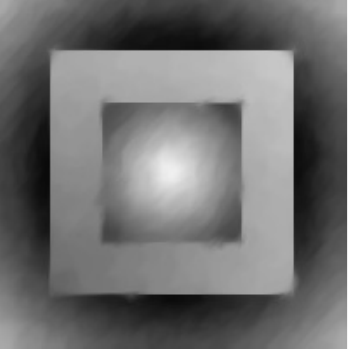}&
		\includegraphics[width=0.3\textwidth]{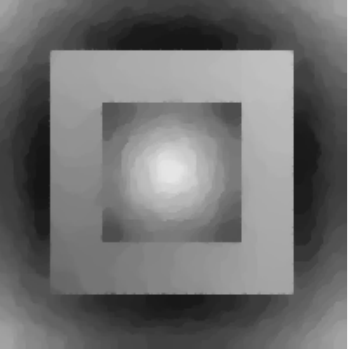}&
		\includegraphics[width=0.3\textwidth]{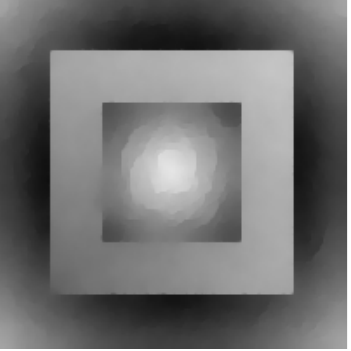}\\
		\includegraphics[width=0.3\textwidth]{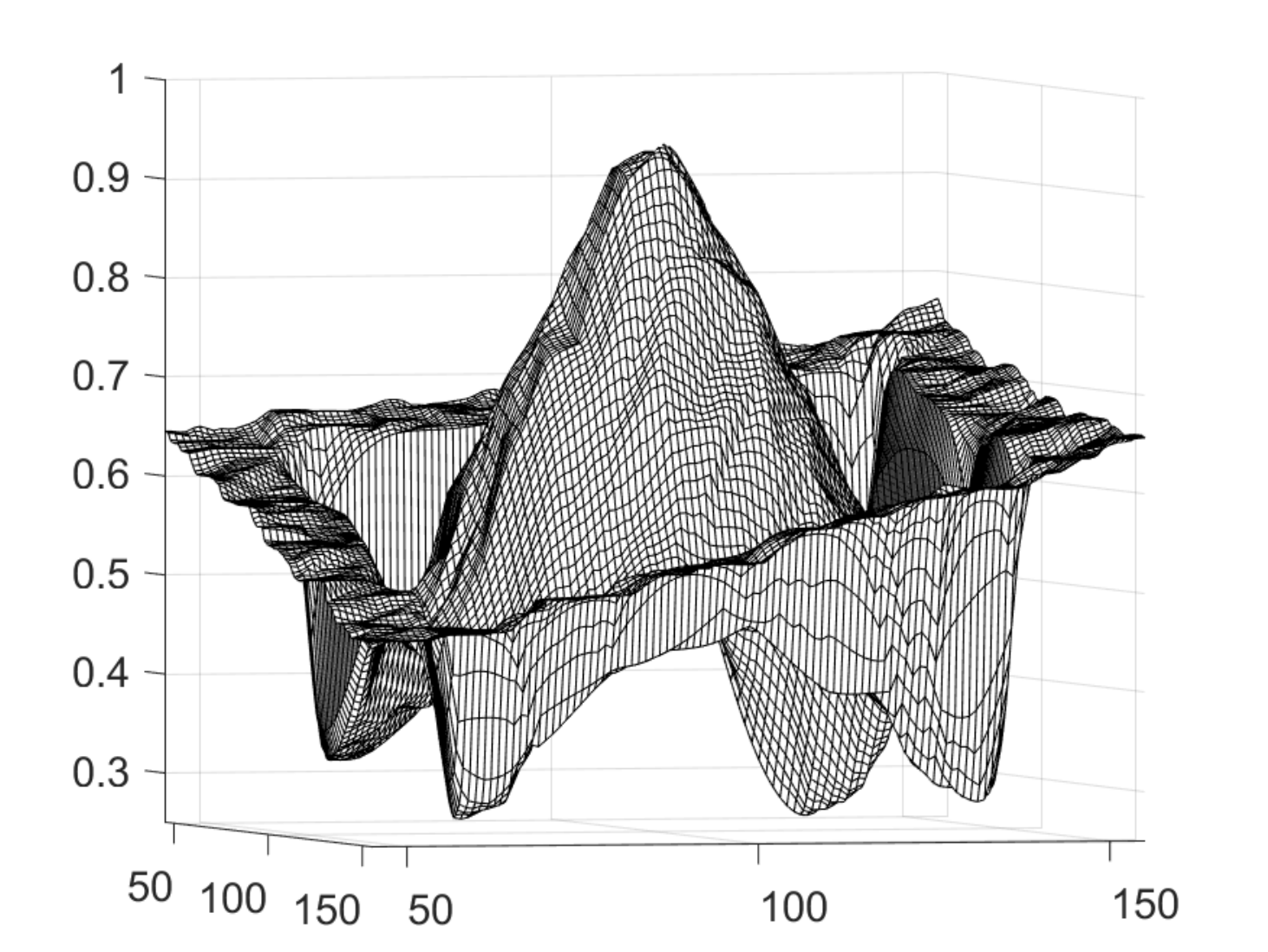}&
		\includegraphics[width=0.3\textwidth]{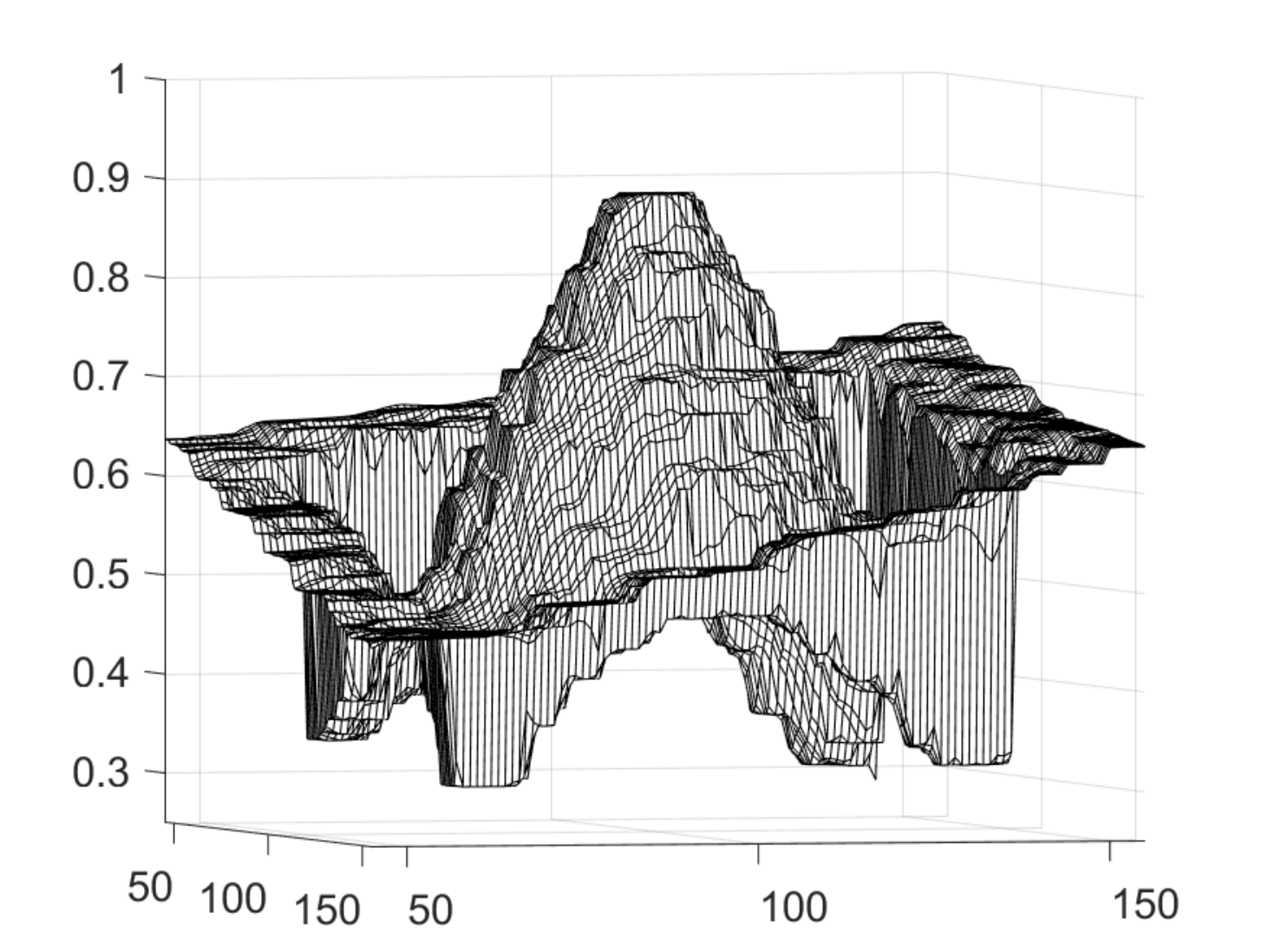}&
		\includegraphics[width=0.3\textwidth]{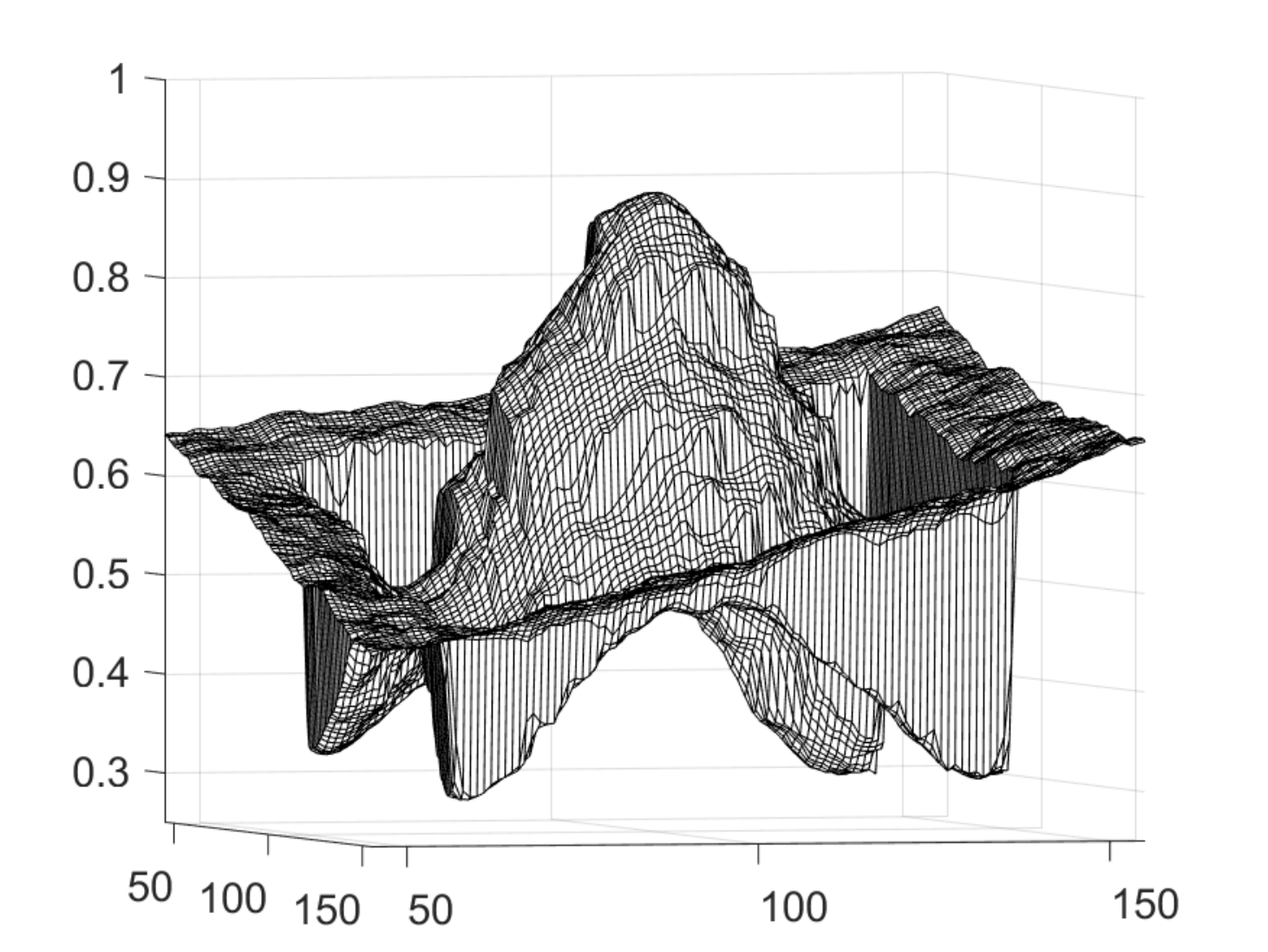}
	\end{tabular}
	\caption{(Recovering cone-shape objects.) Comparison of the proposed model with the TV model and Euler's elastica model on denoising an image whose graph contains a cone-shape object. (a) The clean image. (c) The noisy image with Gaussian noise and $\sigma=0.01$. (b) and (d) Surface plot of the central region of (a) and (c), respectively. The second and third row show the denoised images and the surface plot of their central regions by (e) the proposed model with $\alpha=0.002, \beta=40$, (f) the TV model with $\eta=0.2$, (g) Euler's elastica model with $a=b=0.15$.\vshrink
	}
	\label{fig.im.developable}
\end{figure}

\begin{figure}[th!]
	\begin{tabular}{cccc}
		(a) & (b) & (c) & (d)\\
		\includegraphics[width=0.21\textwidth]{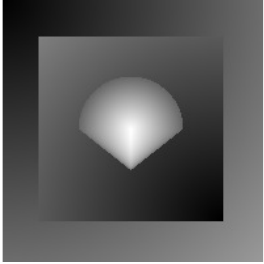}&
		\includegraphics[trim={0.8cm 0 0.3cm 0},clip,width=0.21\textwidth]{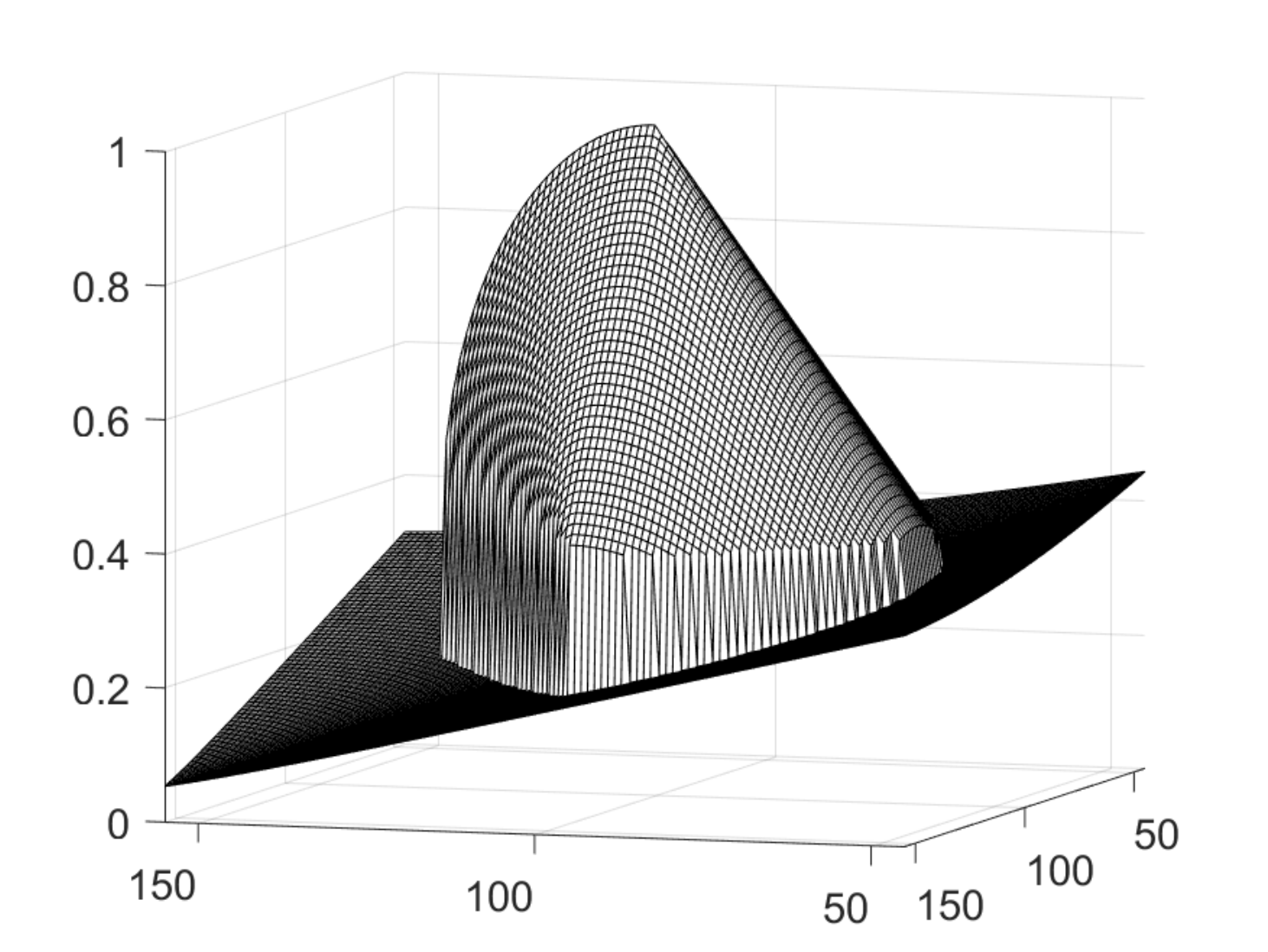}&
		\includegraphics[width=0.21\textwidth]{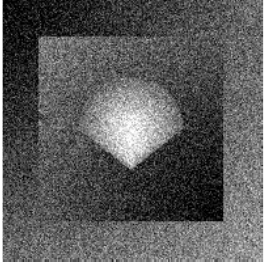}&
		\includegraphics[trim={0.8cm 0 0.3cm 0},clip,width=0.21\textwidth]{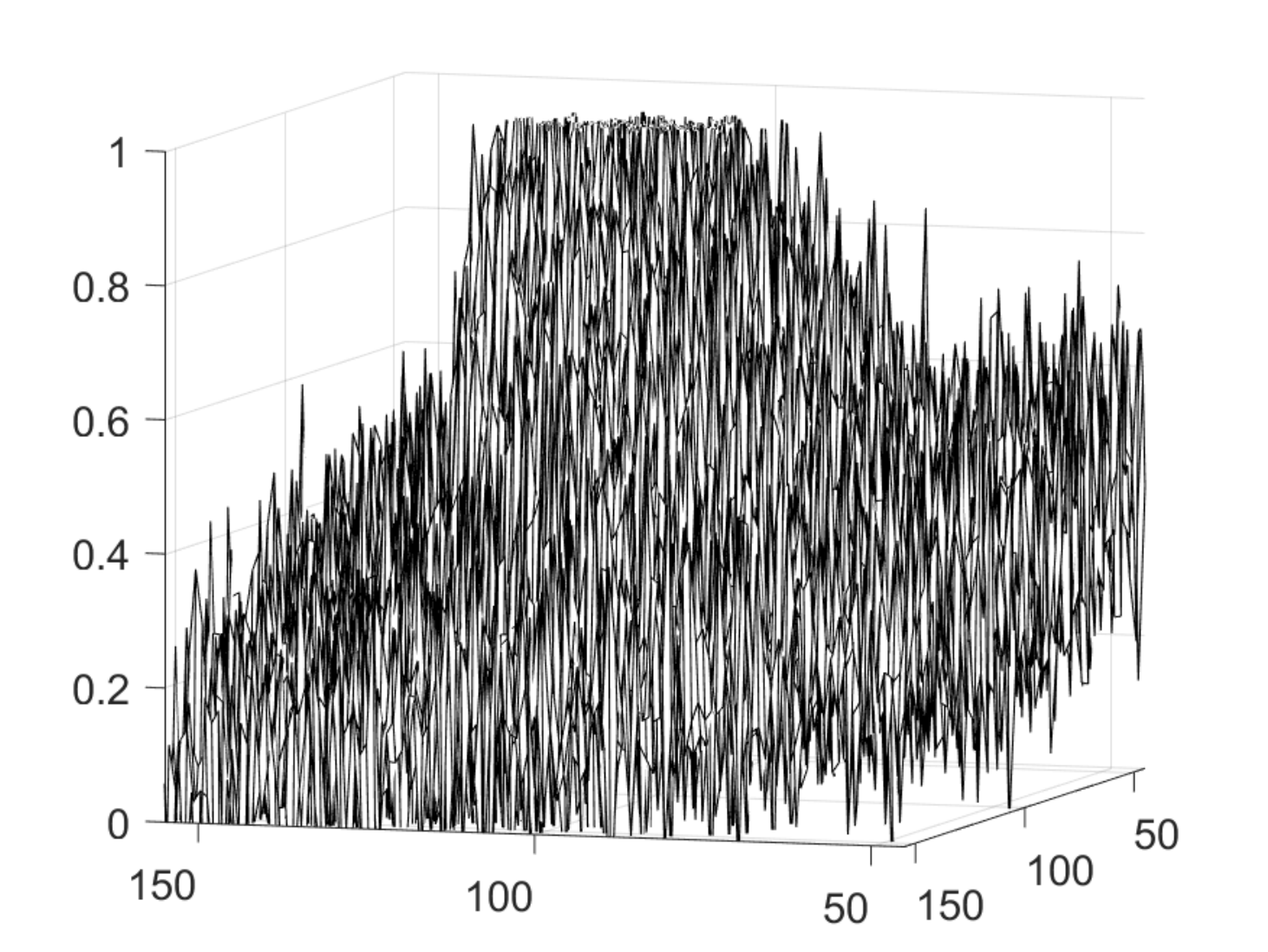}\\	
	\end{tabular}
	\begin{tabular}{ccc}
		(e) & (f) & (g) \\
		\includegraphics[width=0.3\textwidth]{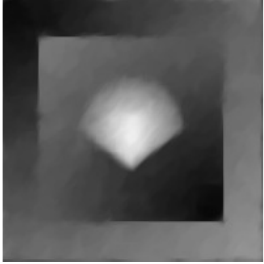}&
		\includegraphics[width=0.3\textwidth]{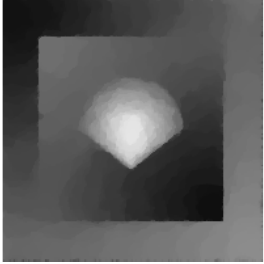}&
		\includegraphics[width=0.3\textwidth]{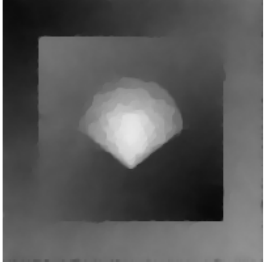}\\
		\includegraphics[width=0.3\textwidth]{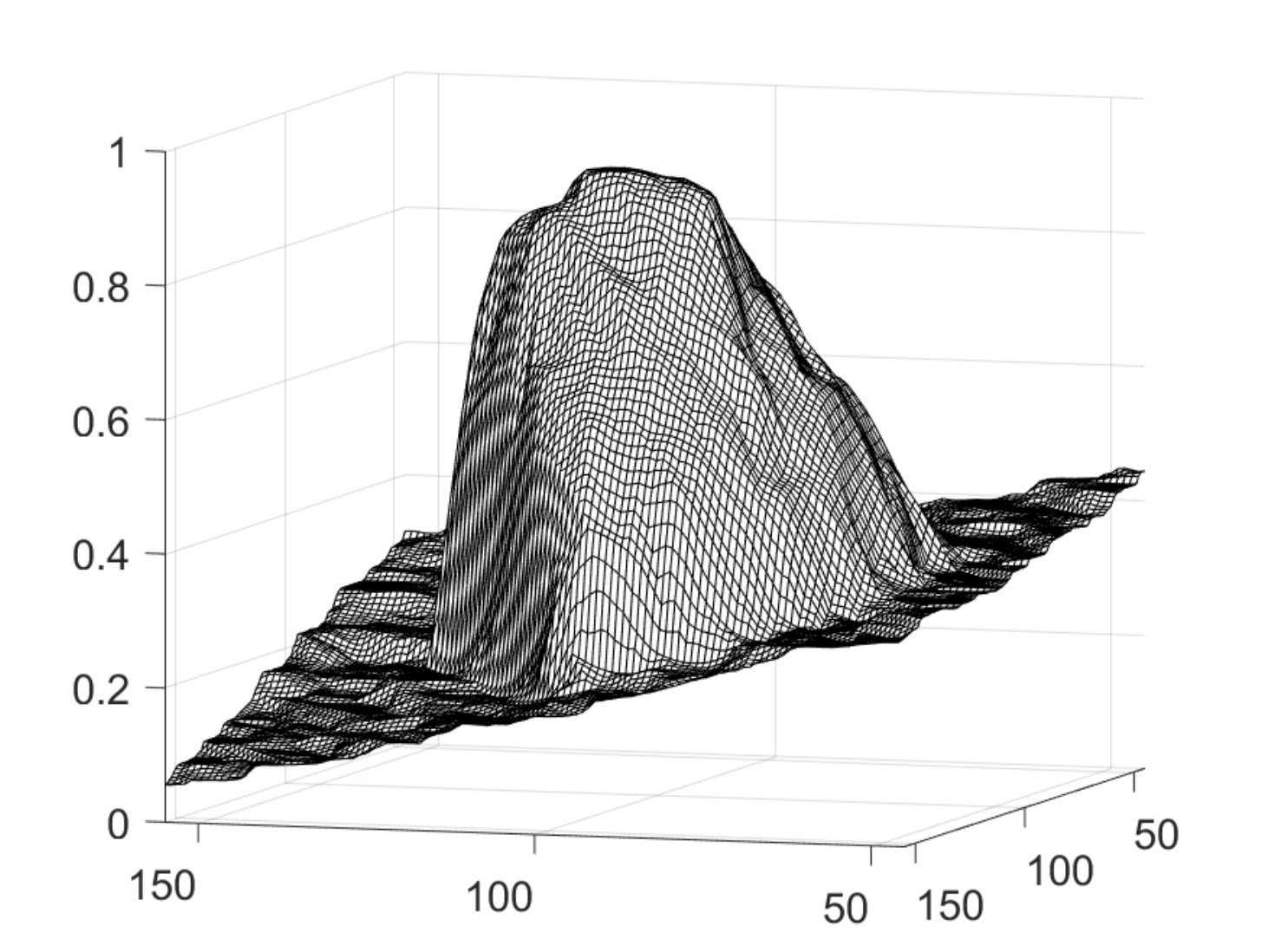}&
		\includegraphics[width=0.3\textwidth]{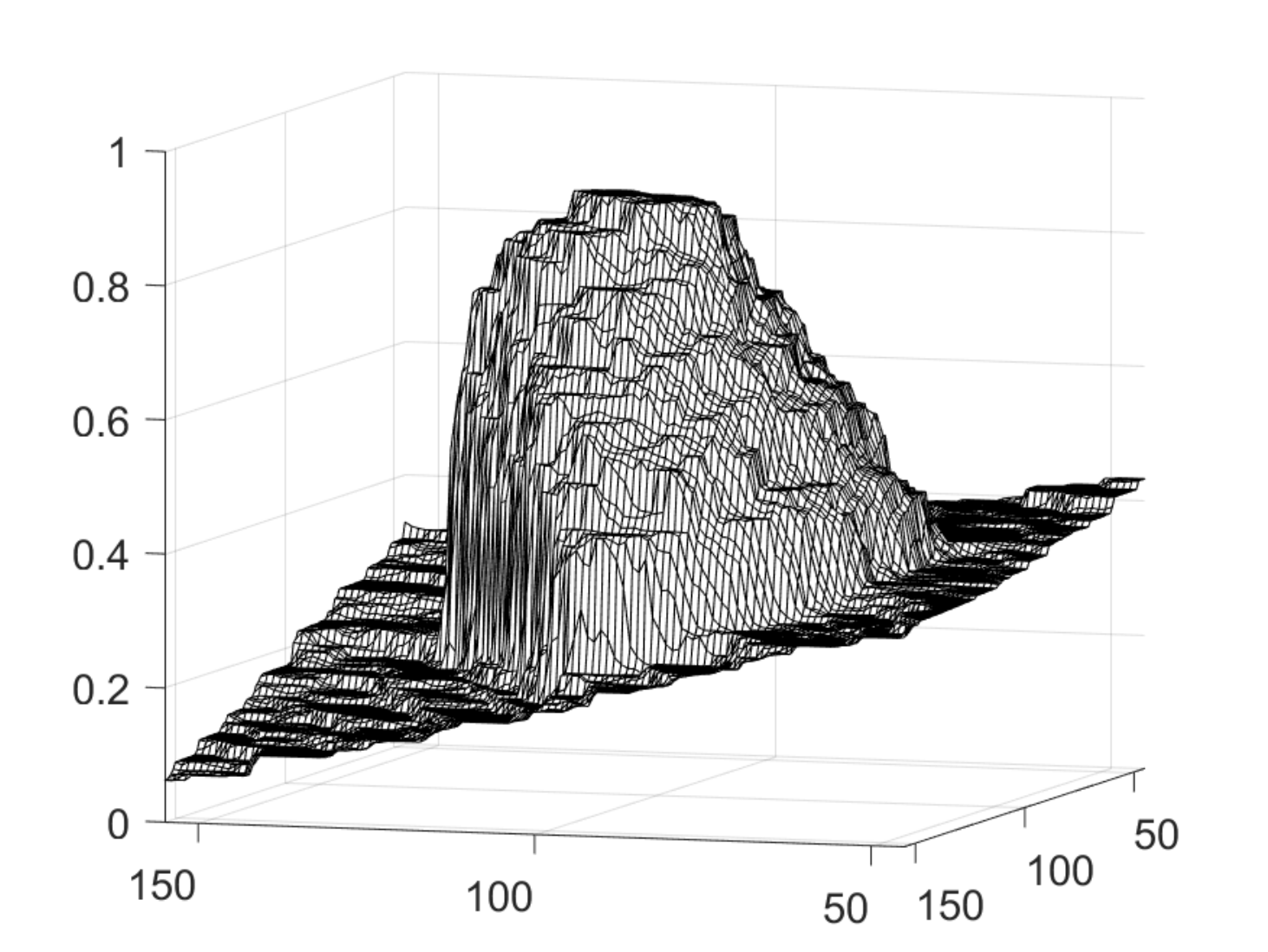}&
		\includegraphics[width=0.3\textwidth]{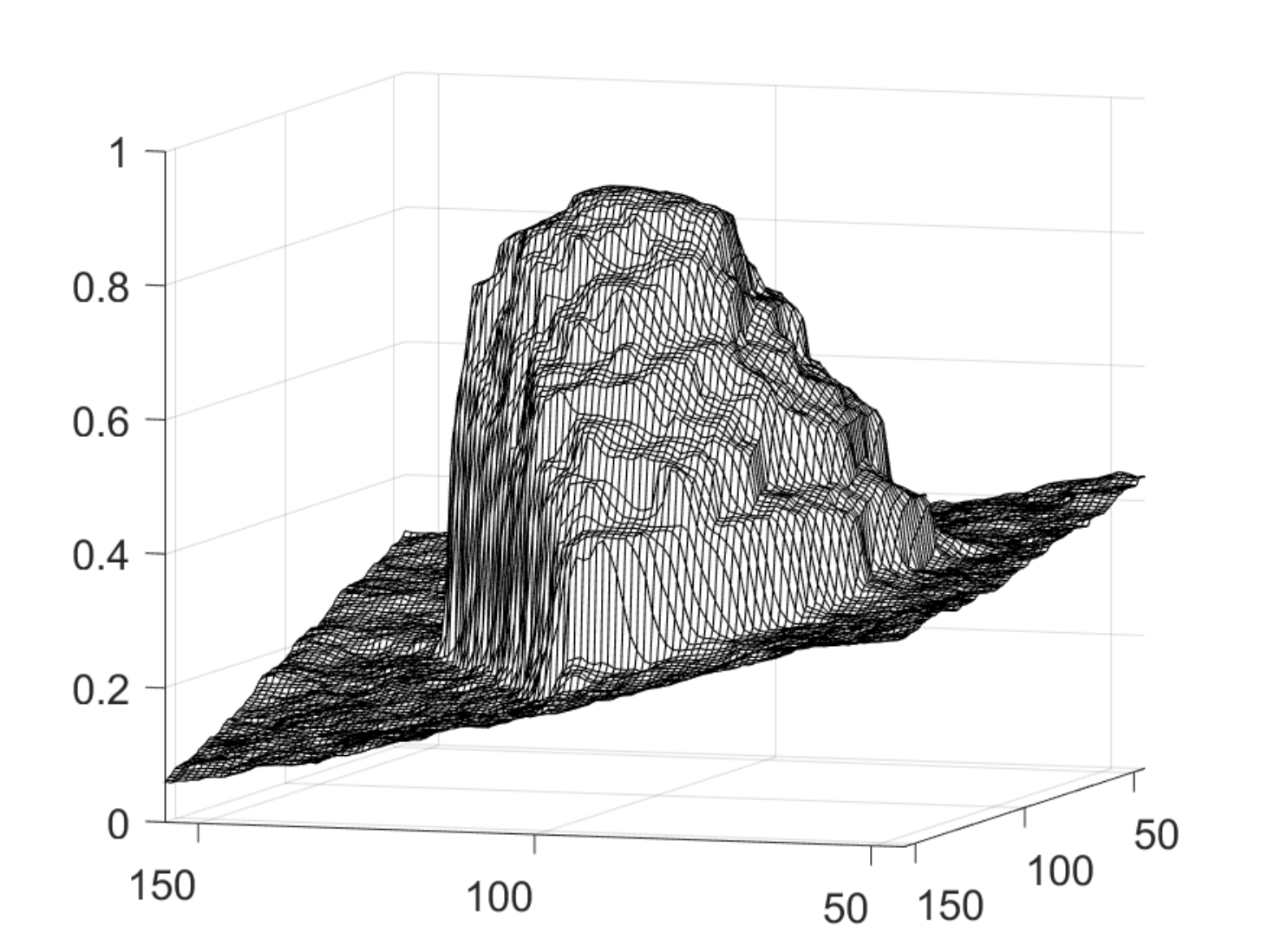}
	\end{tabular}
	\caption{(Recovering cone-shape objects.) Comparison of the proposed model with the TV model and Euler's elastica model on denoising an image whose graph contains a cone-shape object. (a) The clean image. (c) The noisy image with Gaussian noise and $\sigma=0.01$. (b) and (d) Surface plot of the central region of (a) and (c), respectively. The second and third row show the denoised images and the surface plot of their central regions by (e) the proposed model with $\alpha=0.002, \beta=40$, (f) the TV model with $\eta=0.2$, (g) Euler's elastica model with $a=b=0.15$.\vshrink
	}
	\label{fig.im.developable1}
\end{figure}

\begin{figure}[th!]
	\begin{tabular}{ccc}
		(a) & (b) & \\
		\includegraphics[width=0.28\textwidth]{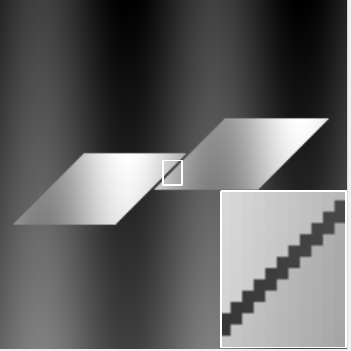}&
		\includegraphics[width=0.28\textwidth]{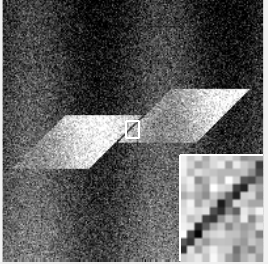}&\\
		(c) & (d) &(e)\\
		\includegraphics[width=0.28\textwidth]{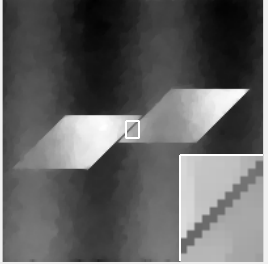}&
		\includegraphics[width=0.28\textwidth]{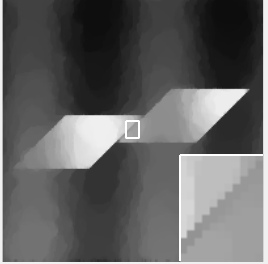}&
		\includegraphics[width=0.28\textwidth]{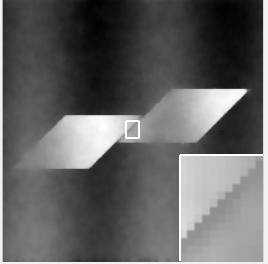}\\
		\includegraphics[width=0.28\textwidth,trim={2.5cm 1.5cm 1.5cm 3cm},clip]{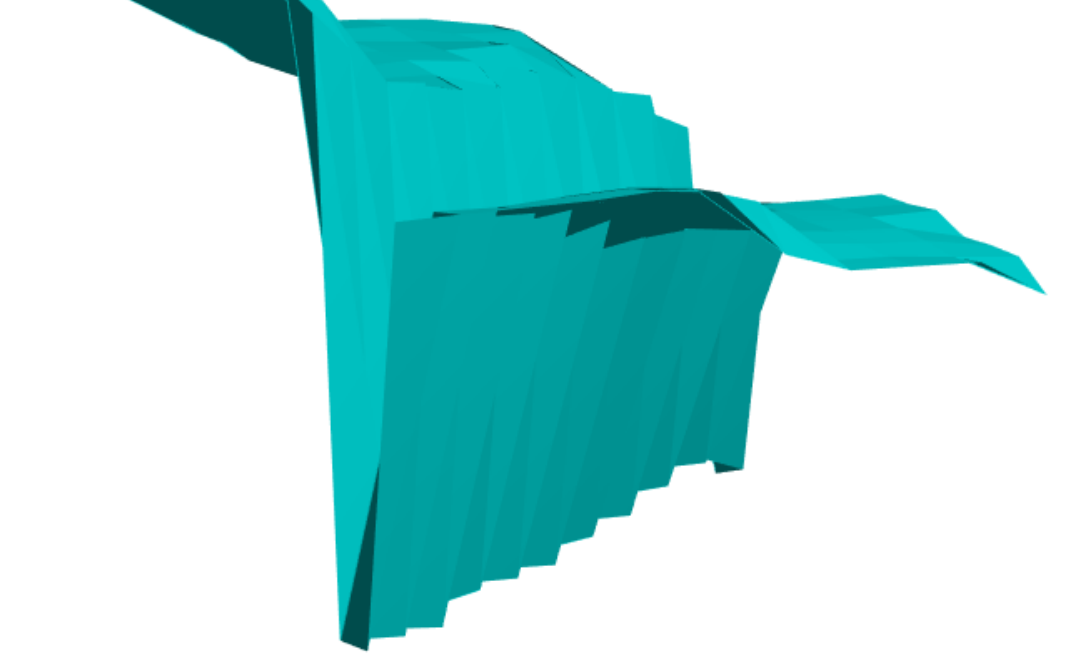}&
		\includegraphics[width=0.28\textwidth,trim={2.5cm 1.5cm 1.5cm 3cm},clip]{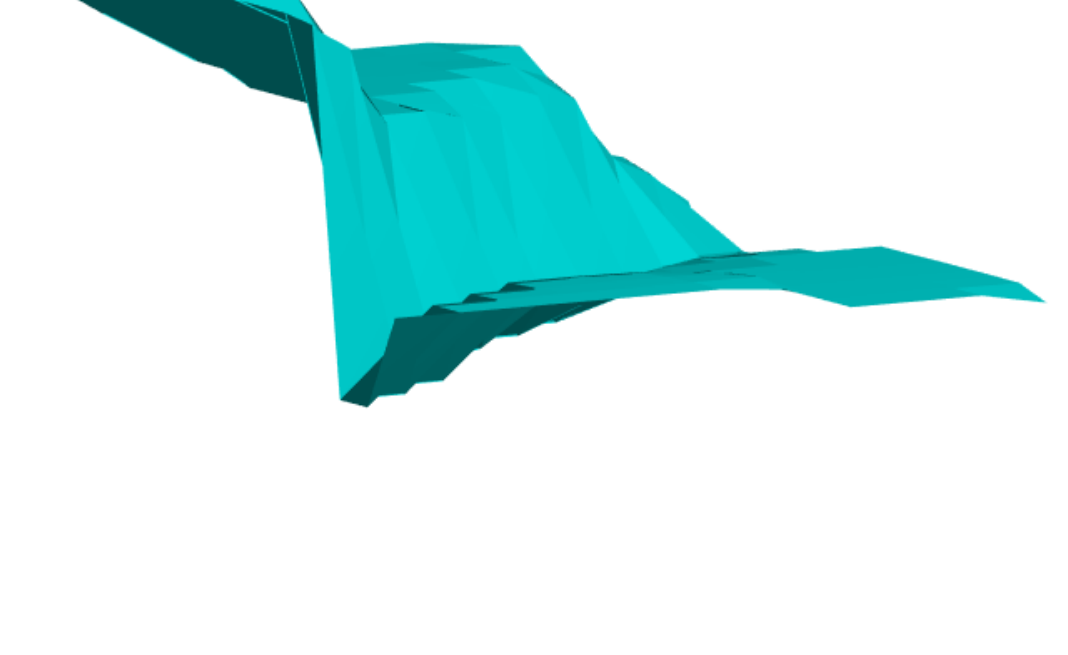}&
		\includegraphics[width=0.28\textwidth,trim={2.5cm 1.5cm 1.5cm 3cm},clip]{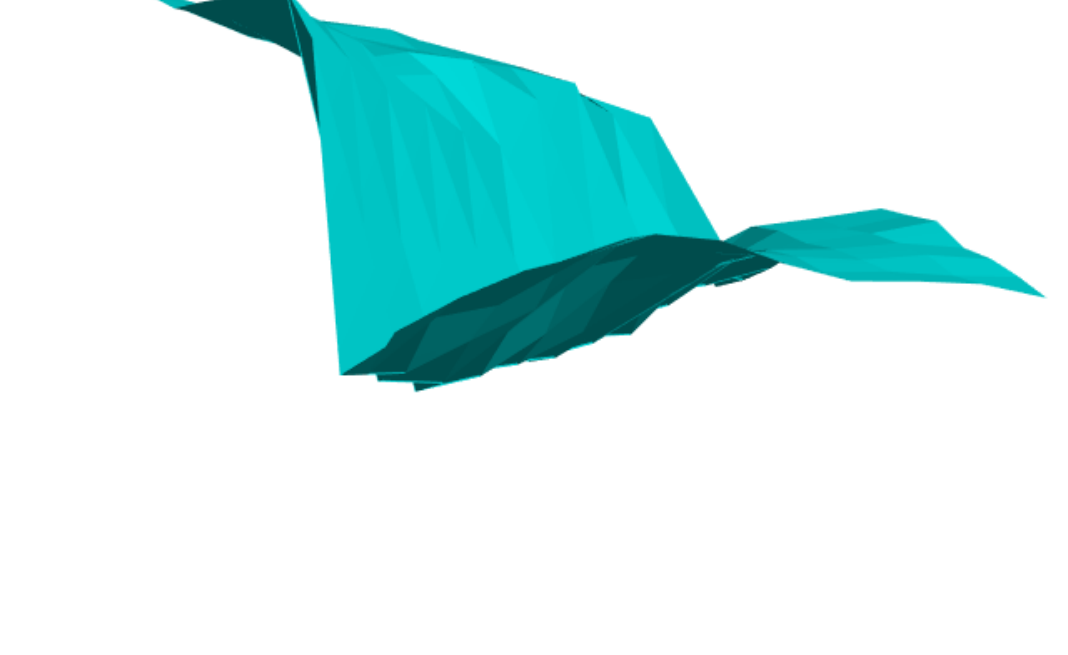}
	\end{tabular}
	\caption{(Recovering thin textures.) Comparison of the proposed model with the TV and Euler's elastica models. (a) Clean image. (b) Noisy image with Gaussian noise and $\sigma=0.015$. The second and third row show the denoised images and the surface plots of the zoomed regions by (c) the proposed model with $\alpha=0.1,\beta=1.3$, (d) the TV model with $\eta=0.25$, and (e) Euler's elastica model with $a=b=0.13$. \vshrink
	}
	\label{fig.GSP}
\end{figure}

\begin{figure}[th!]
	\begin{tabular}{ccc}
		(a) & (b) & \\
		\includegraphics[width=0.28\textwidth]{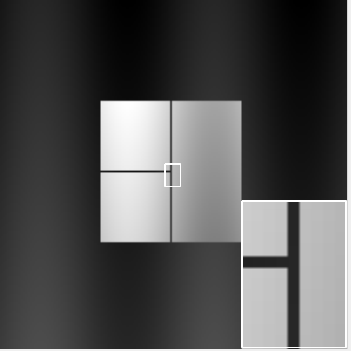}&
		\includegraphics[width=0.28\textwidth]{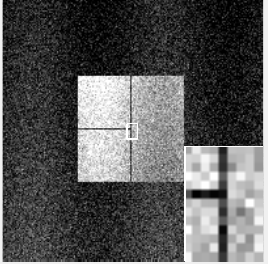}&\\
		(c) & (d) &(e)\\
		\includegraphics[width=0.28\textwidth]{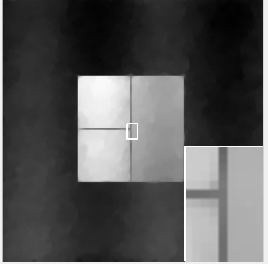}&
		\includegraphics[width=0.28\textwidth]{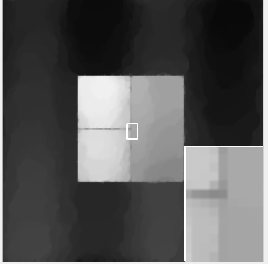}&
		\includegraphics[width=0.28\textwidth]{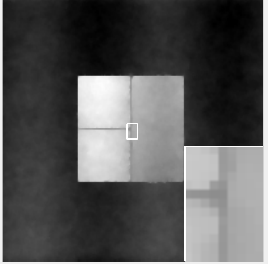}\\
		\includegraphics[width=0.28\textwidth,trim={2.5cm 1.5cm 1.5cm 2cm},clip]{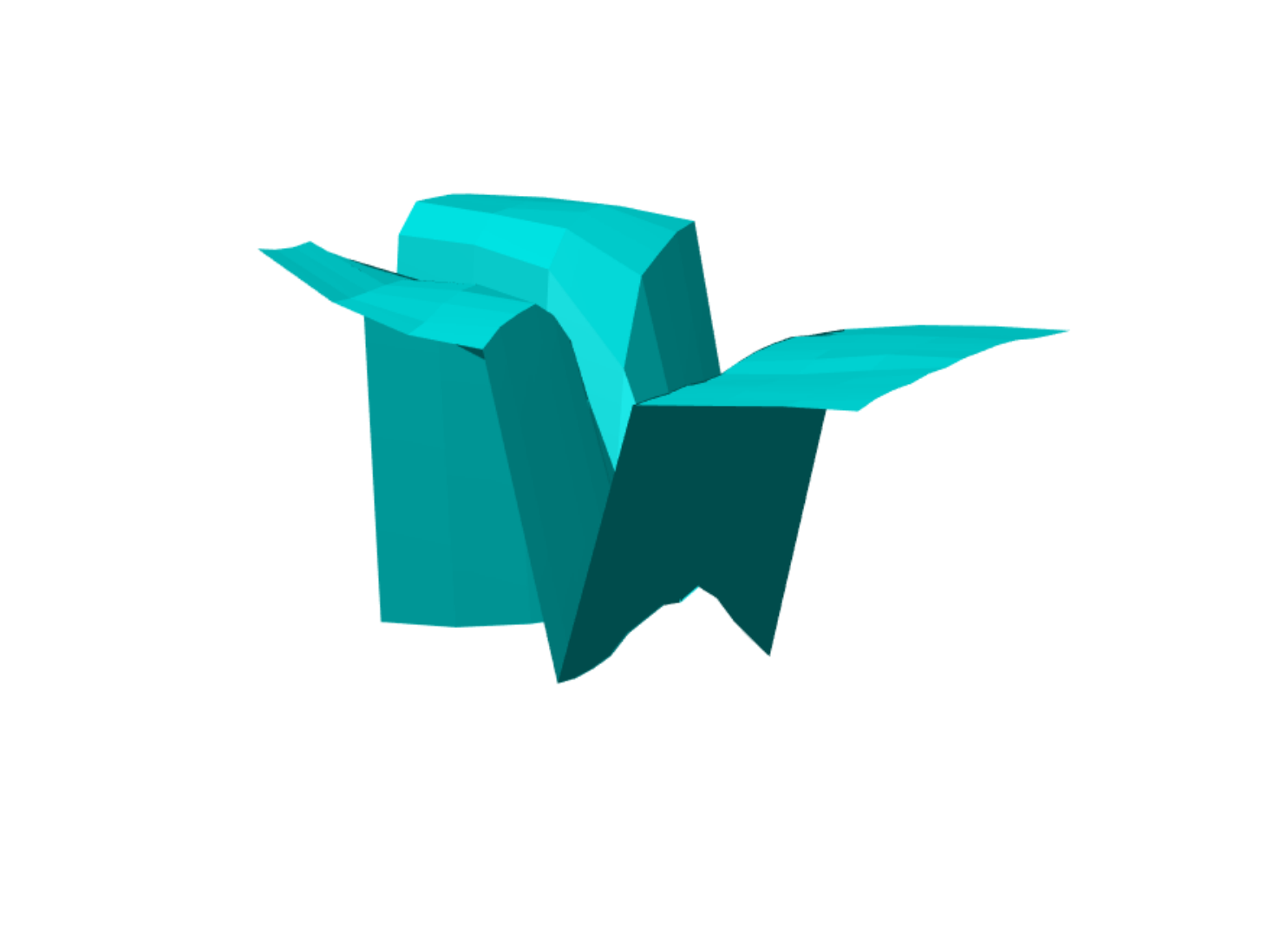}&
		\includegraphics[width=0.28\textwidth,trim={2.5cm 1.5cm 1.5cm 2cm},clip]{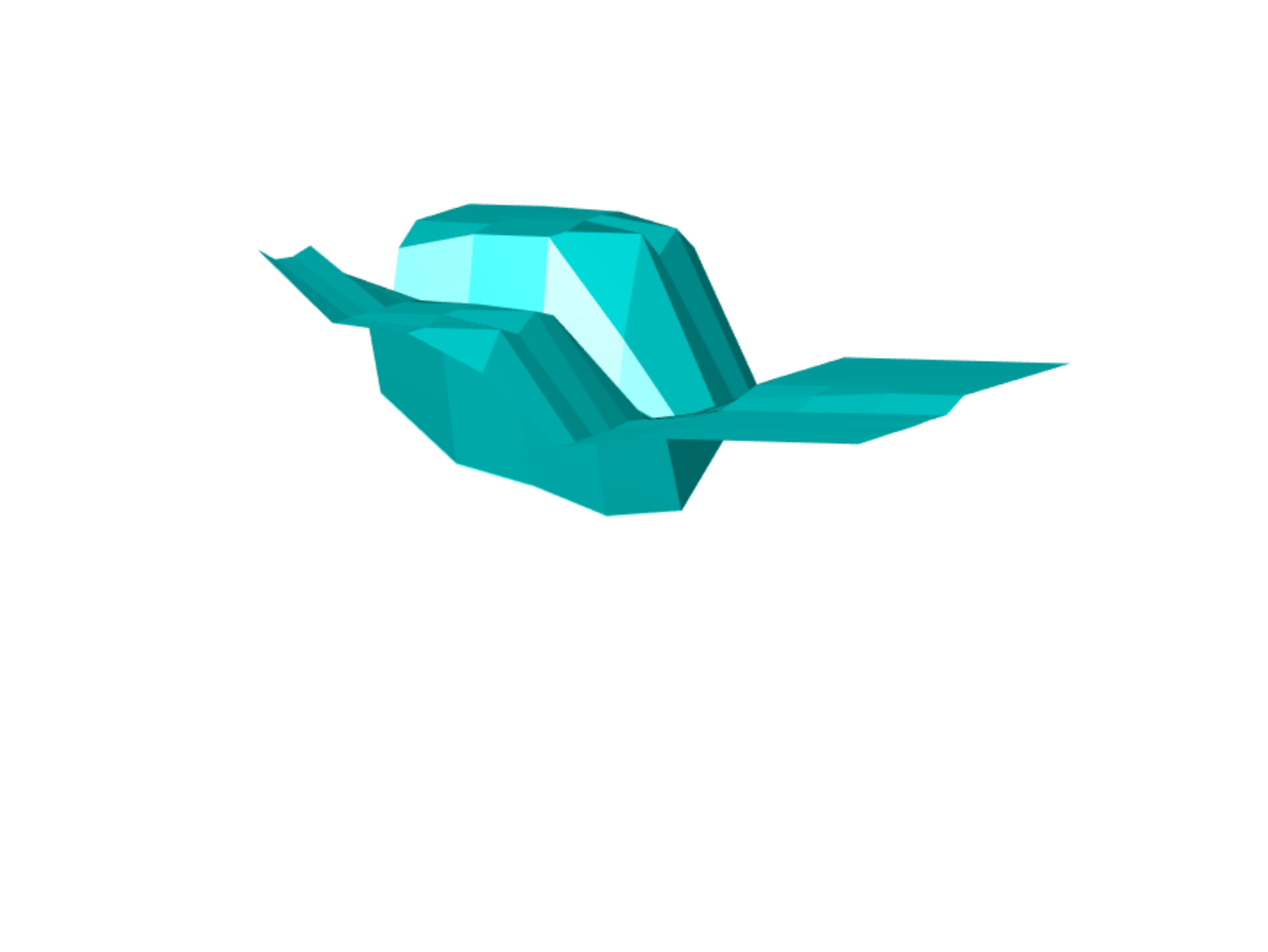}&
		\includegraphics[width=0.28\textwidth,trim={2.5cm 1.5cm 1.5cm 2cm},clip]{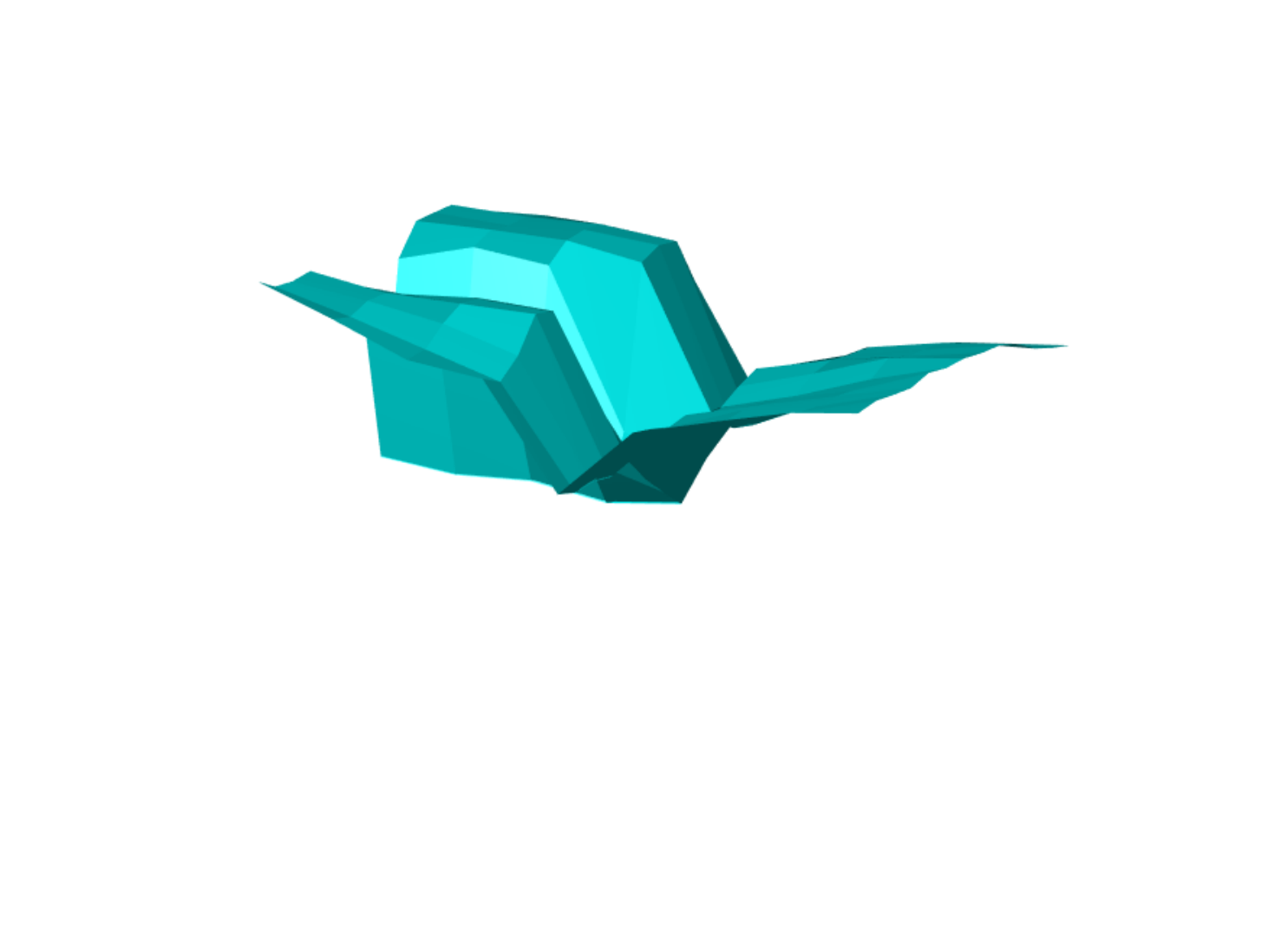}
	\end{tabular}
	\caption{(Recovering thin textures.) Comparison of the proposed model with the TV and Euler's elastica models. (a) Clean image. (b) Noisy image with Gaussian noise and $\sigma=0.015$. The second and third row show the denoised images and the surface plots of the zoomed regions by (c) the proposed model with $\alpha=0.1,\beta=1.3$, (d) the TV model with $\eta=0.25$, and (e) Euler's elastica model with $a=b=0.13$. \vshrink
	}
	\label{fig.GSSC}
\end{figure}

\begin{figure}[t!]
	\centering
	\begin{tabular}{cccc}
		(a) & (b) & (c)& (d)\\
		\includegraphics[width=0.21\textwidth]{figures/PP_SD001}&
		\includegraphics[width=0.21\textwidth]{figures/PP_SD001_GCsurf_eta06}&
		\includegraphics[width=0.21\textwidth]{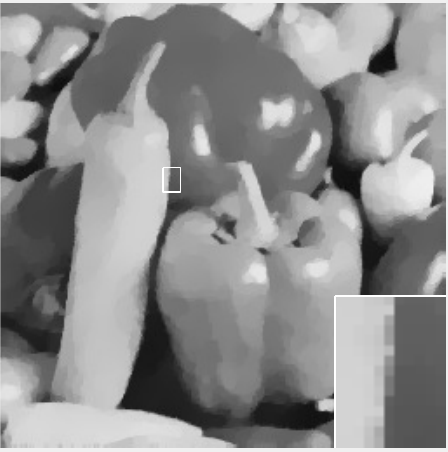}&
		\includegraphics[width=0.21\textwidth]{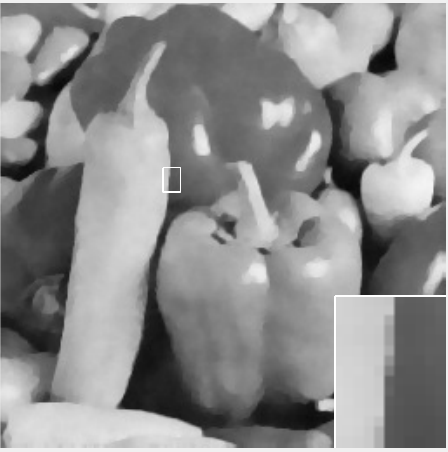}\\
		\includegraphics[width=0.21\textwidth]{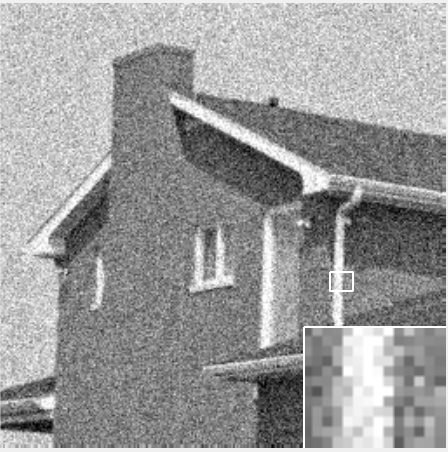}&
		\includegraphics[width=0.21\textwidth]{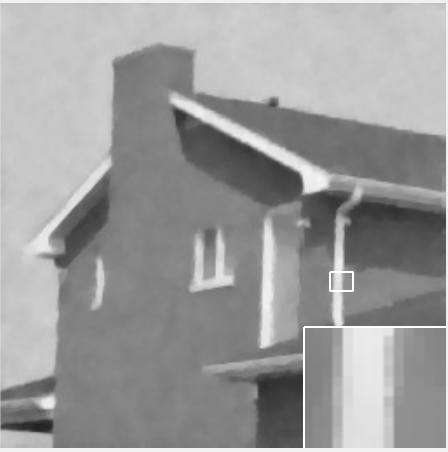}&
		\includegraphics[width=0.21\textwidth]{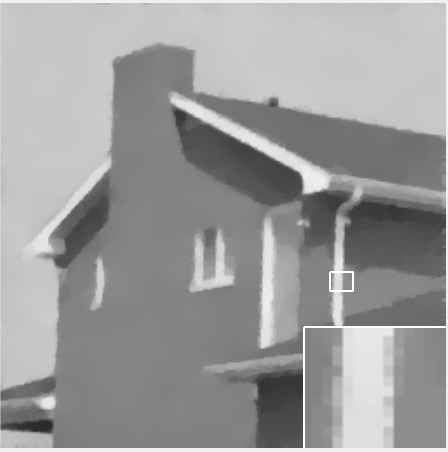} &
		\includegraphics[width=0.21\textwidth]{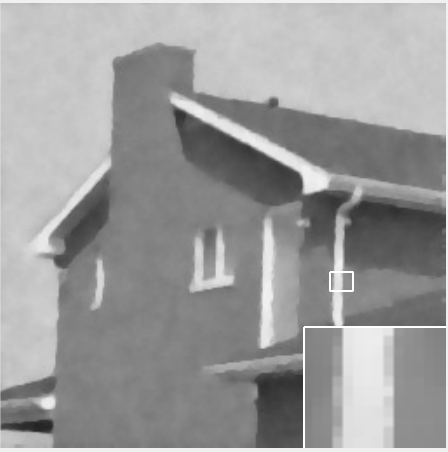}
	\end{tabular}
	\caption{(Natural image denoising) Comparison of the proposed model with the TV and Euler's elastica models. (a) Noisy images with Gaussian noise and $\sigma=0.01$. (b) Denoised images by the proposed model with $\alpha=0.2,\beta=0.6$. (c) Denoised images by the TV model with $\eta=0.15$. (d) Denoised images by Euler's elastica model with $a=0.1,b=0.1$. \vshrink}
	\label{fig.natural}
\end{figure}

\subsection{Image denoising}
We then test the proposed model on image denoising. In all of the experiments, images with pixel value varying from 0 to 1 are used. In the rest of this section, without specification, $\tau=0.05$ is used.


In the first set of experiments, Gaussian noise with variance $\sigma=0.01$ is added to the clean images. The clean images and noisy images are shown in the first and second column of Figure \ref{fig.G001}, respectively. The denoised images by the proposed model with $\alpha=0.2,\beta=0.6$ are shown in the third column. The proposed model smooths the noisy images while keeping sharp edges. The histories of the energy and the relative error $\|u^{n+1}-u^n\|_{2}/\|u^{n+1}\|_{2}$ of these examples are shown in Figure \ref{fig.G001.E}. For both examples, the energy achieves its minimum within 200 iterations. Sublinear convergence is observed for the relative error.

We then take the image in the second row of Figure \ref{fig.G001} as an example and compare the efficiency of Newton's method (\ref{eq.p1.newton}) and the fixed point method (\ref{eq.p1.fix1})--(\ref{eq.p1.fix3}) when computing $\bp^{n+1/4}$. We set $\rho=1$ in Newton's method and $\rho_1=0.8$ in the fixed point method. For various time steps, we present the average number of iteration used in Newton's method and the fixed point method per outer iteration in Table \ref{tab.newtonfix} Column 2-5. As we expected, smaller time step makes $\bp^n$ a better initial guess of $\bp^{n+1/4}$ so that less iterations are needed for both subiterations to converge. Since the computation complexity in the fixed point method is lower than that in Newton's method, each iteration of the fixed point method uses less CPU time than that of Newton's method, as shown in Table \ref{tab.newtonfix} Column 6.

We next study the computational cost of the proposed algorithm with respect to the dimension of images. We use the image in the second row of Figure \ref{fig.G001} as an example. In this test, we generate clean images with size $p\times p$ for $p=50,100,150,200,250,300$. Then  Gaussian noise with $\sigma=0.01$ is added to these images. In our experiments, we set $\alpha=0.2,\beta=0.6$. The number of iterations and CPU time used to satisfy the stopping criterion is summarized in Table \ref{tab.size}. In this experiment, the total number of iteration is not sensitive to the image size: all experiments used about 500 iterations to satisfy the stopping criterion. The total CPU time scales quadraticly with the image size. Consider that the total dimension of an image with size $p\times p$ is $p^2$, the computational cost of the proposed algorithm grows linear with the total dimension of the image. To demonstrate the efficiency of subiterations (\ref{eq.p1.fix1})--(\ref{eq.p1.fix3}) and (\ref{eq.alterH.1})--(\ref{eq.alterH.end}), we present the CPU time used by each subiteration in Column 5 and 6, respectively. In general, the sum of the CPU time used by both subiterations take up no more than $60\%$ of the total CPU time.

We then compare the proposed model with the TV and Euler's elastica model on denoising images whose graph contains cone-shape objects and are piecewise developable. For the first example, the clean image is shown in Figure \ref{fig.im.developable}(a). The graph of the central region of the image is shown in (b). The noisy image is generated by adding Gaussian noise with $\sigma=0.01$, which is shown in Figure \ref{fig.im.developable}(c) and (d). By the proposed model, the TV model and Euler's elastica model, the denoised images are shown in (e)-(g), respectively. Since the graph of the clean image is developable, we use $\alpha=2\times10^{-3},\beta=40$ in the proposed model such that the functional is dominated by the Gaussian curvature term. We use $\eta=0.2$ in the TV model and $a=b=0.15$ in Euler's elastica model. To better compare the details, the surface plot of the central region of each denoised image is presented under it. In the result of the TV model, staircase effects are observed and the peak is flattened. Compared to the TV model, Euler's elastica model has a stronger smoothing effect, while whose result has some oscillations in the denoised cone. The proposed model gives the best results which recovers a smooth surface of the cone while preserving the peak. Our second example is shown in Figure \ref{fig.im.developable1}, in which the noisy image contains heavy Gaussian noise with $\sigma=0.015$. We use $\alpha=2\times10^{-3},\beta=40$  in the proposed model, $\eta=0.2$ in the TV model and $a=b=0.15$ in Euler's elastica model. In the denoised images, staircase effects and patterned artifacts are observed in the results of the TV and Euler's elastica model. The proposed model provides smooth recovery of the central sphericon together with a better recovery of the peak.

We next demonstrate the advantage of the proposed model on preserving thin textures. We consider clean images as shown in Figure \ref{fig.GSP}(a) and Figure \ref{fig.GSSC}(a). Noisy images are generated by adding Gaussian noise with $\sigma=0.01$ in Figure \ref{fig.GSP}(b) and $\sigma=0.015$ in Figure \ref{fig.GSSC}(b). The denoised images (and the surface plot of the zoomed regions) by the proposed model, the TV model and Euler's elastica model are shown in (c)-(e) in both figures, respectively. In the results by the TV model and Euler's elastica model, the gaps are smoothed a lot. The proposed model provides the best results which preserve the thin gaps well.

We then compare these three models on two natural images: 'Peppers' and 'House'. Gaussian noise with $\sigma=0.01$ is added to these clean images. The noisy images and denoised images by the three models are shown in Figure \ref{fig.natural}. We use $\alpha=0.2,\beta=0.6$ in the proposed model, $\eta=0.15$ in the TV model and $a=b=0.1$ in Euler's elastica model.
These results are comparable while there are some oscillations around edges in the results by the TV model.
The comparison of the PSNR and SSIM \cite{wang2004image} values of all images in Figure \ref{fig.natural} are shown in Table \ref{tab.natural}. The proposed model provides results with the largest PSNR and SSIM values. To compare the efficiency, in Table \ref{tab.cpu}, we show the number of iterations and CPU time used to get results in Figure \ref{fig.natural}. Since the TV model is the simplest model, results by it have the least CPU time. Compared to the algorithm of Euler's elastica model in \cite{deng2019new}, the proposed algorithm needs approximately half of its number of iterations to meet the stopping criterion. Note that the proposed model is more complicated than Euler's elastica model due to the determination of the Hessian matrix. The proposed algorithm needs more time at each iteration. The overall CPU time of the proposed algorithm is comparable to that of the algorithm proposed in \cite{deng2019new}.

\begin{table}[t!]
	\centering
	(a)\\
	\begin{tabular}{c|c|c|c|c}
		\hline
		& Noisy &Proposed model & TV & Euler's elastica \\
		\hline
		Peppers & 19.99 & {\bf 27.30} & 26.70 & 27.27\\
		\hline
		House & 19.99 & {\bf 28.91} & 28.37 & 27.78\\
		\hline
	\end{tabular}
	\vspace{0.2cm}\\
	(b)\\
	\begin{tabular}{c|c|c|c|c}
		\hline
		& Noisy &Proposed model & TV & Euler's elastica \\
		\hline
		Peppers & 0.3763 & {\bf 0.8402} & 0.8198 & 0.8363\\
		\hline
		House & 0.2876 & {\bf 0.8146} & 0.8059 & 0.8097\\
		\hline
	\end{tabular}
	\caption{\label{tab.natural}(Gaussian noise with $\sigma=0.01$.) Comparison of (a) the  PSNR and (b) the SSIM value of images in Figure \ref{fig.natural}. The largest value for each image is marked in bold. \vshrink}
	
\end{table}

\begin{table}[t!]
	\centering
	\begin{tabular}{c|c|c|c}
		\hline
		&Proposed model & TV & Euler's elastica \\
		\hline
		Peppers $(256\times 256)$ & 641 (44.39) & 771 (6.25) & 1395 (52.92)\\
		\hline
		House $(256\times 256)$ & 556 (38.99) & 702 (5.86) & 1028 (39.98)\\
		\hline
	\end{tabular}
	\caption{\label{tab.cpu} (Gaussian noise with $\sigma=0.01$.) Comparison of the number of iterations (CPU time in seconds) used to get results in Figure \ref{fig.natural}. \vshrink}
	
\end{table}

\begin{figure}[t!]
	\centering
	\begin{tabular}{cccc}
		(a) & (b) & (c)& (d)\\
		\includegraphics[width=0.21\textwidth]{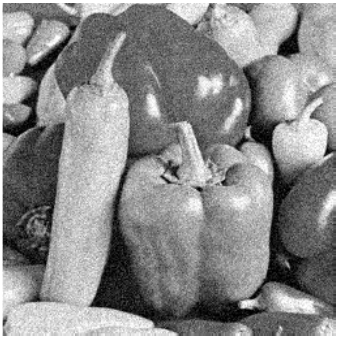}&
		\includegraphics[width=0.21\textwidth]{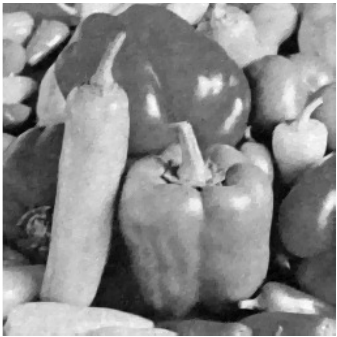}&
		\includegraphics[width=0.21\textwidth]{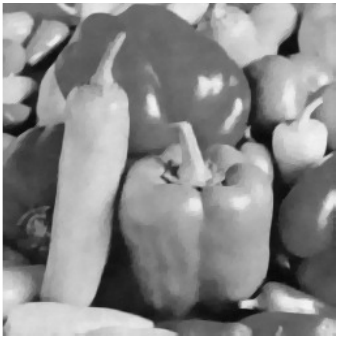} &
		\includegraphics[width=0.21\textwidth]{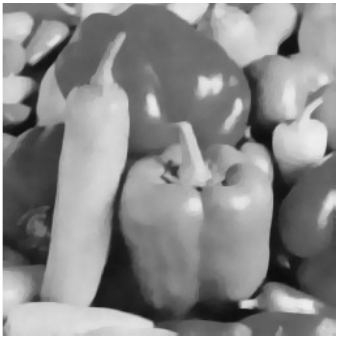}
	\end{tabular}
	\caption{(Effect of $\beta$.) (a) Noisy image with Gaussian noise and $\sigma=0.005$. (b) Denoised image with $\beta=0.2$. (c) Denoised image with $\beta=0.4$. (d) Denoised image with $\beta=0.6$. We fix $\alpha=0.2$. \vshrink}
	\label{fig.beta}
\end{figure}

\begin{figure}[t!]
	\begin{tabular}{cccc}
		(a) & (b) & (c)& (d)\\
		\includegraphics[width=0.21\textwidth]{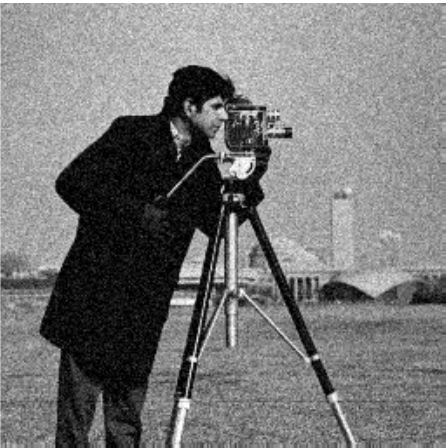}&
		\includegraphics[width=0.21\textwidth]{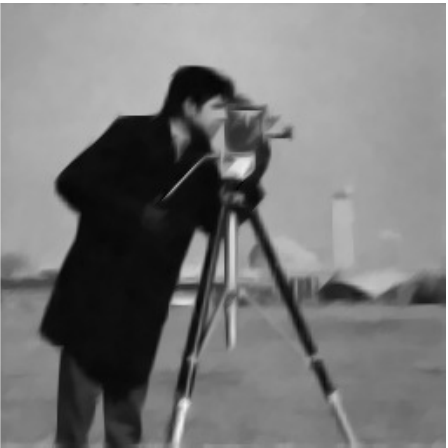}&
		\includegraphics[width=0.21\textwidth]{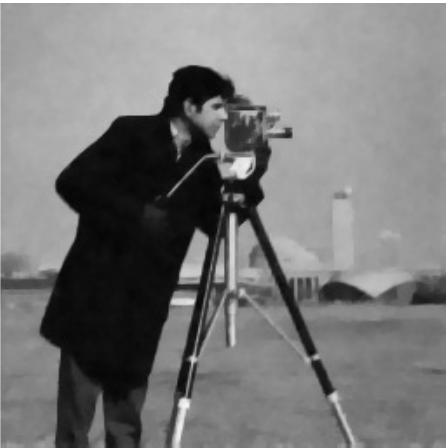} &
		\includegraphics[width=0.21\textwidth]{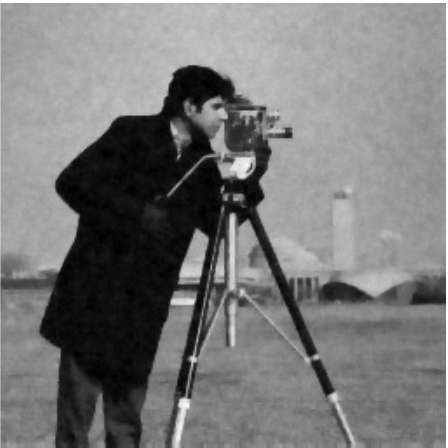}
	\end{tabular}
	\caption{(Effect of $\alpha$.) (a) Noisy image with Gaussian noise and $\sigma=0.005$. (b) Denoised image with $\alpha=0.01$. (c) Denoised image with $\alpha=0.2$. (d) Denoised image with $\alpha=0.8$. The Gaussian curvature term has a larger weight with a smaller $\alpha$. We fix $\alpha\beta=0.08$. \vshrink}
	\label{fig.alpha}
\end{figure}

\subsection{Effects of parameters}
We explore the effects of the parameters in the proposed model (\ref{eq.model}). In (\ref{eq.model}), $\beta$ controls the weight of the fidelity term. We expect larger $\beta$ makes the result smoother, which is verified by the following experiment. We add Gaussian noise with $\sigma=0.005$ to the clean image and fix $\alpha=0.2$. The noisy image and denoised images with $\beta=0.2,0.4$ and $0.6$ are shown in Figure \ref{fig.beta}.

A more interesting study is the effects of $\alpha$, which balances the weight between the first order term (the TV term) and the second order term (Gaussian curvature term). In this experiment, the clean image is perturbed by Gaussian noise with $\sigma=0.005$. We test $\alpha$ among $0.01,0.2$ and $0.8$. If we fix $\beta=0.04$ and when $\alpha$ is too small (like 0.005), the regularization is not enough. To resolve this problem, we fix $\alpha\eta=0.008$. Under this setting, increasing $\alpha$ amounts to decreasing the weight of the Gaussian curvature term. The noisy and denoised images are shown in Figure \ref{fig.alpha}.
When $\alpha$ is too large (like 0.8), the regularization is dominated by the TV term. The regularization effect is not enough under this choice, as shown in Figure \ref{fig.alpha}(d). As we decrease $\alpha$, i.e., the weight of the Gaussian curvature term increases, the denoised image has a stronger smoothing effect while edges are kept well, as shown in Figure \ref{fig.alpha}(b) and (c). This experiment shows that Gaussian curvature smooths the flat region of an image while keeping sharp edges.

\section{Conclusion}\label{sec.conclusion}
We propose an efficient operator-splitting method to optimize a general Gaussian curvature model. The optimization problem is associated with an initial-value problem whose steady state solution solves the optimization problem. Such an initial-value problem is time-discretized by the operator-splitting method. In our splitting scheme, each sub-problem has either a closed-form solution or can be solved efficiently. The efficiency and performance of the proposed method is demonstrated on systematic numerical experiments on surface smoothing and image denoising. The proposed model has excellent performance in smoothing developable surfaces and images, and has advantages in recovering thin textures of images.

\section*{Acknowledgement}
The authors would like to sincerely thank Prof. Ron Kimmel at Technion for invaluable discussions on geometric regularizers.

\bibliographystyle{abbrv}
\bibliography{ref}
\end{document}